%% file: neurips_2026.tex
\documentclass{article}

\usepackage[preprint]{neurips_2026}

\usepackage[utf8]{inputenc} 
\usepackage[T1]{fontenc}    
\usepackage[table]{xcolor}  
\usepackage{hyperref}       
\usepackage{url}            
\usepackage{booktabs}       
\usepackage{amsmath}        
\usepackage{amsthm}         
\usepackage{amsfonts}       
\usepackage{nicefrac}       
\usepackage{microtype}      
\usepackage{amssymb}
\usepackage{makecell}
\usepackage{colortbl}

\usepackage{caption}

\usepackage{graphicx}
\usepackage{algorithm}
\usepackage{algorithmic}
\theoremstyle{plain}
\newtheorem{theorem}{Theorem}[section]

\theoremstyle{definition}
\newtheorem{definition}[theorem]{Definition}

\theoremstyle{remark}

\hypersetup{
	colorlinks=true,    
	linkcolor=red,      
	citecolor=blue,    
	filecolor=magenta,  
	urlcolor=magenta    
}

\title{Proximal Action Replacement for Behavior Cloning Actor-Critic in Offline Reinforcement Learning}

\author{%
  Jinzong Dong$^{1,2}$ \quad Wei Huang$^{2}$ \quad Jianshu Zhang$^{2,3}$ \quad Zhuo Chen$^{2,3}$ \\
  \bf Xinzhe Yuan$^{2}$ \quad Qinying Gu$^{2,\dagger}$ \quad Zhaohui Jiang$^{1,2}$ \quad Nanyang Ye$^{2,3,\dagger}$ \\
  \\
  $^{1}$School of Automation, Central South University, Changsha, China \\
  $^{2}$Shanghai AI Laboratory, Shanghai, China \\
  $^{3}$Shanghai Jiao Tong University, Shanghai, China \\
  \\
$^{\dagger}$\begin{tabular}[t]{@{}l@{}}
  Correspondence to: Qinying Gu \texttt{<guqinying@pjlab.org.cn>}, \\
  \phantom{Correspondence to: }Zhaohui Jiang \texttt{<jzh0903@csu.edu.cn>}, \\
  \phantom{Correspondence to: }Nanyang Ye \texttt{<ynylincolncam@gmail.com>}
\end{tabular}
}

\begin{document}

\maketitle
\begin{abstract}
Offline reinforcement learning (RL), which optimizes policies using a previously collected static dataset, is an important branch of RL. A popular and promising approach is to regularize actor-critic methods with behavior cloning (BC), which quickly yields realistic policies and mitigates bias from out-of-distribution actions, but it can impose an often-overlooked performance ceiling: when dataset actions are suboptimal, indiscriminate imitation structurally prevents the actor from fully exploiting better actions suggested by the value function, especially in later training when imitation is already dominant. We formally analyzed this limitation by investigating convergence properties of BC-regularized actor-critic optimization and verified it on a controlled continuous bandit task. To break this ceiling, we propose \textit{proximal action replacement} (PAR), an easy-to-use plug-and-play training sample replacer. PAR substitutes suboptimal dataset actions with better actions generated by a stable target policy, guided by the action-value function's local ascent direction and bounded by value uncertainty to ensure training stability. PAR is compatible with multiple BC regularization paradigms. Extensive experiments across offline RL benchmarks show that PAR consistently improves performance, and approaches state-of-the-art results simply by being combined with the basic TD3+BC.
\end{abstract}

\section{Introduction}
\label{Introduction}
Reinforcement learning (RL) empowers agents to develop optimal decision-making strategies through real-time interactions and has achieved remarkable progress across diverse domains \citep{doi:10.1126/scirobotics.adi8022}. However, these interactions with real-world environments limit their practical application to critical scenarios, such as healthcare decision-making \citep{pmlr-v174-fatemi22a} and autonomous driving \citep{10015868}. To address this challenge, offline reinforcement learning, which learns from past experiences without online interactions, has emerged as a vital and rapidly growing area of research \citep{NEURIPS2023_26cce1e5}.

However, naively applying standard online RL methods to previously collected data leads to the well-known distribution shift problem \citep{BCQ}, causing the learned policy to fail during online deployment. A widely adopted family of solutions is behavior cloning (BC) regularization actor-critic methods, where the actor is explicitly constrained to stay close to dataset actions. Typically, it employs mean squared error (MSE) \citep{TD3_BC}, Kullback-Leibler (KL) divergence \citep{DBLP:journals/corr/abs-1911-11361}, or maximum likelihood estimation (MLE) \citep{IQL} to reduce the discrepancy between learned and dataset actions. Owing to its simplicity and strong performance, this behavior cloning regularization has been widely adopted, spanning offline RL approaches based on state-conditional generative modeling \citep{wang2023diffusion, EDP} as well as some emerging offline reinforcement learning methods \citep{SSAR, kim2025penalizing, nguyen2026onestep}.

Despite its effectiveness for stability, BC regularization can impose an often-overlooked performance ceiling when the dataset contains suboptimal actions. Intuitively, once the policy has learned to generate “realistic” actions, continued pressure to imitate suboptimal samples keeps pulling the actor toward suboptimal regions, even when the critic has already identified better actions. This creates a structural trade-off between imitation and optimality, which becomes increasingly limiting in later training when the main bottleneck is no longer realism but exploring high-value actions.

\begin{figure}[t]
	\begin{center}
		\begin{minipage}{0.975\textwidth}
		\includegraphics[width=\columnwidth]{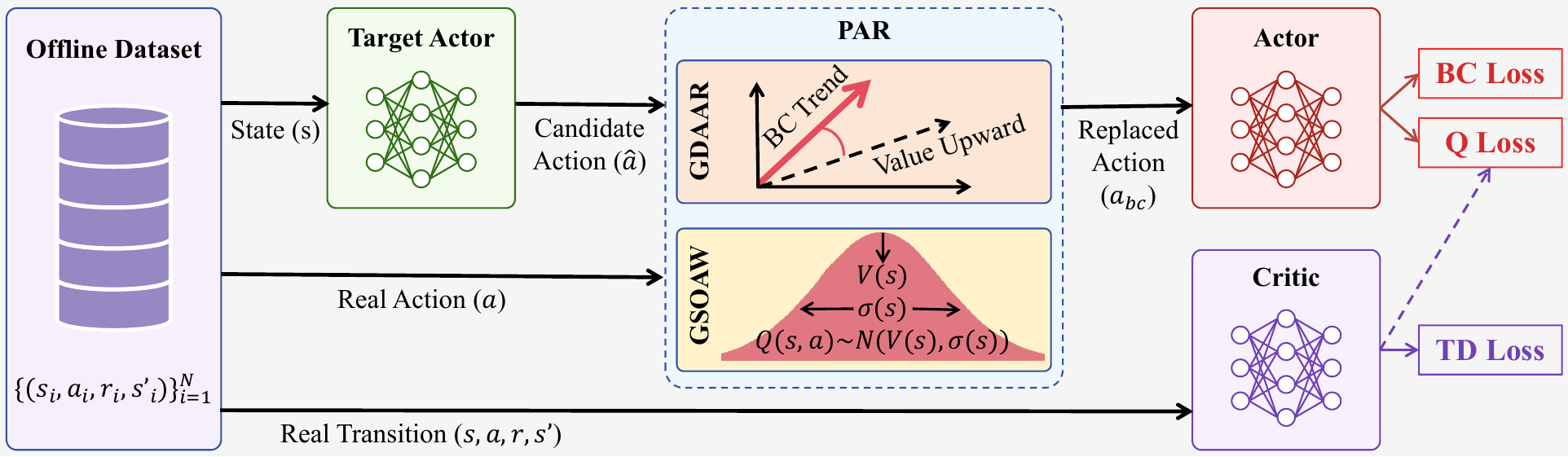}
		\end{minipage}
		\hfill
		\begin{minipage}{0.32\textwidth}
		\includegraphics[width=\columnwidth]{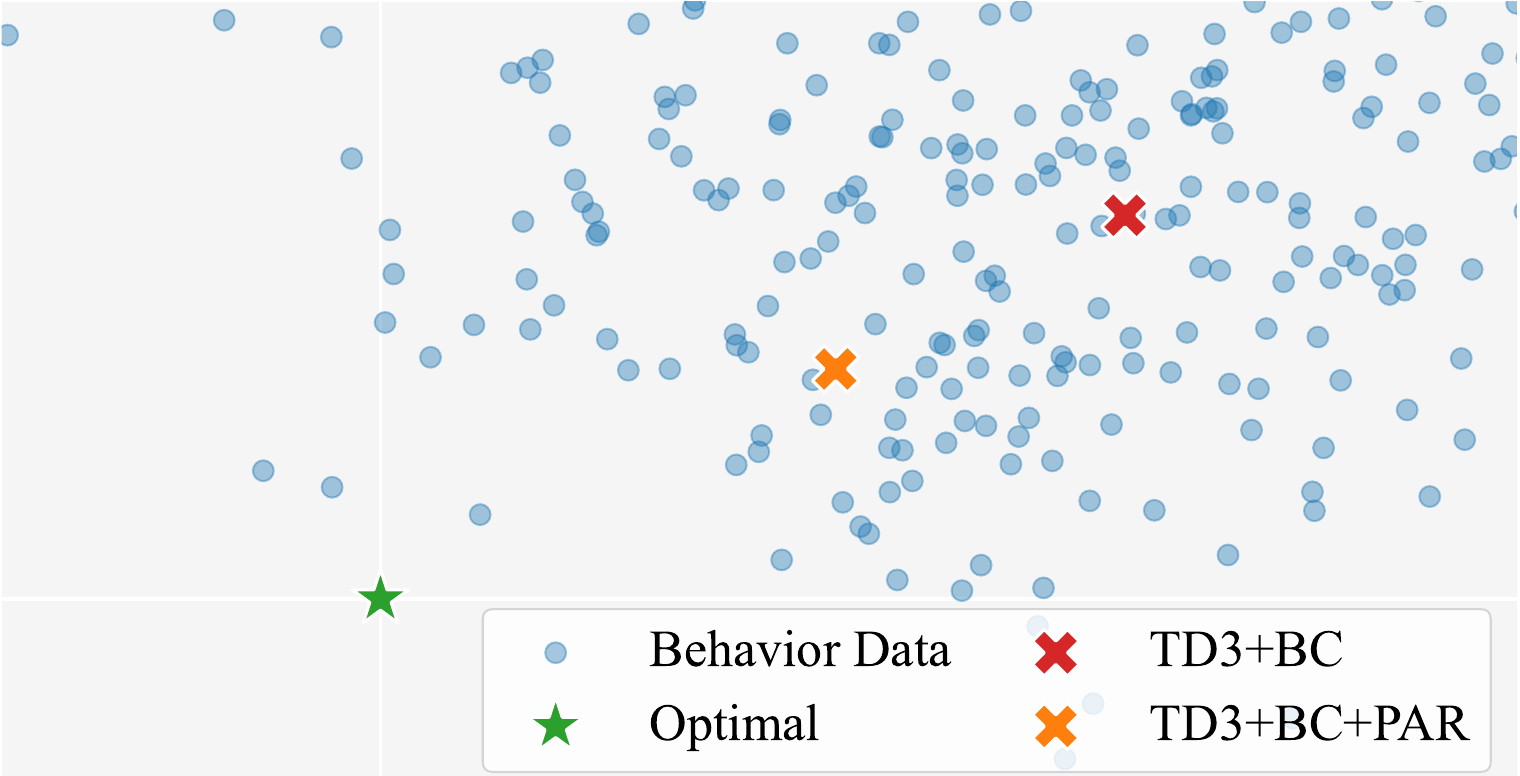}
		\end{minipage}
		\begin{minipage}{0.32\textwidth}
		\includegraphics[width=\columnwidth]{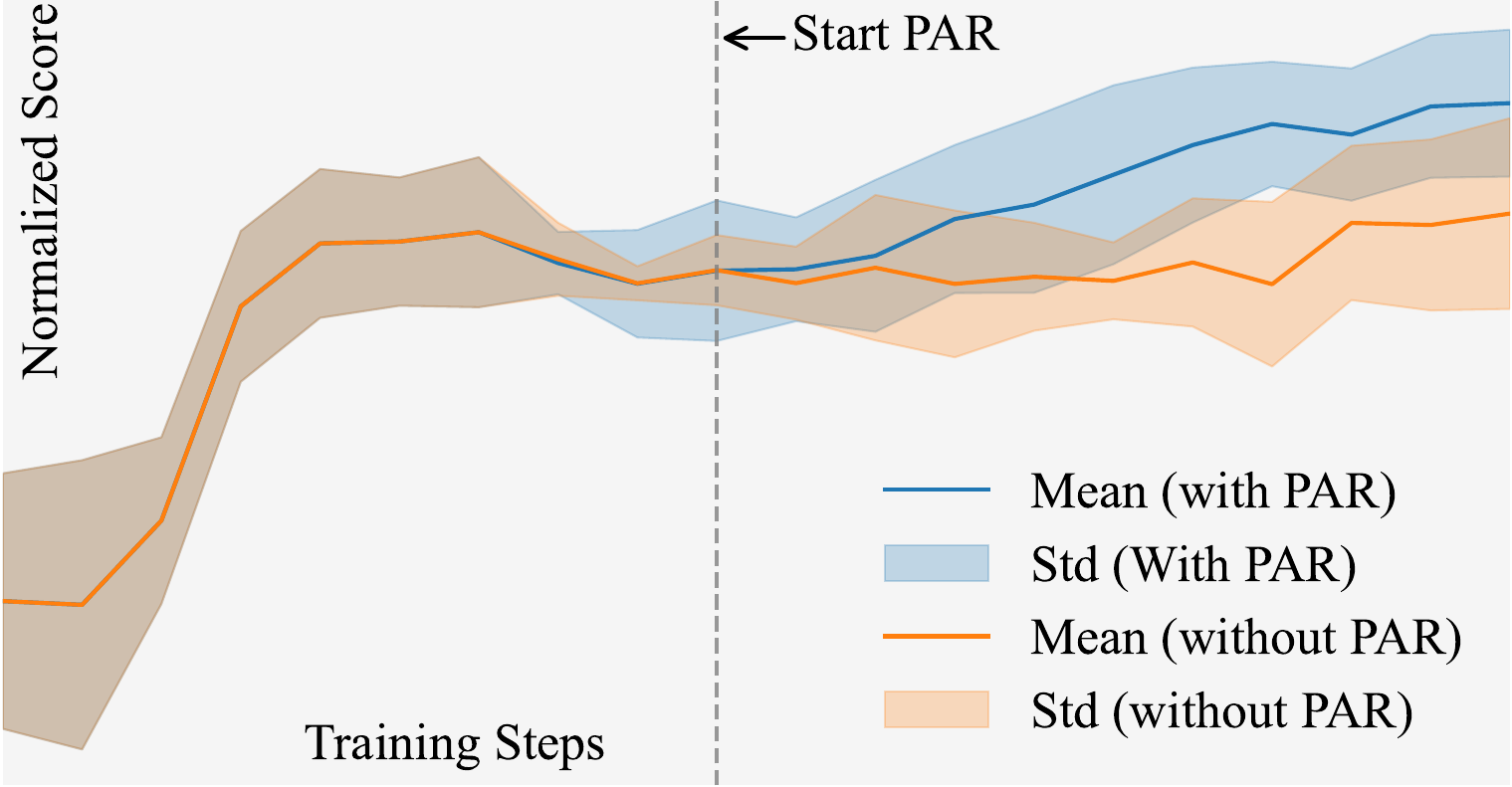}
		\end{minipage}
		\begin{minipage}{0.32\textwidth}
		\includegraphics[width=\columnwidth]{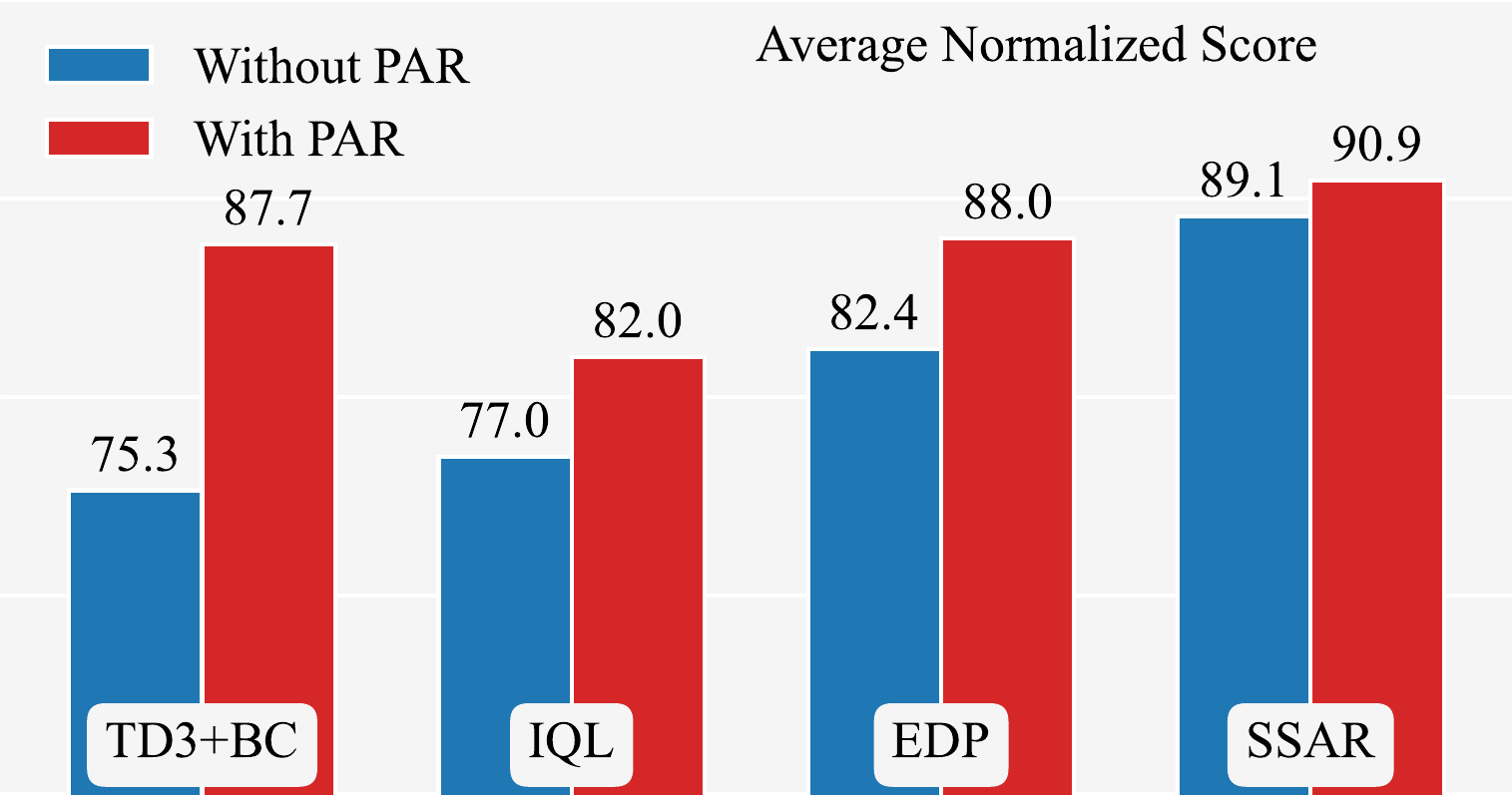}
		\end{minipage}
		\caption{PAR flowchart and effect display. \textit{Top}: PAR's plug-and-play flowchart. \textit{Bottom Left}: The policy obtained by TD3+BC+PAR is closer to the optimal policy than that obtained by TD3+BC on a continuous bandit task. \textit{Bottom Center}: Normalized score comparison during the training process between TD3+BC and TD3+BC+PAR on Walker2d-Medium-Replay. \textit{Bottom Right:} Average normalized score improvement of PAR across classic and advanced behavior cloning actor-critic methods in MuJoCo.}
		\label{fig:side:intro}
	\end{center}
\end{figure}

To formally characterize this limitation, we theoretically and empirically characterize that BC regularization inherently leads to suboptimal policies by analyzing the convergence properties of the actor-critic optimization. To address this fundamental issue, we propose proximal action replacement (PAR), a plug-and-play training sample replacer that dynamically replaces suboptimal BC supervision targets with high-value actions from a stable target policy, filtered by alignment with the state-action value function's local ascent gradient to ensure only genuinely better actions are selected. Furthermore, proximal constraints preserve training stability, enabling policies to transcend the performance ceiling imposed by BC regularization without sacrificing its benefits. PAR is compatible with various BC regularization forms (MSE, KL, MLE) and seamlessly integrates into existing offline RL algorithms. Extensive experiments demonstrate that PAR consistently improves performance across multiple algorithms and diverse domains, as illustrated in Fig. \ref{fig:side:intro}.

Our contributions are as follows: 1) We formally characterize the structural suboptimality of BC regularization through convergence analysis, proving that when data contains suboptimal actions, the stationary point of the BC-regularized objective cannot coincide with the true optimal policy, and empirically verify this on a controlled bandit task; 2) We propose Proximal Action Replacement (PAR) to break this performance ceiling, which consists of two key components: gradient-direction-aware action replacement that dynamically replaces suboptimal actions guided by the state-action value function's local ascent direction, and value-uncertainty-aware OOD action weighting that adaptively bounds action deviations based on value uncertainty to ensure training stability; 3) Extensive experiments demonstrate that PAR, as a plug-and-play module, consistently improves the performance of various BC-regularized offline RL algorithms across diverse domains.

\section{Background and Related Work}
\textbf{Offline RL}: The RL problem is typically defined by a Markov decision process (MDP): $M=\{S, A, P, R, \gamma, d_{0}\}$, with state space $S$, action space $A$, environment dynamics $P: S \times A \to \Delta(S)$, reward function $R: S \times A \to \mathbb{R}$, discount factor $\gamma$, and initial state distribution $d_{0}$, where $\Delta(S)$ represents the set of all probability distributions on the state space $S$. The goal of RL is to learn policy $\pi_{\theta}(a|s)$, parameterized by $\theta$, that maximizes the cumulative discounted reward $\mathbb{E}_{s_{0}\sim d_{0}, a_{t}\sim \pi(\cdot|s_{t}), s_{t+1}\sim P(\cdot|s_{t},a_{t})}\left[\sum_{t=0}^{\infty} \gamma^{t}R(s_{t}, a_{t})\right]$. The action-value (Q-value) of a policy $\pi$ is defined as $Q^{\pi}(s,a)=\mathbb{E}_{\pi}\left[\sum_{t=0}^{\infty}\gamma^{t}R(s_{t},a_{t})\mid s_{0}=s, a_{0}=a\right]$. In the offline RL setting, instead of the environment, a static dataset $\mathcal{D} \triangleq \{(s_{t},a_{t},r_{t},s_{t+1})\}$ is provided. Offline RL algorithms learn a policy solely from the fixed offline dataset $\mathcal{D}$, without any online interaction with the environment.

\textbf{Behavior Cloning Regularization}: Behavior cloning regularization helps models rapidly acquire realistic policies while mitigating bias from out-of-distribution actions, by constraining them to closely align with the behavior policies present in the dataset. Common behavior cloning regularizations include three forms: mean squared error (MSE) \citep{TD3_BC}, Kullback-Leibler (KL) divergence \citep{DBLP:journals/corr/abs-1911-11361}, or maximum likelihood estimation (MLE) \citep{IQL}, which are defined as follows.

\begin{definition}
\textnormal{\textbf{(MSE Behavior Cloning)}: }For a deterministic policy $\pi_{\theta}: S \rightarrow A$, the MSE regularization minimizes the squared Euclidean distance between the policy's action and the action in the dataset $\mathcal{D}$: $\mathcal{L}_{\text{MSE}}(\theta) = \mathbb{E}_{(s,a)\sim\mathcal{D}}[\|\pi_{\theta}(s) - a\|_2^2]$.
\end{definition}

\begin{definition}
	\textnormal{\textbf{(KL Behavior Cloning)}: }For a stochastic policy $\pi_{\theta}(\cdot|s)$, the KL regularization minimizes the Kullback-Leibler divergence between $\pi_{\theta}$ and the behavior policy $\pi_{\beta}$ estimated from the dataset: $\mathcal{L}_{\text{KL}}(\theta) = \mathbb{E}_{s\sim\mathcal{D}}[D_{\text{KL}}(\pi_{\theta}(\cdot|s) \| \pi_{\beta}(\cdot|s))]$.
\end{definition}

\begin{definition}
	\textnormal{\textbf{(MLE Behavior Cloning)}: }For a stochastic policy $\pi_{\theta}(\cdot|s)$, the MLE regularization maximizes the weighted log-likelihood of the actions in the dataset: $\mathcal{L}_{\text{MLE}}(\theta) = \mathbb{E}_{(s,a)\sim\mathcal{D}}[-w(s,a) \log \pi_{\theta}(a|s)]$, where $w(s,a)$ is a weighting term often derived from Q-values or advantages.
\end{definition}

\textbf{Related Work:} Appendix \ref{RelatedWork} provides a detailed analysis of related works on offline RL. In contrast to these methods, PAR takes a fundamentally different perspective: instead of modifying policy constraints, value functions, or policy representations, PAR addresses the suboptimality issue at the data level by replacing suboptimal actions in the training batch with better actions generated by a stable actor. Unlike simple action weighting methods, PAR actively generates and injects superior actions into training—making performance gains both easier to achieve and larger in magnitude, and thereby more effectively overcoming the ceiling imposed by BC regularization.

\section{Method}
\label{Method}
In this section, we first show that standard BC regularization is structurally suboptimal under offline data containing suboptimal transitions, and then introduce Proximal Action Replacement (PAR), which selectively replaces suboptimal BC supervision targets with better target-policy actions according to critic-guided direction alignment and proximal OOD-aware constraints, enabling safer policy improvement beyond imitation-limited solutions.

\subsection{BC Regularization Leads to Suboptimal Policy}
To understand the fundamental impact of BC regularization, we analyze the optimization dynamics from the perspective of distinguishing between the \textit{computed optimal solution} (obtained by optimizing the regularized objective that balances Q-value maximization with behavior cloning constraints) and the \textit{ideal optimal solution} (the true optimal policy that maximizes expected return). The formal description is presented in Theorem \ref{prop:suboptimality}.
\begin{theorem} \label{prop:suboptimality}
\textnormal{\textbf{(Sub-optimality of BC Regularization)}} Let $(\hat{\pi}, \hat{Q})$ be the computed optimal pair for the regularized objective $J_{\theta}(\pi_{\theta}, Q) = \mathbb{E}[\lambda Q(s, \pi_\theta) - \mathcal{L}_{\text{MSE}}(\pi_{\theta}, a_{\text{data}})]$ and the Bellman error. Let $(\pi^*, Q^*)$ denote the ideal optimal pair where $\pi^*(s) = \arg\max_a Q^*(s, a)$.
	If the dataset action $a_{\text{data}}$ is sub-optimal (i.e., $a_{\text{data}} \neq \pi^*(s)$) and $\lambda < \infty$, then the converged policy $\hat{\pi}$ is strictly sub-optimal:
	\begin{equation}
		(\hat{\pi}, \hat{Q}) \neq (\pi^*, Q^*) \quad \text{and} \quad Q^*(s, \hat{\pi}(s)) < Q^*(s, \pi^*(s)).
	\end{equation}
	Its proof is provided in Appendix \ref{proof_of_3_1}.
\end{theorem}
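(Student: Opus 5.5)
The plan is a proof by contradiction built on the first-order stationarity condition of the regularized actor objective. Suppose the computed pair coincides with the ideal one, $(\hat{\pi}, \hat{Q}) = (\pi^*, Q^*)$. Since $\hat{Q}$ minimizes the Bellman error for $\hat{\pi}$, we may take $\hat{Q} = Q^{\hat{\pi}}$. If $\hat{\pi} = \pi^*$ this gives $\hat{Q} = Q^*$, so it suffices to study the actor side at a fixed state $s$ where $a_{\text{data}} \neq \pi^*(s)$. I will treat the policy pointwise in $s$, i.e.\ as a sufficiently expressive parameterization, so that optimizing over $\theta$ amounts to optimizing the action $a = \pi(s)$ separately for each state. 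I will also assume $Q^*(s,\cdot)$ is differentiable, since these are the implicit regularity conditions of the setting.

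The key steps, in order, are as follows. First, write the per-state objective $f(a) = \lambda Q(s,a) - \|a - a_{\text{data}}\|_2^2$. Second, note that any computed optimum, being a maximizer of $f$ (at least a stationary point in the interior of $A$), must satisfy
\begin{equation}
\lambda \nabla_a \hat{Q}(s,\hat{\pi}(s)) = 2\bigl(\hat{\pi}(s) - a_{\text{data}}\bigr).
\end{equation}
Third, assume for contradiction that $\hat{\pi}(s) = \pi^*(s)$ and $\hat{Q} = Q^*$. Because $\pi^*(s) = \arg\max_a Q^*(s,a)$ is an interior maximizer, $\nabla_a Q^*(s,\pi^*(s)) = 0$. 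The left side of the stationarity condition therefore vanishes, which forces $\pi^*(s) = a_{\text{data}}$ and contradicts sub-optimality of $a_{\text{data}}$. Here $\lambda < \infty$ is exactly what is needed: the BC term is not scaled away, so its nonzero gradient $2(\pi^*(s) - a_{\text{data}})$ is uncompensated. Hence $\hat{\pi}(s) \neq \pi^*(s)$, and therefore $(\hat{\pi}, \hat{Q}) \neq (\pi^*, Q^*)$.

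The strict inequality then comes from two cases.
\begin{itemize}
\item If $\hat{Q} = Q^*$ and $\pi^*(s)$ is the unique maximizer, then $\hat{\pi}(s) \neq \pi^*(s)$ directly gives $Q^*(s,\hat{\pi}(s)) < Q^*(s,\pi^*(s))$.
\item If $\hat{Q} \neq Q^*$ because $\hat{Q} = Q^{\hat{\pi}}$ for a policy $\hat{\pi} \neq \pi^*$, I argue via the performance-difference lemma. If $Q^*(s,\hat{\pi}(s)) = \max_a Q^*(s,a)$ held at every state, then $\hat{\pi}$ would be greedy with respect to $Q^*$ and hence optimal, so $\hat{Q} = Q^*$, which returns us to the first case. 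Applying the first case at the state with suboptimal data then yields the strict gap.
\end{itemize}

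I expect the main obstacle to be regularity rather than algebra. Uniqueness of $\arg\max_a Q^*(s,\cdot)$ is needed; otherwise $\hat{\pi}(s)$ could land on another maximizer. The maximizer must also be interior, or boundary effects can give $\nabla_a Q^* \neq 0$ at $\pi^*(s)$ when $A$ is a bounded box. I would state these as standing assumptions: a unique interior maximizer, differentiability, and an expressive policy class. I would also remark that without uniqueness the conclusion $(\hat{\pi}, \hat{Q}) \neq (\pi^*, Q^*)$ still holds, but the strict value gap may fail.
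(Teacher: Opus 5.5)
Your contradiction argument for the first claim, $(\hat\pi,\hat Q)\neq(\pi^*,Q^*)$, is the same as the paper's. At $(\pi^*,Q^*)$ the critic term's gradient vanishes because $\pi^*(s)$ is an unconstrained maximizer of $Q^*(s,\cdot)$, so stationarity would force the BC gradient $2(\pi^*(s)-a_{\text{data}})$ to vanish. Your per-state parameterization, interiority and uniqueness assumptions are what the paper uses implicitly (``unconstrained maximizer'', ``standard distance metrics'', ``uniqueness of the maximum''), so this part is fine and more carefully stated.

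The gap is in Case~2 of the strict inequality. The contradiction refutes only ``$\hat\pi(s)=\pi^*(s)$ \emph{and} $\hat Q=Q^*$''. Your sentence ``Hence $\hat\pi(s)\neq\pi^*(s)$'' is therefore justified only when $\hat Q=Q^*$. When $\hat Q=Q^{\hat\pi}\neq Q^*$, nothing forces $\nabla_a Q^{\hat\pi}(s,\pi^*(s))=0$. The stationarity equation $\lambda\nabla_a Q^{\hat\pi}(s,\pi^*(s))=2(\pi^*(s)-a_{\text{data}})$ can then hold, for instance when the action at $s$ affects the next state and $V^*-V^{\hat\pi}$ varies with it. So $\hat\pi(s)=\pi^*(s)$ is not excluded.

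Your reductio (``if $\hat\pi$ were $Q^*$-greedy everywhere then $\hat Q=Q^*$'') shows only that $\hat\pi$ is strictly suboptimal at \emph{some} state. ``Applying the first case at the state with suboptimal data'' is not available, because Case~1's hypothesis $\hat Q=Q^*$ fails. The performance-difference lemma you cite does not localize the gap to $s$ either.

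The paper's proof has the same hole. It passes from $\hat\pi\neq\pi^*$ to $Q^*(s,\hat\pi(s))<Q^*(s,\pi^*(s))$ ``by uniqueness of the maximum'', which needs $\hat\pi(s)\neq\pi^*(s)$ at the given $s$. To make the pointwise claim rigorous you have two options:
\begin{itemize}
\item restrict to a setting where the critic does not depend on the learned policy (single-step/bandit, $\gamma=0$, as in the toy experiment), where only Case~1 arises; or
\item weaken the conclusion to a strict gap at some state, which your argument does establish.
\end{itemize}
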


\textbf{Remark on Theorem \ref{prop:suboptimality}:} The proof of Theorem \ref{prop:suboptimality} can be easily derived from the property of stationary points, namely that $(\hat{\pi}, \hat{Q})$ must be a stationary point of $J_{\theta}(\pi_{\theta}, Q)$. However, since $\nabla_\theta \mathcal{L}_{\text{BC}}(\pi^*, a_{\text{data}}) \ne 0$, $(\pi^*, Q^*)$ must not be a stationary point of $J_{\theta}(\pi_{\theta}, Q)$, $(\hat{\pi}, \hat{Q}) \ne (\pi^*, Q^*)$. Theorem \ref{prop:suboptimality} tells us that indiscriminate imitation of suboptimal data fundamentally constrains a policy’s asymptotic performance. This limitation may not be evident during the early stages of training, but it becomes a critical factor hindering performance improvements once the policy network learns to generate sufficiently realistic actions. Similar conclusions also apply to behavior cloning regularization in both the KL divergence and maximum likelihood forms, as detailed in Appendix \ref{extension_of_3_1}.

\textbf{Toy Experiment:} 
To empirically validate Theorem \ref{prop:suboptimality}, we design a 2D continuous bandit task with oracle value $Q(s,\mathbf{a})=-\|\mathbf{a}\|_2^2$ and optimal action $\mathbf{a}^*=[0,0]^\top$, while constructing an offline dataset from a biased suboptimal behavior distribution centered at $[2,2]^\top$. Using TD3+BC with MSE/KL/MLE-style BC regularization, we observe that although MLE tends to move the learned policy closer to $\mathbf{a}^*$, all standard BC variants still converge to compromise solutions rather than the true optimum, empirically confirming the structural suboptimality predicted by Theorem \ref{prop:suboptimality}. Detailed settings and visualizations are provided in Appendix \ref{proof_of_3_1_toy_experiment}.

\subsection{Proximal Action Replacement}
To break the imitation-induced performance ceiling while preserving offline training stability, we propose \textit{Proximal Action Replacement} (PAR). PAR performs on-the-fly relabeling only for the actor's BC supervision target: for each state, it selectively replaces the logged action with a target-policy action when the latter is more consistent with the critic's local ascent direction, and controls distributional risk via proximal OOD-aware weighting. The details are given below.

\begin{figure}[t]
	\begin{center}
		\includegraphics[width=\columnwidth]{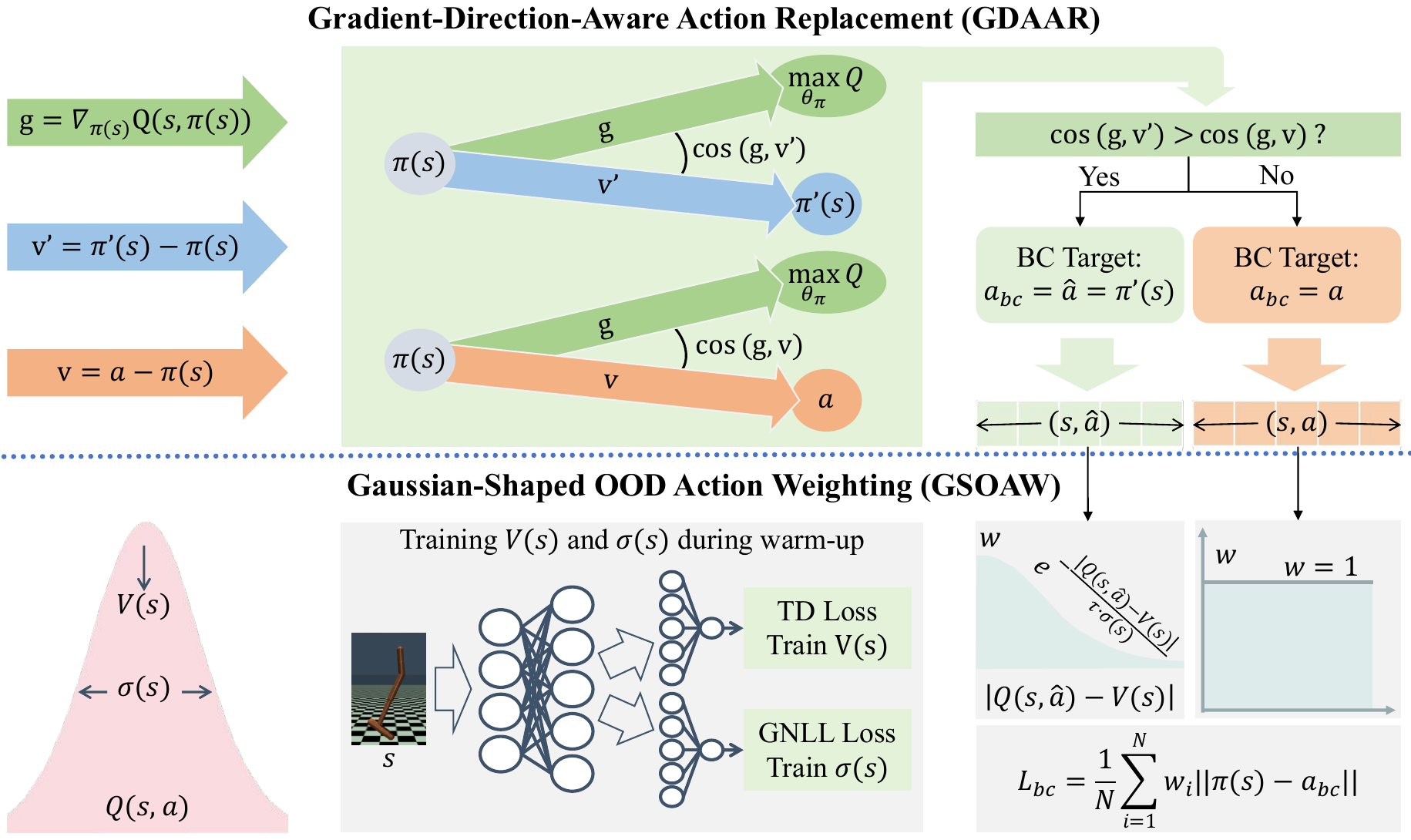}
		\caption{Overview of PAR. $\pi'(s)$ represents the target actor (a Polyak-averaged copy of actor), $s$ and $a$ represents offline real state and action, and $\sigma(s)$ represents the standard deviation of $Q(s,a)$ when $s$ is fixed. The pseudocode for the method is provided in Appendix \ref{pseudocode}.}
		\label{fig:side:a}
	\end{center}
\end{figure}

\subsubsection{Gradient-Direction-Aware Action Replacement}
The key question in PAR is how to decide, for each sample, whether the behavior cloning objective should still imitate the logged action $a$ or instead follow an action proposed by the learned policy. A misjudgment either perpetuates suboptimal imitation or destabilizes training. Intuitively, a superior action should yield a higher expected return, which is parameterized by the critic network. Thus, the critic's local ascent direction serves as a natural compass for identifying policy improvements. Therefore, we introduce gradient-direction-aware action replacement (GDAAR): we separately measure the alignment of the logged action and the target-policy action against the critic's local ascent direction, and only switch the BC supervision when the target-policy action is more ascent direction-consistent.

Specifically,  let $\pi_{\theta}(s)$ denote the current actor’s output and let $\pi'(s)$ denote the target actor (a Polyak-averaged copy of $\pi_{\theta}$), with $\hat a = \pi'(s)$ (Fig.~\ref{fig:side:a}). We compare two displacement vectors from $\pi_{\theta}(s)$: $v'=\hat{a}-\pi_{\theta}(s)$ and $v=a-\pi_{\theta}(s)$. Let $g=\nabla_{a}Q(s,a)\big|_{a=\pi_{\theta}(s)}$ be the action-space gradient of the critic at the current actor output. Intuitively, $g$ points along the direction of steepest ascent of $Q(s,\cdot)$ near $\pi_{\theta}(s)$. We measure alignment with this local value-improvement direction using cosine similarity,
\begin{equation}
	\cos(g,u) = \frac{g^{\top}u}{\|g\|_{2}\|u\|_{2}+\varepsilon},
\end{equation}
where $u$ is $v$ or $v'$, with a small $\varepsilon>0$ for numerical stability when $g$ or $u$ has small norm.

The BC regression label is
\begin{equation}
	\label{eq:gdar_target}
	a_{\mathrm{bc}}=
	\begin{cases}
		\hat{a}, & \text{if } \cos(g,v')>\cos(g,v),\\
		a, & \text{otherwise}.
	\end{cases}
\end{equation}
Thus, the actor’s BC loss is evaluated with $a_{\mathrm{bc}}$: we set the supervision to $\hat{a}=\pi'(s)$ only when the displacement induced by the target actor is more aligned with $g$ than the displacement induced by the logged action; if $a$ is already better aligned, we keep $a$ as the BC label, preserving a conservative anchor against unwarranted policy drift.

\begin{theorem}
\textnormal{\textbf{(Advantage Guarantee of GDAAR)}} 
\label{theo:GDAAR}
For a fixed state $s$, let $a_\theta=\pi_\theta(s)$, 
$v=a-a_\theta$, $v'=\hat a-a_\theta$, and 
$g=\nabla_a Q(s,a)\vert_{a=a_\theta}$.
Assume $Q(s,\cdot)$ is $L$-smooth, the actor uses a small BC step toward the chosen label, and the two candidate displacements are locally norm-comparable ($\|v'\|_2\approx\|v\|_2$).  
If GDAAR selects $\hat a$ (i.e., $\cos(g,v')>\cos(g,v)$), then the one-step value improvement is no worse than vanilla BC up to second-order error:
\begin{equation}
Q(s,a_\theta^{+,\mathrm{GDAAR}})
\;\ge\;
Q(s,a_\theta^{+,\mathrm{BC}})
\;-\;\mathcal{O}(\eta^2),
\end{equation}
where $\eta$ is the actor step size, and $a_\theta^{+,\mathrm{GDAAR}}$ and $a_\theta^{+,\mathrm{BC}}$ denote one actor update step under GDAAR and vanilla MSE behavior cloning, respectively. Its proof is provided in Appendix \ref{proof_of_GDAAR}.
\end{theorem}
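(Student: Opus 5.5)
I will model one actor update under either label as an action-space step. For the MSE loss $\|\pi_\theta(s)-a_{\mathrm{bc}}\|_2^2$, a small gradient step moves the actor output along the displacement toward the label. To first order in the step size this gives
\begin{equation}
a_\theta^{+,\mathrm{GDAAR}} = a_\theta + \eta v' + \mathcal{O}(\eta^2), \qquad a_\theta^{+,\mathrm{BC}} = a_\theta + \eta v + \mathcal{O}(\eta^2),
\end{equation}
where constant factors such as the 2 from the square, and the Jacobian of the network, are absorbed into $\eta$. I will state this as the interpretation of ``small BC step toward the chosen label'' in the hypothesis. If one wants parameter space instead, one can assume the actor Jacobian is locally isotropic, or simply treat the output as the optimized variable. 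I expect this modeling reduction to be the main obstacle: a genuine parameter-space step only moves the output along $J J^\top v$, which need not be parallel to $v$. I will therefore make the reduction explicit rather than hide it. The Q-ascent term of the actor objective is common to both updates, so it cancels in the comparison and can be omitted.

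Next I apply $L$-smoothness of $Q(s,\cdot)$. For any displacement $u$ it gives the two-sided quadratic bound
\begin{equation}
\bigl|Q(s,a_\theta+\eta u) - Q(s,a_\theta) - \eta\, g^\top u\bigr| \le \tfrac{L}{2}\eta^2\|u\|_2^2 .
\end{equation}
I apply this with $u=v'$ and $u=v$, and fold the $\mathcal{O}(\eta^2)$ perturbation of the step into the remainder via smoothness and boundedness of $g$. Subtracting the two expansions yields
\begin{equation}
Q(s,a_\theta^{+,\mathrm{GDAAR}}) - Q(s,a_\theta^{+,\mathrm{BC}}) \ge \eta\,(g^\top v' - g^\top v) - \tfrac{L}{2}\eta^2\bigl(\|v'\|_2^2+\|v\|_2^2\bigr) - \mathcal{O}(\eta^2).
\end{equation}

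It remains to show the first-order term is nonnegative up to negligible error. Write $g^\top u = \|g\|_2\|u\|_2\cos(g,u)$, ignoring the $\varepsilon$ regularizer or treating its effect as a lower-order term. Under norm comparability, set $\|v'\|_2 = \|v\|_2 + \delta$ with $\delta$ small. Then
\begin{equation}
g^\top v' - g^\top v = \|g\|_2\|v\|_2\bigl(\cos(g,v')-\cos(g,v)\bigr) + \|g\|_2\,\delta\cos(g,v').
\end{equation}
The first part is strictly positive by the GDAAR selection rule. The second part is controlled by the comparability assumption; I will interpret $\|v'\|_2\approx\|v\|_2$ as $\delta=\mathcal{O}(\eta)$, so that this part contributes only $\mathcal{O}(\eta^2)$ after multiplication by $\eta$. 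The remaining $\eta^2$ terms are bounded because $v$ and $v'$ are bounded in a bounded action space. Collecting all $\eta^2$ contributions into $\mathcal{O}(\eta^2)$ gives the claimed inequality. In fact it gives a strict first-order gain of $\eta\|g\|_2\|v\|_2(\cos(g,v')-\cos(g,v))$, which I will note as a remark.
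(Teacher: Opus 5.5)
Your proposal is correct and is essentially the paper's proof. Both write the two updates as action-space steps $a_\theta+\eta v'$ and $a_\theta+\eta v$, apply the lower $L$-smoothness bound to the former and the upper bound to the latter, and control $\eta\,g^\top(v'-v)$ using the selection rule together with norm comparability read as an $\mathcal{O}(\eta)$ discrepancy; your additive $\delta$ is the paper's $\|v'\|_2=(1+\xi)\|v\|_2$ with $|\xi|=\mathcal{O}(\eta)$. Your explicit handling of the output-space reduction, the first-order cancellation of the $Q$-ascent term, and the $\varepsilon$ regularizer is added care that the paper leaves implicit, not a different argument; the $\varepsilon$ case is in fact exact, since $g^\top u=\cos(g,u)\,(\|g\|_2\|u\|_2+\varepsilon)$.
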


\textbf{Remark on Theorem \ref{theo:GDAAR}:} Theorem \ref{theo:GDAAR} is a mechanism-level guarantee for the replacement rule in Eq.~\eqref{eq:gdar_target}: when the target-action displacement is better aligned with the local critic ascent direction, selecting $\hat a$ compares favorably with vanilla BC up to second-order optimization error. Similar conclusions also apply to behavior cloning regularization in both the KL divergence and maximum likelihood forms, as detailed in Appendix \ref{extension_of_3_2}.

\subsubsection{Gaussian-Shaped OOD Action Weighting}
\textbf{Necessity of Proximal Optimization:}
In offline actor-critic methods, the critic target explicitly depends on the current policy through $y = r + \gamma Q(s', \pi_{\theta}(s'))$. As a result, if the learned policy deviates substantially from the behavior policy, the target actions $\pi_{\theta}(s')$ become increasingly out-of-distribution relative to the offline dataset. This distribution shift can raise the achievable critic loss and make critic learning brittle, which in turn destabilizes the subsequent actor updates. Therefore, action replacement cannot change the actor too much, that is, the updated actor cannot be too far away from the original actor. To gain deep insight into the nature of this mechanism, we analyze the relationship between the distance of the learned policy from the behavior policy and the critic loss. A formal description is provided in Theorem \ref{theo:pao}.

\begin{theorem} \label{theo:pao}
\textnormal{\textbf{(Instability via Policy Divergence)}} 
Let $\mathcal{L}(Q) = \mathbb{E}_{\mathcal{D}}[(Q(s,a) - y)^2]$ be the critic loss with target $y = r + \gamma Q(s', \pi_{\theta}(s'))$, where $s'$ represents next state. The minimum achievable loss is lower-bounded by the squared policy divergence:
\begin{equation}
    \min_Q \mathcal{L}(Q) \ge \mu \cdot \mathbb{E}_{(s,a)\sim\mathcal{D}} \mathbb{E}_{s'|s,a} \left[ \|\pi_{\theta}(s') - \pi_{\beta}(s')\|_2^2 \right],
\end{equation}
where $\mu$ is the minimum eigenvalue of the covariance matrix $\text{Cov}(\nabla_a Q)$. Its proof is provided in Appendix \ref{proof_of_proximal}.
\end{theorem}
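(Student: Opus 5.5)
We will prove Theorem~\ref{theo:pao} by reducing the critic loss to an irreducible variance term and then linearizing the target around the behavior action. The basic fact is that $Q(s,a)$ depends only on $(s,a)$, while the target $y = r + \gamma Q(s',\pi_\theta(s'))$ also depends on the random next state $s'$.

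\textbf{Step 1: bias--variance decomposition.} Conditioning on $(s,a)$ gives
\begin{equation*}
\mathcal{L}(Q) = \mathbb{E}_{(s,a)}\bigl[(Q(s,a)-\mathbb{E}[y\mid s,a])^2\bigr] + \mathbb{E}_{(s,a)}\bigl[\mathrm{Var}(y\mid s,a)\bigr].
\end{equation*}
The first term can be made zero. Hence $\min_Q \mathcal{L}(Q)$ is at least the conditional-variance term. We treat the $Q$ inside the target as fixed, i.e.\ as a target network, which is the standard semi-gradient view.

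\textbf{Step 2: first-order expansion of the target.} We expand the target around the behavior action, writing $\delta(s') = \pi_\theta(s') - \pi_\beta(s')$:
\begin{equation*}
Q(s',\pi_\theta(s')) \approx Q(s',\pi_\beta(s')) + \nabla_a Q(s',\pi_\beta(s'))^\top \delta(s').
\end{equation*}
We would isolate the fluctuation of $y$ that is due to the divergence term $\gamma\, \nabla_a Q^\top \delta$. To do this, we argue that the reward and the $Q(s',\pi_\beta(s'))$ part form a "behavior-consistent" component. The dataset targets can fit this component, i.e.\ it is what $Q$ regresses to under $\pi_\beta$. The residual is therefore $\gamma\,\nabla_a Q^\top\delta$.

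\textbf{Step 3: quadratic-form bound.} The residual has squared magnitude
\begin{equation*}
\gamma^2\,\delta^\top\, \nabla_a Q\, \nabla_a Q^\top\, \delta .
\end{equation*}
We average it over the randomness of $\nabla_a Q$ (random through $s'$, and through critic uncertainty if we adopt an ensemble view as in PAR's $\sigma(s)$). This replaces the outer product by its second moment, which dominates $\mathrm{Cov}(\nabla_a Q)$. The Rayleigh quotient then gives
\begin{equation*}
\delta^\top \mathrm{Cov}(\nabla_a Q)\,\delta \ge \mu\|\delta\|_2^2 .
\end{equation*}
Taking expectations over $(s,a)\sim\mathcal{D}$ and $s'\mid s,a$ yields the claim. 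The factor $\gamma^2$ is absorbed into $\mu$, or the bound is stated up to this constant.

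\textbf{Main obstacle.} The main obstacle is Step 2: rigorously justifying that the divergence-induced residual is not cancelled by the minimizing $Q$. This requires that $\nabla_a Q^\top\delta$ be uncorrelated with, or orthogonal to, whatever $Q(s,a)$ can absorb as a function of $(s,a)$. We would handle this by an explicit assumption that the gradient fluctuation is conditionally zero-mean given $(s,a)$. With that assumption the residual contributes pure conditional variance, and the covariance (rather than the second moment) is exactly the right object. We would also note that the linearization error is $\mathcal{O}(\|\delta\|^2)$ under the $L$-smoothness already assumed for Theorem~\ref{theo:GDAAR}. The inequality is then exact to leading order in the divergence.
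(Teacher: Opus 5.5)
Your skeleton is the paper's: the bias--variance identity $\min_Q\mathcal{L}(Q)=\mathbb{E}[\mathrm{Var}(y\mid s,a)]$, the split $y=y_\beta+\gamma\Delta(s')$ with $y_\beta=r+\gamma Q(s',\pi_\beta(s'))$, a linearization of $\Delta$ through $\nabla_aQ$, and a Rayleigh quotient at the end. (The paper uses the mean value theorem at an intermediate $\bar a$ instead of a Taylor expansion.) But Step~2 contains a genuine error. $Q(s,a)$ can absorb only $\mathbb{E}[y\mid s,a]$. It cannot ``fit'' $y_\beta$, which depends on $(r,s')$. After minimization you are left with
\begin{equation*}
\mathrm{Var}(y\mid s,a)=\mathrm{Var}(y_\beta\mid s,a)+\gamma^2\,\mathrm{Var}(\Delta\mid s,a)+2\gamma\,\mathrm{Cov}(y_\beta,\Delta\mid s,a).
\end{equation*}
The cross term can be negative and can cancel the divergence term entirely. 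Take $r$ deterministic given $(s,a)$, and $Q(s',\pi_\theta(s'))$ constant in $s'$ while $Q(s',\pi_\beta(s'))$ is not. Then $\mathrm{Var}(y\mid s,a)=0$ although $\mathrm{Var}(\Delta\mid s,a)>0$, so your reduction to ``the residual $\gamma\nabla_aQ^\top\delta$'' puts a positive lower bound on a zero quantity.

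Your proposed assumption does not repair this. ``Conditionally zero-mean gradient fluctuation'' could mean $\nabla_aQ-\mathbb{E}[\nabla_aQ\mid s,a]$, which is zero-mean by construction, or $\mathbb{E}[\nabla_aQ^\top\delta\mid s,a]=0$. Either way it at most identifies the residual's second moment with its variance, and says nothing about $\mathrm{Cov}(y_\beta,\Delta\mid s,a)$. Being ``uncorrelated with whatever $Q(s,a)$ can absorb'' is vacuous, since a function of $(s,a)$ has zero conditional covariance with anything given $(s,a)$. What is missing is control of the cross term. You can assume $\mathrm{Cov}(y_\beta,\Delta\mid s,a)\ge 0$ explicitly. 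Or you can do as the paper does: keep the cross term and bound it by Cauchy--Schwarz. Made precise, that gives $\mathrm{Var}(y\mid s,a)\ge\big(\gamma\sqrt{\mathrm{Var}(\Delta\mid s,a)}-\sqrt{\mathrm{Var}(y_\beta\mid s,a)}\big)^2$. So the divergence term controls the loss only once it dominates $\mathrm{Var}(y_\beta\mid s,a)$, which is why the paper's last line carries an additive ``const'' rather than the clean inequality.

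Step~3 has a further weakness, which the paper's proof shares. When $\delta=\delta(s')$ itself depends on $s'$, averaging $\delta^\top\nabla_aQ\,\nabla_aQ^\top\delta$ over $s'$ does not give $\delta^\top\mathbb{E}[\nabla_aQ\,\nabla_aQ^\top]\,\delta$. For example, with $s'$ uniform on four values, take $\nabla_aQ\in\{e_1,-e_1,e_2,-e_2\}$ paired with $\delta\in\{e_2,e_2,e_1,e_1\}$. Then $\nabla_aQ^\top\delta\equiv 0$, yet $\mathrm{Cov}(\nabla_aQ)=\tfrac12 I$. The Rayleigh step is valid only if two conditions hold. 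The randomness in $\nabla_aQ$ must be independent of $\delta$, and it must be part of the randomness over which $\mathrm{Var}(y\mid s,a)$ is taken (e.g.\ a target critic drawn from an ensemble, at fixed $s'$). In addition, $\mu$ must be defined with respect to that randomness, and all of this has to be stated as an explicit assumption. Finally, you cannot absorb $\gamma^2$ into $\mu$ once the statement fixes $\mu=\lambda_{\min}(\mathrm{Cov}(\nabla_aQ))$. Since $\gamma<1$, your argument, like the paper's, supports only the weaker bound with $\gamma^2\mu$.
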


\textbf{Remark on Theorem \ref{theo:pao}:} Theorem \ref{theo:pao} states that if the learned policy is far from the behavior policy, the critic's training loss will increase or even collapse. Thus, a proximal constraint is necessary to bound the policy divergence, ensuring stable target approximation for valid Q-function learning.

\textbf{Gaussian-Shaped OOD Action Weighting:} 
In Eq. \eqref{eq:gdar_target}, the action generated by the target actor $\pi'(s)$ may often fall outside the support of the real action. This can cause the updated policy to deviate significantly from the original behavioral policy, thus leading to the critic's collapse (Theorem~\ref{theo:pao}). To solve this problem, we add a value-aware weight on the BC loss that shrinks whenever $|Q(s,\hat{a})-V(s)|$ is large relative to a learned uncertainty scale $\sigma(s)$, so imitation is automatically weakened at out-of-distribution generated action.

Specifically, let $V(s)$ be an auxiliary state value function,
\begin{equation}
	V(s)=\mathbb{E}_{a\sim p_{\mathcal D}(a\mid s)}\big[Q(s,a)\big], 
\end{equation}
so $V(s)$ is the mean of $Q(s, a)$ $w.r.t.$ $a$ in principle. As a tractable surrogate, we model the distribution of $Q(s,a)$ $w.r.t.$ $a$ as a Gaussian distribution with mean $V(s)$ and standard deviation $\sigma(s)$. Importantly, the auxiliary state value network $V_{\psi}(s)$ can be solved using the classic temporal difference algorithm \citep{Sutton1988}, while $\sigma_{\psi}(s)$ can be learned simply by adding a prediction head to the auxiliary state value network and training with the following loss,
\begin{equation}
	\label{eq:gnll_sigma}
	\mathcal{L}_{\sigma}
	=\mathbb{E}_{(s,a)\sim\mathcal{D}}\!\left[\frac{\big(Q_{\phi}(s,a)-\texttt{sg}[V_{\psi}(s)]\big)^{2}}{2\sigma_{\psi}(s)^{2}}+\log \sigma_{\psi}(s)\right],
\end{equation}
with $\sigma_{\psi}(s)$ enforced to be positive (e.g., softplus parameterization), and $\texttt{sg}$ represents stop-gradient. Therefore, the generated actions can be assigned the following weights to perceive OOD actions,
\begin{equation}
	\label{eq:ood_weight}
	w = \exp\!\left(-\,\frac{\bigl|Q(s,\hat{a})-V(s)\bigr|}{\tau\,\sigma(s)+\varepsilon}\right),
\end{equation}
where $\tau>0$ represents temperature, $\varepsilon>0$ avoids division by zero. The value of $w$ ranges between (0,1]. The farther $Q(s, \hat a)$ is from $V(s)$, the closer $w$ is to 0. Therefore, generated actions that would disrupt the stability of $Q(s, a)$ will have a small weight in behavior cloning. The final behavior cloning loss on the generated action is: 
\begin{equation}
	\label{eq:weighted_bc}
	\mathcal{L}_{\mathrm{bc}, \hat a} \;=\; \frac{1}{N_{\hat a}}\sum_{i=1}^{N_{\hat a}} w_i\, \bigl\|\pi_{\theta}(s_i)-\hat a_{i}\bigr\|_2^2,
\end{equation}
where $w_i$ denotes Eq.~\eqref{eq:ood_weight} evaluated on sample $(s_{i}, \hat a_{i})$. On real offline data (s, a), the behavior cloning loss remains unchanged, i.e., $w\equiv 1$. 
\begin{theorem}
\textnormal{\textbf{(Stability Guarantee of GSOAW)}} 
\label{theo:GSOAW}
Let $t$ denote the training iteration index, and let $w_t(s)$ be the GSOAW weight at iteration $t$ defined in Eq.~\eqref{eq:ood_weight}. Assume the actor is optimized with Eq.~\eqref{eq:weighted_bc} on generated actions and with the original (unweighted) BC loss on real offline actions $(s,a)\sim\mathcal D$, and that the critic loss is continuous with respect to action perturbation in a neighborhood of the behavior policy action. Then for any $\epsilon>0$, there exists $\tau>0$ such that
\begin{equation}
\left|\mathcal{L}_t(Q_{\phi_t};\pi_{\theta_{t+1}})-\mathcal{L}_t(Q_{\phi_t};\pi_{\beta})\right|\le \epsilon,
\end{equation}
where $\mathcal{L}_t(Q_{\phi_t};\pi)$ denotes the critic Bellman regression loss expectation at iteration $t$ under policy $\pi$. Its proof is provided in Appendix \ref{proof_of_GSOAW}.
\end{theorem}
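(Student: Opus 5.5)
The plan is a limiting argument in the temperature $\tau$: as $\tau\to 0^+$ the GSOAW weight switches off every generated label that is not already in-distribution in value. The actor update then reduces to a perturbation of vanilla BC on real actions, and the continuity assumption on the critic loss closes the gap to $\pi_\beta$. Throughout, $\phi_t$ and the auxiliary heads $V_{\psi},\sigma_{\psi}$ are held fixed at iteration $t$, and $\pi_{\theta_{t+1}}$ is one actor step from $\pi_{\theta_t}$ under the combined loss: the unweighted BC loss on real actions plus the weighted loss in Eq.~\eqref{eq:weighted_bc} on replaced actions.

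\textbf{Step 1: pointwise behaviour of the weight.} From Eq.~\eqref{eq:ood_weight}, for fixed $s$ and $\hat a$ with $|Q(s,\hat a)-V(s)|=\delta>0$, we have
\[
w\le \exp\bigl(-\delta/(\tau\sigma(s)+\varepsilon)\bigr).
\]
This is monotone increasing in $\tau$, so it can be made as small as the lower limit $\exp(-\delta/\varepsilon)$ of the bound. To get a genuinely vanishing weight, I would treat $\varepsilon$ as coupled to $\tau$ (for example $\varepsilon\le\tau$); this is the standard reading of ``$\varepsilon$ avoids division by zero''. With that coupling, $w\to 0$ for every generated action whose value deviates from $V(s)$ by at least any fixed $\delta_0$. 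Replaced labels with $\delta<\delta_0$ are, by the Gaussian surrogate, within the typical spread of $Q(s,\cdot)$ over dataset actions. I treat them as lying in the neighbourhood of behaviour actions where the continuity hypothesis applies.

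\textbf{Step 2: bound the deviation of the actor from the behaviour-cloned target.} I split the actor gradient into the real-action BC term plus $\sum_i w_i\,\nabla_\theta\|\pi_\theta(s_i)-\hat a_i\|^2$, and bound the second term by $\max_i w_i$ times a uniform bound on these gradients, which is finite on a bounded action space. Letting $\tau\to0$ on the far-from-$V$ labels, $\pi_{\theta_{t+1}}$ differs from the BC-only update by at most some $r(\tau)\to0$, uniformly in $s'$. I would then argue that the BC-only update keeps $\pi_{\theta_{t+1}}(s')$ within the neighbourhood of $\pi_\beta(s')$ required by the hypothesis. This uses the fact that the MSE BC minimizer on real actions is the behaviour policy, so the result depends on the preceding iterate already being near $\pi_\beta$, or is taken as part of the assumption.

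\textbf{Step 3: transfer to the critic loss.} The loss $\mathcal L_t(Q_{\phi_t};\pi)$ depends on $\pi$ only through the target $Q_{\phi_t}(s',\pi(s'))$. By the assumed continuity at $\pi_\beta$, for the given $\epsilon$ there is $\rho>0$ such that $\sup_{s'}\|\pi(s')-\pi_\beta(s')\|\le\rho$ implies
\[
\bigl|\mathcal L_t(Q_{\phi_t};\pi)-\mathcal L_t(Q_{\phi_t};\pi_\beta)\bigr|\le\epsilon.
\]
I would then choose $\tau$ small enough that the Step 2 deviation is at most $\rho$, which gives the claim.

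The main obstacle is Step 2. The real-action BC term does not itself force $\pi_{\theta_{t+1}}=\pi_\beta$, so the proof must either use the stated neighbourhood assumption to absorb the BC-only residual, or interpret the theorem as comparing against the policy obtained without replacement. I would first try to state the neighbourhood assumption so that it covers the BC-only iterate, and then let the weight argument control only the extra drift caused by the replaced labels.
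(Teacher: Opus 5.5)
Your outline has the same skeleton as the paper's proof. You reduce the loss gap to the displacement $\|\pi_{\theta_{t+1}}(s')-\pi_\beta(s')\|_2$, split that displacement into a generated-label part switched off as $\tau\to0^+$ and a real-action part, and transfer back to the loss. The paper does the transfer quantitatively via $|z_t|\le B_t$ and an $L_Q$-Lipschitz critic; using the stated continuity hypothesis instead is a fine substitute.

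\textbf{The real-action residual.} This is the gap, and you leave it open; as phrased, neither of your two patches closes it. In Step~3 the radius $\rho=\rho(\epsilon)$ shrinks with $\epsilon$, whereas the distance of the BC-only iterate from $\pi_\beta$ does not depend on $\tau$ at all. So ``the BC-only iterate lies in the neighbourhood'' must hold within $\rho(\epsilon)$ for every $\epsilon$. That forces the BC-only iterate to match $\pi_\beta$, at least in critic loss, and nothing in the hypotheses provides this. Without it, your argument only yields
\[
\bigl|\mathcal{L}_t(Q_{\phi_t};\pi_{\theta_{t+1}})-\mathcal{L}_t(Q_{\phi_t};\pi_\beta)\bigr|\le\bigl|\mathcal{L}_t(Q_{\phi_t};\pi^{\mathrm{BC}}_{\theta_{t+1}})-\mathcal{L}_t(Q_{\phi_t};\pi_\beta)\bigr|+\epsilon ,
\]
which is a different theorem. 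Moreover, since replaced samples drop out as $w_i\to0$, this ``BC-only'' iterate is BC on the non-replaced samples, not the no-replacement policy your reinterpretation refers to.

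The paper closes this step with a second knob. It bounds the displacement by $w_iu_i+\xi_t$, shrinks the real-action BC step until $2\gamma B_tL_Q\xi_t+\gamma^2L_Q^2\xi_t^2\le\epsilon/2$, and only then picks $\tau$ for the remaining $\epsilon/2$. That device is itself sound only if $\pi_{\theta_t}$ is already close to $\pi_\beta$. A small step shrinks the increment $\pi_{\theta_{t+1}}-\pi_{\theta_t}$, not the gap $\bar d=\max_s\|\pi_\theta(s)-\pi_\beta(s)\|_2$ that the paper takes as $\xi_t$. So on either route you need an explicit closeness hypothesis at the scale of $\rho(\epsilon)$, or a restated comparison target, and it should be stated rather than absorbed.

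\textbf{Labels with small value deviation.} Your handling of replaced labels with $|Q(s,\hat a)-V(s)|<\delta_0$ is a separate error. A small value deviation constrains one scalar, not action-space distance, so such labels need not lie near the data. In the paper's toy bandit, with $Q(s,\mathbf a)=-\|\mathbf a\|_2^2$ and data from $\mathcal N([2,2]^\top,\mathbf I)$, one has $V=-10$. Then $\hat{\mathbf a}=[-\sqrt5,-\sqrt5]^\top$ has $Q=V$ and weight $1$ for every $\tau$, yet sits far from the data.

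You do not need the $\delta_0$ split at all. At fixed $t$ the batch is finite and the $d_i=|Q_{\phi_t}(s_i,\hat a_i)-V(s_i)|$ are fixed. Under your coupling $\varepsilon\le\tau$, one gets $\max_i w_i\le\exp\bigl(-d_{\min}/(\tau(\sigma_{\max}+1))\bigr)\to0$, with $d_{\min}$ and $\sigma_{\max}$ the min of $d_i$ and max of $\sigma(s_i)$ over the replaced samples, provided $d_{\min}>0$. Exact ties $d_i=0$, as in the example, must be excluded by assumption; the paper's dominated-convergence step also excludes them tacitly.

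\textbf{Your Step~1 is right.} Your observation there is correct and fixes a real flaw in the paper's proof. With $\varepsilon>0$ fixed, $w_i\to e^{-d_i/\varepsilon}\ne0$ as $\tau\to0^+$. So the paper's claim $\mathbb{E}[w_iu_i]\to0$ needs exactly the coupling you propose, or $\varepsilon=0$.
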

\textbf{Remark on Theorem \ref{theo:GSOAW}:} Theorem \ref{theo:GSOAW} formalizes that GSOAW can make the critic update under the learned actor arbitrarily close to that under the behavior policy by choosing an appropriate temperature $\tau$. Intuitively, when generated actions are potentially out-of-distribution, the weight in Eq.~\eqref{eq:ood_weight} suppresses their contribution in Eq.~\eqref{eq:weighted_bc}, thereby limiting harmful action perturbations to the Bellman target. Consequently, GSOAW provides a principled stability mechanism that complements Theorem~\ref{theo:pao}: instead of enforcing a hard proximity constraint, it adaptively down-weights risky generated actions according to value deviation and uncertainty. Similar conclusions also apply to KL and MLE behavior cloning, as detailed in Appendix \ref{extension_of_3_4}.

\section{Results}
\label{Results}

\begin{table*}[t]
	\centering
	\caption{Comparison of normalized scores on D4RL benchmark across MuJoCo, Kitchen, and AntMaze domains. Bold values indicate that performance was improved after using PAR.}
	\setlength\tabcolsep{1.0pt}
	\renewcommand{\arraystretch}{1}
	\footnotesize
	\begin{tabular}{l|cc|cc|cc|cc|cc}
		\toprule
		\textbf{Dataset} & \textbf{BCQ} & \makecell{\textbf{BCQ}\\ \textbf{+PAR}} & \makecell{\textbf{TD3}\\\textbf{+BC}} & \makecell{\textbf{TD3+BC}\\ \textbf{+PAR}} & \textbf{IQL} & \makecell{\textbf{IQL}\\ \textbf{+PAR}} & \textbf{EDP} & \makecell{\textbf{EDP}\\ \textbf{+PAR}} & \textbf{SSAR} & \makecell{\textbf{SSAR}\\ \textbf{+PAR}}\\
		\midrule
		HalfCheetah-M-E &-&-&90.7&\textbf{95.1$_{\pm 0.8}$}&86.7&\textbf{93.4$_{\pm 1.0}$}&95.8$_{\pm 0.1}$&93.1$_{\pm 0.6}$&94.9$_{\pm 1.2}$&\textbf{96.2$_{\pm 0.4}$}\\
		Hopper-M-E &-&-&98.0&\textbf{111.6$_{\pm 1.3}$}&91.5&\textbf{108.5$_{\pm 6.5}$}&110.8$_{\pm 0.4}$&\textbf{111.1$_{\pm 1.0}$}&103.8$_{\pm 6.7}$&\textbf{112.5$_{\pm 0.3}$}\\
		Walker2d-M-E &-&-&110.1&\textbf{111.7$_{\pm 0.4}$}&109.6&\textbf{111.0$_{\pm 0.5}$}&110.4$_{\pm 0.0}$&\textbf{112.1$_{\pm 0.3}$}&112.5$_{\pm 1.4}$&111.5$_{\pm 0.5}$\\
		\midrule
		HalfCheetah-M &-&-&48.3&\textbf{49.5$_{\pm 0.2}$}&47.4&\textbf{48.3$_{\pm 0.1}$}&50.8$_{\pm 0.0}$&\textbf{55.1$_{\pm 0.4}$}&56.5$_{\pm 3.7}$&\textbf{58.3$_{\pm 1.9}$}\\
		Hopper-M &-&-&59.3&\textbf{101.8$_{\pm 2.6}$}&66.3&\textbf{71.6$_{\pm 4.1}$}&72.6$_{\pm 0.2}$&\textbf{86.1$_{\pm 7.4}$}&101.6$_{\pm 0.4}$&\textbf{102.7$_{\pm 0.5}$}\\
		Walker2d-M &-&-&83.7&\textbf{86.0$_{\pm 0.5}$}&78.3&\textbf{82.3$_{\pm 4.6}$}&86.5$_{\pm 0.2}$&\textbf{89.7$_{\pm 0.4}$}&87.9$_{\pm 2.4}$&\textbf{89.1$_{\pm 0.8}$}\\
		\midrule
		HalfCheetah-M-R &-&-&44.6&\textbf{45.4$_{\pm 0.5}$}&44.2&\textbf{44.7$_{\pm 0.2}$}&44.9$_{\pm 0.4}$&\textbf{48.1$_{\pm 1.4}$}&49.6$_{\pm 0.3}$&\textbf{49.9$_{\pm 0.4}$}\\
		Hopper-M-R &-&-&60.9&\textbf{101.4$_{\pm 1.3}$}&94.7&\textbf{95.5$_{\pm 3.5}$}&83.0$_{\pm 1.7}$&\textbf{101.9$_{\pm 1.4}$}&101.6$_{\pm 0.7}$&\textbf{102.4$_{\pm 0.5}$}\\
		Walker2d-M-R &-&-&81.8&\textbf{87.1$_{\pm 2.6}$}&73.9&\textbf{83.0$_{\pm 2.9}$}&87.0$_{\pm 2.6}$&\textbf{94.7$_{\pm 2.3}$}&93.5$_{\pm 2.0}$&\textbf{95.3$_{\pm 2.3}$}\\
		\midrule
		Ave. (MuJoCo) &-&-&\cellcolor{yellow!30}75.3&\cellcolor{pink!30}\textbf{87.7} (\textcolor{blue}{+12.4})&\cellcolor{yellow!30}77.0&\cellcolor{pink!30}\textbf{82.0} (\textcolor{blue}{+5.0})&\cellcolor{yellow!30}82.4&\cellcolor{pink!30}\textbf{88.0} (\textcolor{blue}{+5.6})&\cellcolor{yellow!30}89.1&\cellcolor{pink!30}\textbf{90.9} (\textcolor{blue}{+1.8})\\
		\midrule
		AntMaze-M-P &58.8&\textbf{72.8$_{\pm 7.4}$}&0.0&\textbf{2.7$_{\pm 1.8}$}&71.2&\textbf{88.5$_{\pm 2.5}$}&73.3$_{\pm 6.2}$&72.1$_{\pm 10.9}$&-&-\\
		AntMaze-L-P &24.2&\textbf{28.3$_{\pm 7.3}$}&0.0&\textbf{1.2$_{\pm 0.6}$}&39.6&\textbf{53.3$_{\pm 5.4}$}&33.3$_{\pm 1.9}$&\textbf{51.4$_{\pm 9.2}$}&-&-\\
		AntMaze-M-D &46.8&\textbf{48.9$_{\pm 8.9}$}&0.8&\textbf{3.4$_{\pm 0.9}$}&70.0&\textbf{89.4$_{\pm 5.2}$}&52.7$_{\pm 1.9}$&\textbf{76.5$_{\pm 3.2}$}&-&-\\
		AntMaze-L-D &21.4&\textbf{31.2$_{\pm 12.3}$}&0.0&\textbf{0.8$_{\pm 0.4}$}&47.5&\textbf{51.9$_{\pm 8.8}$}&41.3$_{\pm 3.4}$&\textbf{50.4$_{\pm 8.2}$}&-&-\\
		\midrule
		Ave. (AntMaze) &\cellcolor{yellow!30}37.8&\cellcolor{pink!30}\textbf{45.3} (\textcolor{blue}{+7.5})&\cellcolor{yellow!30}0.2&\cellcolor{pink!30}\textbf{2.0} (\textcolor{blue}{+1.8})&\cellcolor{yellow!30}57.1&\cellcolor{pink!30}\textbf{70.8} (\textcolor{blue}{+13.7})&\cellcolor{yellow!30}50.2&\cellcolor{pink!30}\textbf{62.6} (\textcolor{blue}{+12.4})&-&-\\
		\midrule
		Kitchen-M &8.1&\textbf{25.3$_{\pm 5.1}$}&0.0&\textbf{19.4$_{\pm 3.9}$}&51.0&\textbf{57.5$_{\pm 4.5}$}&50.2$_{\pm 1.8}$&\textbf{64.0$_{\pm 4.6}$}&-&-\\
		Kitchen-P &18.9&\textbf{23.8$_{\pm 5.6}$}&0.0&\textbf{20.6$_{\pm 4.5}$}&46.3&\textbf{65.4$_{\pm 3.0}$}&40.8$_{\pm 1.5}$&\textbf{61.2$_{\pm 9.2}$}&-&-\\
		\midrule
		Ave. (Kitchen) &\cellcolor{yellow!30}13.5&\cellcolor{pink!30}\textbf{24.6} (\textcolor{blue}{+11.1})&\cellcolor{yellow!30}0.0&\cellcolor{pink!30}\textbf{20.0} (\textcolor{blue}{+20.0})&\cellcolor{yellow!30}48.7&\cellcolor{pink!30}\textbf{61.5} (\textcolor{blue}{+12.8})&\cellcolor{yellow!30}45.5&\cellcolor{pink!30}\textbf{62.6} (\textcolor{blue}{+17.1})&-&-\\
		\bottomrule
	\end{tabular}
	\label{Motivational_comparison}
\end{table*}

We evaluate PAR to answer: (1) Does PAR improve the behavior cloning offline reinforcement learning method? (2) Can the improved method achieve state-of-the-art results? (3) How do PAR's key components affect performance?

\subsection{Experimental Setting}
\textbf{Benchmarks: }We evaluated PAR on D4RL benchmark tasks \citep{fu2021d4rldatasetsdeepdatadriven}, a widely adopted standard for offline RL. Our evaluation spans various domains, including locomotion, navigation and manipulation tasks to demonstrate the method's generalizability. Specifically, we utilize: (1) \textbf{Gym-MuJoCo tasks} (v2) (HalfCheetah, Hopper, Walker2d), covering three dataset qualities: Medium, Medium-Replay, and Medium-Expert; (2) \textbf{AntMaze tasks} (v0), which require stitching suboptimal trajectories for sparse-reward navigation, evaluating on Medium and Large maps with Play and Diverse datasets; and (3) \textbf{FrankaKitchen tasks} (v0), a challenging high-dimensional manipulation domain, where we use the Mixed and Partial datasets.

\textbf{Baselines: }We combine PAR with some popular behavior cloning actor-critic methods to demonstrate the role of PAR. These methods are: (1) \textbf{BCQ} \citep{BCQ}, which restricts the action space to the support of the behavior policy through KL behavior cloning; (2) \textbf{TD3+BC} \citep{TD3_BC}, a minimalist approach that adds MSE behavior cloning regularization term to the policy update; (3) \textbf{IQL} \citep{IQL}, which performs implicit policy improvement via MLE behavior cloning. When combined with PAR's GSOAW, the advantage weights are multiplied by the GSOAW weights to obtain the policy loss weights; (4) \textbf{EDP} \citep{EDP}, a diffusion-based policy method that uses MSE behavior cloning; and (5) \textbf{SSAR} \citep{SSAR}, which adopts a selective state-adaptive regularization strategy and MSE behavior cloning to balance policy constraint and improvement.

To further demonstrate PAR's effectiveness, we compare the above behavior cloning actor-critic methods using PAR with other state-of-the-art offline RL methods: (1) \textbf{PG} \citep{PG}, which replaces the standard Gaussian prior of a behavior-cloned diffusion model with a learnable distribution optimized via behavior-regularized objectives to directly generate high-value trajectories; (2) \textbf{TAT} \citep{TAT}, which aggregates information from both historical and current trajectories in a dynamic tree-like structure to marginalize unreliable states and prioritize impactful nodes for decision-making; (3) \textbf{LoMAP} \citep{RefPlan}, a training-free method that projects guided samples onto a low-rank subspace approximated from offline datasets to prevent infeasible trajectory generation; (4) \textbf{RefPlan} \citep{RefPlan}, which employs local manifold approximation and projection for manifold-aware diffusion planning; (5) \textbf{FQL} \citep{FQL}, which leverages an expressive flow-matching policy to model complex action distributions by training a one-step policy with RL to avoid unstable recursive backpropagation; and (6) \textbf{QIPO} \citep{QIPO}, which introduces energy-weighted flow matching that directly learns energy-guided flows for offline RL.

\textbf{Implementation Details: } We implement PAR and all baselines using PyTorch. For all baseline algorithms (BCQ, TD3+BC, IQL, EDP, and SSAR), we adopt the standard hyperparameters reported in their original papers. Across all methods, PAR is activated after half of the total training epochs. For all experiments, we report results averaged over $5$ random seeds. All models are trained on NVIDIA Tesla A800 GPUs. On the MuJoCo dataset, considering that the distribution area of actions on medium-expert is not as wide as that on medium and medium-replay, we empirically set the temperature parameter of GSOAW on medium-expert, medium, and medium-replay to 0.3, 0.5, and 1, respectively. In both AntMaze and Kitchen data, the temperature parameter is uniformly set to 0.5. The implementation code is provided in the supplementary materials or is available at: \url{https://anonymous.4open.science/r/PAR-140E}.

\subsection{Experimental Results}
\begin{table*}[t]
	\centering
	\caption{Comparison of normalized scores on D4RL benchmark on MuJoCo. Bold values indicate the best performance in each row, and underlined values indicate the second best.}
	\setlength\tabcolsep{3.0pt}
	\footnotesize
	\renewcommand{\arraystretch}{1}
	\begin{tabular}{lccccccccccc}
		\toprule
		\textbf{Dataset} & \textbf{PG} & \textbf{TAT} & \makecell{\textbf{LoMAP}} & \textbf{RefPlan} & \makecell{\textbf{FQL}} & \textbf{QIPO} & \makecell{\textbf{TD3+BC}\\ \textbf{+PAR}} & \makecell{\textbf{IQL}\\ \textbf{+PAR}} & \makecell{\textbf{EDP}\\ \textbf{+PAR}} & \makecell{\textbf{SSAR}\\ \textbf{+PAR}} \\
		\midrule
		HalfCheetah-M-E & \cellcolor{yellow!30}\underline{95.2} & 92.5 & 91.1 & 92.8 & 90.1 & 94.5 & 95.1$_{\pm 0.8}$ & 93.4$_{\pm 1.0}$ & 93.1$_{\pm 0.6}$ & \cellcolor{pink!30}\textbf{96.2}$_{\pm 0.4}$ \\
		Hopper-M-E      & 110.4 & 109.4 & 110.6 & 57.8 & 86.2 & 108.0 & \cellcolor{yellow!30}\underline{111.6}$_{\pm 1.3}$ & 108.5$_{\pm 6.5}$ & 111.1$_{\pm 1.0}$ & \cellcolor{pink!30}\textbf{112.5}$_{\pm 0.3}$ \\
		Walker2d-M-E    & 109.4 & 108.8 & 109.2 & \cellcolor{pink!30}\textbf{114.0} & 100.5 & 110.9 & 111.7$_{\pm 0.4}$ & 111.0$_{\pm 0.5}$ &\cellcolor{yellow!30}\underline{112.1}$_{\pm 0.3}$&111.5$_{\pm 0.5}$\\
		\midrule
		HalfCheetah-M   & 45.6 & 44.3 & 45.4 & \cellcolor{pink!30}\textbf{74.6} & \cellcolor{yellow!30}\underline{60.1} & 54.2 & 49.5$_{\pm 0.2}$ & 48.3$_{\pm 0.1}$ & 55.1$_{\pm 0.4}$ &58.3$_{\pm 1.9}$\\
		Hopper-M        & 97.5 & 82.6 & 93.7 & 32.8 & 74.5 & 94.1 & \cellcolor{yellow!30}\underline{101.8}$_{\pm 2.6}$ & 71.6$_{\pm 4.1}$ & 86.1$_{\pm 7.4}$ &\cellcolor{pink!30}\textbf{102.7}$_{\pm 0.5}$\\
		Walker2d-M      & 82.3 & 81.0 & 79.9 & \cellcolor{yellow!30}\underline{91.6} & 72.7 & \cellcolor{pink!30}\textbf{97.3} & 86.0$_{\pm 0.5}$ & 82.3$_{\pm 4.6}$ & 89.7$_{\pm 0.4}$ &89.1$_{\pm 0.8}$\\
		\midrule
		HalfCheetah-M-R & 46.4 & 39.2 & 39.1 & \cellcolor{pink!30}\textbf{76.3} & \cellcolor{yellow!30}\underline{51.1} & 48.0 & 45.4$_{\pm 0.5}$ & 44.7$_{\pm 0.2}$ & 48.1$_{\pm 1.4}$ &49.9$_{\pm 0.4}$\\
		Hopper-M-R      & 91.3 & 95.3 & 97.6 & 82.6 & 85.4 & 101.3 & 101.4$_{\pm 1.3}$ & 95.5$_{\pm 3.5}$ & \cellcolor{yellow!30}\underline{101.9}$_{\pm 1.4}$ &\cellcolor{pink!30}\textbf{102.4}$_{\pm 0.5}$\\
		Walker2d-M-R    & 83.7 & 78.2 & 78.7 & 91.2 & 82.1 & 78.6 & 87.1$_{\pm 2.6}$ & 83.0$_{\pm 2.9}$ & \cellcolor{yellow!30}\underline{94.7}$_{\pm 2.3}$ &\cellcolor{pink!30}\textbf{95.3}$_{\pm 2.3}$\\
		\midrule
		Average (MuJoCo) & 84.6 & 81.3 & 82.8 & 79.3 & 78.1 & 87.4 & 87.7 & 82.0 & \cellcolor{yellow!30}\underline{88.0} & \cellcolor{pink!30}\textbf{90.9}\\
		\midrule
		\bottomrule
	\end{tabular}
	\label{SOTA_comparison}
\end{table*}

\textbf{Locomotion Domain (MuJoCo):} 
As shown in Table~\ref{Motivational_comparison}, PAR significantly enhances all baseline methods on MuJoCo tasks. On average, TD3+BC improves from 75.3 to 87.7 (+12.4 points), IQL from 77.0 to 82.0 (+5.0 points), EDP from 82.4 to 88.0 (+5.6 points), and SSAR from 89.1 to 90.9 (+1.8 points). Notably, Hopper-M-R shows the most dramatic gain for TD3+BC, surging from 60.9 to 101.4 (+40.5 points). On average across all baselines, PAR yields the largest improvements on medium-replay (+7.5), followed by medium (+6.8) and medium-expert (+4.4), consistent with the decreasing proportion of suboptimal actions in these datasets.

\textbf{AntMaze Domain:} 
As reported in Table~\ref{Motivational_comparison}, PAR consistently yields average improvements across all baselines on AntMaze tasks: BCQ improves from 37.8 to 45.3 (+7.5), IQL from 57.1 to 70.8 (+13.7), EDP from 50.2 to 62.6 (+12.4), and even the deterministic TD3+BC advances from 0.2 to 2.0 (+1.8). These gains demonstrate that PAR effectively breaks through the performance limits imposed by behavior cloning, enabling successful suboptimal trajectory stitching for long-horizon sparse-reward navigation.

\textbf{Kitchen Domain:} 
As shown in Table~\ref{Motivational_comparison}, PAR yields substantial average improvements across all baselines on high-dimensional FrankaKitchen manipulation tasks: BCQ improves from 13.5 to 24.6 (+11.1), IQL from 48.7 to 61.5 (+12.8), EDP from 45.5 to 62.6 (+17.1), and TD3+BC surges from 0.0 to 20.0 (+20.0). These robust gains confirm that PAR successfully guides policies toward better action sequences in complex robotic control, remarkably rescuing TD3+BC from complete failure.

\textbf{Comparison with Other SOTA Methods:} 
Table~\ref{SOTA_comparison} benchmarks PAR-enhanced methods against sophisticated trajectory planning and flow-matching SOTA methods. Specifically, SSAR+PAR achieves the highest average score of 90.9 on MuJoCo tasks, outperforming the strongest SOTA competitor QIPO (87.4) and securing top scores on five out of nine individual tasks. This confirms that plugging PAR into simple actor-critic methods yields competitive or superior performance compared to highly complex SOTA algorithms.

\subsection{Ablation Study}
To gain deeper insights into the contribution of each component, we conducted the following ablation study: (1) Necessity of introducing new actions; (2) Necessity of GSOAW. Besides, sensitivity of temperature $\tau$ and computational complexity analyses are provided in Appendix \ref{Analysis_of_T} and Appendix \ref{Complexity}.

\begin{table}[h]
	\begin{minipage}[t]{0.48\textwidth}
		\vspace{0pt}%
		\textbf{Necessity of New Actions}: To validate introducing new actions through the replacement, we compare PAR with its variants without new actions. Specifically, we consider: (1) \textbf{Base}, TD3+BC without any modification; (2) \textbf{Filtering}, when the direction of the action value function’s local ascent is inconsistent with the direction of the offline action’s behavior cloning (i.e., the cosine similarity is less than 0), the corresponding action is directly removed; (3) \textbf{PAR}, our full method that replaces low-value actions with GDAAR and GSOAW. Table~\ref{tab:necessity} shows that while Filtering improves over Baseline (78.7 vs. 75.3 average), PAR significantly outperforms both methods (87.7 average). This gap highlights that simply filtering suboptimal actions is insufficient, and introducing better actions is essential for breaking the performance ceiling. 
	\end{minipage}\hfill
	\begin{minipage}[t]{0.48\textwidth}
		\vspace{0pt}%
		\centering
		\caption{Ablation study on PAR components. Results are normalized scores averaged over 5 seeds. Bold represents the best result.}
		\footnotesize
		\setlength\tabcolsep{1.pt}
		\label{tab:necessity}
		\begin{tabular}{lcccc}
			\toprule
			\textbf{Dataset} & \textbf{Base} & \textbf{Filtering} & \makecell{\textbf{No}\\ \textbf{GSOAW}}& \textbf{PAR} \\
			\midrule
			HalfCheetah-M-E & 90.7 &90.9$_{\pm 1.0}$&74.1$_{\pm 3.9}$& \textbf{95.1$_{\pm 0.8}$} \\
			Hopper-M-E & 98.0 &102.4$_{\pm 5.4}$&78.5$_{\pm 5.7}$& \textbf{111.6$_{\pm 1.3}$} \\
			Walker2d-M-E & 110.1 &111.2$_{\pm 0.1}$&91.7$_{\pm 3.3}$&\textbf{111.7$_{\pm 0.4}$} \\
			\midrule
			HalfCheetah-M & 48.3 &48.2$_{\pm 0.2}$&49.0$_{\pm 0.3}$& \textbf{49.5}$_{\pm 0.2}$ \\
			Hopper-M & 59.3 &65.3$_{\pm 1.1}$&63.7$_{\pm 3.5}$& \textbf{101.8}$_{\pm 2.6}$ \\
			Walker2d-M & 83.7 &84.7$_{\pm 0.5}$&85.1$_{\pm 1.3}$& \textbf{86.0}$_{\pm 0.5}$ \\
			\midrule
			HalfCheetah-M-R & 44.6 &44.5$_{\pm 0.3}$&44.9$_{\pm 0.6}$&\textbf{45.4}$_{\pm 0.5}$ \\
			Hopper-M-R & 60.9 &76.5$_{\pm 13.7}$&95.2$_{\pm 8.0}$& \textbf{101.4}$_{\pm 1.3}$ \\
			Walker2d-M-R & 81.8 &85.2$_{\pm 1.5}$&86.5$_{\pm 3.2}$& \textbf{87.1}$_{\pm 2.6}$ \\
			\midrule
			Average & 75.3 & \cellcolor{yellow!30}78.7 (\textcolor{blue}{+3.4}) &74.3(\textcolor{blue}{-1.0})& \cellcolor{pink!30}\textbf{87.7} (\textcolor{blue}{+12.4})\\
			\bottomrule
		\end{tabular}
	\end{minipage}
	
	\vspace{0.2cm}

	\textbf{Necessity of GSOAW:} To evaluate the necessity of GSOAW, we compare PAR with a variant that solely utilizes GDAAR (denoted as ``No GSOAW''). As shown in Table~\ref{tab:necessity}, omitting GSOAW leads to severe performance degradation on \texttt{medium-expert} datasets, which are highly sensitive to out-of-distribution (OOD) actions, causing the overall average score to fall below the Baseline (74.3 vs. 75.3). Conversely, on \texttt{medium} and \texttt{medium-replay} datasets, relying exclusively on GDAAR still yields improvements over the Baseline, particularly achieving a significant boost on Hopper-M-R (from 60.9 to 95.2). These findings demonstrate that while GDAAR can independently discover higher-value actions, it suffers from severe OOD extrapolation errors on complex mixed-quality datasets. Thus, GSOAW is indispensable for mitigating OOD risks and ensuring robust performance.
\end{table}

\section{Discussion and Conclusion}
\textbf{Potential Impact, Limitations, and Future Work:} PAR provides a plug-and-play solution to break the performance ceiling of BC regularization in offline RL, shifting the paradigm from pure imitation to guided policy improvement. This approach may inspire new data-centric or self-training paradigms, helping offline RL algorithms surpass human-level decision-making by dynamically optimizing the supervision signal itself. However, PAR currently relies on the continuous nature of the action space to compute the local ascent direction via action-value gradients (GDAAR). Adapting this mechanism to discrete or hybrid action spaces would require further design, such as utilizing discrete advantage estimates.  Furthermore, PAR operates at the single-step transition level, which may not fully capture long-horizon dependencies. Therefore, future work could explore extending PAR to multi-step or trajectory-level replacements, potentially leveraging sequence models to generate coherent high-value action sequences, and broadening the framework's applicability to discrete action domains.

In this paper, we theoretically and empirically revealed that standard behavior cloning (BC) regularization in offline RL inherently imposes a performance ceiling by forcing indiscriminate imitation of suboptimal offline data. To break this ceiling, we proposed Proximal Action Replacement (PAR), an easy-to-use, plug-and-play training sample replacer. By substituting suboptimal dataset actions with better actions generated by a stable target policy---guided by the action-value function's local ascent direction (GDAAR) and adaptively bounded by value uncertainty (GSOAW)---PAR expands the effective action exploration space while maintaining training stability. Extensive experiments demonstrated that integrating PAR into existing BC-regularized offline RL algorithms consistently and significantly improves their performance across diverse continuous control and robotic manipulation tasks. We believe that PAR provides a new, data-centric perspective on overcoming the structural limitations of offline RL, paving the way for more optimal and robust algorithms.

\bibliography{neurips_2026}
\bibliographystyle{plainnat}


\newpage
\appendix
\section{Related Works}
\label{RelatedWork}
Existing offline RL methods can be broadly categorized as follows. \textbf{Behavior cloning regularization methods} constrain the learned policy to stay close to the behavior policy to prevent distributional shift. \textit{TD3+BC}: \cite{TD3_BC} adopts a minimalist approach that adds MSE behavior cloning regularization to policy updates. \textit{BRAC}: \cite{DBLP:journals/corr/abs-1911-11361} introduces a general framework that applies KL divergence regularization between the learned and behavior policies. \textit{SSAR}: \cite{SSAR} proposes selective state-adaptive regularization that adjusts regularization strength based on data quality, trusting Bellman updates in high-quality states while applying constraints selectively. \textit{IQL}: \cite{IQL} estimates the value of optimal actions implicitly through expectile regression and extracts the policy via advantage-weighted behavior cloning, avoiding explicit evaluation of out-of-distribution actions. \textit{ASPC}: \cite{jing2026adaptive} dynamically adjusts the policy constraint scale during training to adapt to various datasets without manual tuning. \textbf{Conservative value methods} address extrapolation errors by penalizing out-of-distribution actions in the value function. \textit{CQL}: \cite{NEURIPS2020_0d2b2061} learns a conservative Q-function that lower-bounds the true value function, while \cite{NEURIPS2021_f29a1797} extends this conservative principle to model-based offline RL. \textit{CPQL}: \cite{kim2026pengs} adapts the Peng's Q($\lambda$) operator for conservative value estimation to naturally induce implicit behavior regularization and mitigate over-pessimistic value estimation. \textbf{Generative model-based methods} leverage expressive policy classes to capture complex action distributions. \textit{BCQ}: \cite{BCQ} employs a VAE-based generative model to restrict the action space to the support of the behavior policy. \textit{Diffusion Policies}: \cite{wang2023diffusion} and \cite{EDP} represent policies as diffusion models to capture multimodal action distributions. \textit{Flow Matching Policies}: \cite{FQL} trains an expressive flow-matching policy by distilling iterative flow processes into one-step generation, while \cite{QIPO} introduces energy-weighted flow matching that directly learns energy-guided flows without auxiliary models. Additionally, \cite{nguyen2026onestep} enables one-step action generation without distillation by learning an average velocity field, and \cite{chae2026flow} leverages a flow behavior proxy policy to regularize both the actor and the critic to mitigate value overestimation. \textbf{Trajectory planning methods} generate complete trajectories for long-horizon decision-making. \textit{PG}: \cite{PG} proposes prior-guided diffusion planning that replaces the standard Gaussian prior with a learnable distribution. \textit{TAT}: \cite{TAT} aggregates information from historical and current trajectories in a dynamic tree structure to resist stochastic risks in diffusion planners. \textit{LoMAP}: \cite{RefPlan} projects guided samples onto a low-rank subspace to prevent infeasible trajectory generation in diffusion planners. \textit{TGCVG}: \cite{wang2026trajectory} proposes a trajectory-level data augmentation framework that synthesizes high-quality trajectories via conservative value-guided generation. \textbf{Conditional sequence modeling methods} learn action distributions based on history trajectories and target returns. \textit{DT}: \cite{DT} formulates reinforcement learning as a sequence modeling problem, where actions are generated conditioned on desired returns and past states. \textit{QCS}: \cite{QCS} proposes adaptive Q-aid for conditional supervised learning in offline RL, leveraging Q-values to guide sequence modeling. \textit{QT}: \cite{QT} combines the trajectory modeling ability of Transformers with the predictability of optimal future returns from dynamic programming methods by integrating Q-value maximization into the conditional sequence modeling training loss. \textit{QDFFDT}: \cite{wang2026offline} adaptively combines global sequence features and local immediate features through a learnable fusion mechanism to improve decision-making and trajectory synthesis.

\section{Proof of Theorem \ref{prop:suboptimality}}
\label{proof_of_3_1}
\begin{proof}
	We prove by contradiction. Since the optimization algorithm converges to a stationary point where the gradient vanishes, we examine whether this stationary point coincides with the optimal solution. Assume the stationary point is the optimal pair, i.e., $(\hat{\pi}, \hat{Q}) = (\pi^*, Q^*)$. For this to hold, the gradient of the actor objective must vanish at $\pi^*$: $\nabla_\theta J_{\theta}(\pi_{\theta}, Q^*)\big|_{\pi_{\theta}=\pi^*} = 0$.
	Expanding the gradient:
	\begin{equation}
		\nabla_{\theta} J_{\theta}(\pi_{\theta}, Q^*) \big|_{\pi_{\theta}=\pi^*} = \lambda \nabla_\theta Q^*(s, \pi_{\theta})\big|_{\pi_{\theta}=\pi^*} - \nabla_\theta \mathcal{L}_{\text{MSE}}(\pi_\theta, a_{\text{data}})\big|_{\pi_{\theta}=\pi^*}.
	\end{equation}
	Since $\pi^*$ is the unconstrained maximizer of $Q^*$, we have $\nabla_\theta Q^*(s, \pi^*) = 0$. Thus, the condition reduces to $\nabla_\theta \mathcal{L}_{\text{MSE}}(\pi^*, a_{\text{data}}) = 0$.
	However, under standard distance metrics, $\nabla_\theta \mathcal{L}_{\text{MSE}} = 0$ implies $\pi^*(s) = a_{\text{data}}$, which contradicts the premise that $a_{\text{data}}$ is sub-optimal.
	Therefore, the gradient at $\pi^*$ is non-zero, driving the policy update away from $\pi^*$. Consequently, the stationary policy $\hat{\pi}$ must differ from $\pi^*$. By the uniqueness of the maximum of $Q^*$, this implies $Q^*(s, \hat{\pi}(s)) < Q^*(s, \pi^*(s))$.
\end{proof}

\section{Extension of Theorem \ref{prop:suboptimality}}
\label{extension_of_3_1}

\subsection{Sub-optimality of KL Regularization}

\begin{theorem} \label{thm:KL_suboptimality}
\textnormal{\textbf{(Sub-optimality of KL BC Regularization)}} 
Let $(\hat{\pi}, \hat{Q})$ be the computed optimal pair of the alternating optimization for the KL-regularized objective $J_{\text{KL}}(\pi_{\theta}, Q) = \mathbb{E}_{s \sim \mathcal{D}}[\mathbb{E}_{a \sim \pi_{\theta}(\cdot|s)}[Q(s,a)] - D_{\text{KL}}(\pi_{\theta}(\cdot|s) \| \pi_{\beta}(\cdot|s))]$ and the Bellman error. Let $(\pi^*, Q^*)$ denote the ideal optimal pair where $\pi^*(a|s) = \delta(a - \arg\max_{a'} Q^*(s, a'))$. If the behavior policy $\pi_{\beta}$ has support on suboptimal actions (i.e., $\pi_{\beta}(a|s) > 0$ for some $a \neq \arg\max_{a'} Q^*(s, a')$), then the converged policy $\hat{\pi}$ is strictly sub-optimal:
\begin{equation}
\hat{\pi} \neq \pi^* \quad \text{and} \quad \mathbb{E}_{a \sim \hat{\pi}(\cdot|s)}[Q^*(s,a)] < \max_a Q^*(s,a).
\end{equation}
\end{theorem}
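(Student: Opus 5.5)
The cleanest route is not a gradient/stationary-point argument as in the proof of Theorem \ref{prop:suboptimality}. Instead, I will use the closed form of the KL-regularized policy step. For a fixed state $s$ and a fixed critic $Q$, the inner problem $\max_{\pi(\cdot|s)} \mathbb{E}_{a\sim\pi}[Q(s,a)] - D_{\text{KL}}(\pi\|\pi_\beta)$ over distributions is strictly concave in $\pi$. By the Gibbs variational principle (Donsker--Varadhan), its unique maximizer is
\begin{equation}
\pi_Q(a|s) = \frac{\pi_\beta(a|s)\exp(Q(s,a))}{\int \pi_\beta(a'|s)\exp(Q(s,a'))\,da'} ,
\end{equation}
and the optimal value is $\log \mathbb{E}_{a\sim\pi_\beta}[\exp Q(s,a)]$. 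A converged pair $(\hat\pi,\hat Q)$ of the alternating scheme must therefore satisfy $\hat\pi = \pi_{\hat Q}$: the actor step is at its fixed point. I will assume the policy class is rich enough that the actor step attains this maximizer; this mirrors the paper's stationarity assumption.

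\emph{Step 1 (contradiction setup).} Suppose $(\hat\pi,\hat Q)=(\pi^*,Q^*)$. Then $\pi^* = \pi_{Q^*}$. However, $\pi_{Q^*}$ is absolutely continuous with respect to $\pi_\beta$ and has the same support as $\pi_\beta$, because $\exp(Q^*)>0$ whenever $Q^*$ is finite. Hence $\pi_{Q^*}$ puts positive mass (or density) on the suboptimal actions that $\pi_\beta$ supports. This contradicts $\pi^*=\delta(a-a^*)$ with $a^*=\arg\max_{a'}Q^*(s,a')$.

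A more direct version of the same argument: $D_{\text{KL}}(\delta_{a^*}\|\pi_\beta)$ is $+\infty$ when $\pi_\beta$ is a density, and finite but strictly dominated otherwise. In either case $\pi^*$ is not the maximizer, because strict concavity makes the Gibbs solution the unique optimum.

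\emph{Step 2 (strict suboptimality).} Take any converged pair. Then $\hat\pi=\pi_{\hat Q}$ has support equal to $\operatorname{supp}\pi_\beta$, which contains a set of suboptimal actions of positive $\pi_\beta$-mass. So $\hat\pi\ne\pi^*$. To obtain the value gap, I will write
\begin{equation}
\max_a Q^*(s,a)-\mathbb{E}_{\hat\pi}[Q^*(s,a)] = \mathbb{E}_{a\sim\hat\pi}\big[Q^*(s,a^*)-Q^*(s,a)\big].
\end{equation}
The integrand is nonnegative and strictly positive on $\{a\ne a^*\}$, and $\hat\pi$ assigns positive mass to that set. The expectation is therefore strictly positive.

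\emph{Main obstacle.} The expected difficulty is measure-theoretic and concerns the support hypothesis. "$\pi_\beta(a|s)>0$ for some $a\ne a^*$" must mean positive mass on a set of suboptimal actions: a set of positive $\pi_\beta$-measure in the continuous case, or an atom in the discrete case. Otherwise $\hat\pi$ could put zero mass off $a^*$. I would state this interpretation explicitly. In the continuous case, $\{a\ne a^*\}$ has full measure anyway, so the condition is automatic there.

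A secondary point is that the closed form needs $\hat Q$ to be finite and $\mathbb{E}_{\pi_\beta}[e^{\hat Q}]<\infty$. The inverse temperature is fixed at $1$, playing the role of $\lambda<\infty$ in Theorem \ref{prop:suboptimality}. I would remark that only the limit $\lambda\to\infty$, i.e.\ a vanishing KL weight, recovers the delta policy.
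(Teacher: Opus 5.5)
Your proposal is correct and follows essentially the same route as the paper: both take the closed-form Boltzmann (Gibbs) maximizer $\hat\pi \propto \pi_\beta \exp(\hat Q)$, note that it inherits the support of $\pi_\beta$ and so puts positive mass on a suboptimal action, and then get the strict value gap by comparing each $Q^*(s,a)$ with $Q^*(s,a^*)$. Your explicit positive-mass reading of the support hypothesis, the finiteness condition on the normalizer, and your support-based argument for $\hat\pi\neq\pi^*$ (in place of the paper's weaker ``probability distribution versus Dirac'' remark) tighten this same argument, and your Step 1 is already covered by Step 2.
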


\begin{proof}
Consider the policy optimization objective with KL regularization subject to a behavior policy $\pi_{\beta}$:
\begin{equation}
	J_{\text{KL}}(\pi) = \mathbb{E}_{s \sim \mathcal{D}} \left[ \mathbb{E}_{a \sim \pi_{\theta}(\cdot|s)} [Q(s, a)] - D_{\text{KL}}(\pi_{\theta}(\cdot|s) \| \pi_{\beta}(\cdot|s)) \right].
\end{equation}
The closed-form optimal policy $\hat{\pi}$ that maximizes this objective is the Boltzmann distribution:
\begin{equation}
	\hat{\pi}(a|s) = \frac{\pi_{\beta}(a|s) \exp\big(Q(s, a)\big)}{Z(s)}, \quad \text{where } Z(s) = \int \pi_{\beta}(a'|s) \exp\big(Q(s, a')\big) da'.
\end{equation}
In contrast, the unregularized optimal policy $\pi^*$ is deterministic (or concentrates mass on optimal actions):
\begin{equation}
	\pi^*(a|s) = \delta\big(a - \arg\max_{a'} Q^*(s, a')\big).
\end{equation}

Since $\pi_{\beta}$ has support on suboptimal actions, there exists $a_{\text{sub}} \neq \arg\max_a Q^*(s,a)$ such that $\pi_{\beta}(a_{\text{sub}}|s) > 0$. Therefore, $\hat{\pi}(a_{\text{sub}}|s) = \frac{\pi_{\beta}(a_{\text{sub}}|s) \exp(Q(s, a_{\text{sub}}))}{Z(s)} > 0$.

Now, consider the expected Q-value under $\hat{\pi}$:
\begin{align}
\mathbb{E}_{a \sim \hat{\pi}(\cdot|s)}[Q^*(s,a)] &= \int \hat{\pi}(a|s) Q^*(s,a) da \\
&= \frac{1}{Z(s)} \int \pi_{\beta}(a|s) \exp(Q(s,a)) Q^*(s,a) da \\
&= \sum_{a} \hat{\pi}(a|s) Q^*(s,a).
\end{align}

Let $a^* = \arg\max_a Q^*(s,a)$ denote the optimal action. Since $\hat{\pi}$ assigns non-zero probability to suboptimal actions (specifically $a_{\text{sub}}$), and $Q^*(s, a_{\text{sub}}) < Q^*(s, a^*)$, we have:
\begin{align}
\mathbb{E}_{a \sim \hat{\pi}(\cdot|s)}[Q^*(s,a)] &= \hat{\pi}(a^*|s) Q^*(s,a^*) + \sum_{a \neq a^*} \hat{\pi}(a|s) Q^*(s,a) \\
&= \hat{\pi}(a^*|s) Q^*(s,a^*) + \hat{\pi}(a_{\text{sub}}|s) Q^*(s,a_{\text{sub}}) + \sum_{a \neq a^*, a \neq a_{\text{sub}}} \hat{\pi}(a|s) Q^*(s,a) \\
&< \hat{\pi}(a^*|s) Q^*(s,a^*) + \hat{\pi}(a_{\text{sub}}|s) Q^*(s,a^*) + \sum_{a \neq a^*, a \neq a_{\text{sub}}} \hat{\pi}(a|s) Q^*(s,a^*) \\
&= Q^*(s,a^*) \sum_{a} \hat{\pi}(a|s) \\
&= \max_a Q^*(s,a),
\end{align}
where the strict inequality holds because $\hat{\pi}(a_{\text{sub}}|s) > 0$ and $Q^*(s, a_{\text{sub}}) < Q^*(s, a^*)$.

Therefore, $\hat{\pi} \neq \pi^*$ (since $\hat{\pi}$ is a probability distribution while $\pi^*$ is a Dirac distribution) and $\mathbb{E}_{a \sim \hat{\pi}(\cdot|s)}[Q^*(s,a)] < \max_a Q^*(s,a)$, establishing strict sub-optimality.
\end{proof}

\subsection{Sub-optimality of MLE Regularization}

\begin{theorem} \label{thm:MLE_suboptimality}
\textnormal{\textbf{(Sub-optimality of MLE BC Regularization)}} 
Let $(\hat{\pi}, \hat{Q})$ be the computed optimal pair of the alternating optimization for the MLE-regularized objective $J_{\text{MLE}}(\pi_{\theta}, Q) = \mathbb{E}_{(s,a) \sim \mathcal{D}}[-w(s,a) \log \pi_{\theta}(a|s)]$ and the Bellman error, where $w(s,a) > 0$ is a bounded weighting function (i.e., $w(s,a) < \infty$ for all $(s,a)$). Let $(\pi^*, Q^*)$ denote the ideal optimal pair where $\pi^*(a|s) = \delta(a - \arg\max_{a'} Q^*(s, a'))$. If the dataset $\mathcal{D}$ contains suboptimal actions (i.e., there exists $(s,a) \in \mathcal{D}$ such that $a \neq \arg\max_{a'} Q^*(s, a')$), then the converged policy $\hat{\pi}$ is strictly sub-optimal:
\begin{equation}
\hat{\pi} \neq \pi^* \quad \text{and} \quad \mathbb{E}_{a \sim \hat{\pi}(\cdot|s)}[Q^*(s,a)] < \max_a Q^*(s,a).
\end{equation}
\end{theorem}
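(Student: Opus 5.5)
The plan follows the KL case (Theorem~\ref{thm:KL_suboptimality}). First I find the closed-form maximizer of the weighted MLE objective over policies. Then I show it puts positive mass on the suboptimal dataset action. Then I conclude strict suboptimality by the same expectation argument.

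\textbf{Step 1: closed-form maximizer.} Fix a state $s$. Write the conditional empirical measure of dataset actions at $s$ as $p_{\mathcal D}(a\mid s)$. Minimizing $\mathbb{E}_{a\sim p_{\mathcal D}(\cdot|s)}[-w(s,a)\log\pi_\theta(a|s)]$ over all distributions $\pi(\cdot|s)$ is a weighted cross-entropy problem. Introduce a Lagrange multiplier for $\int\pi(a|s)\,da=1$ (or use Gibbs' inequality). The minimizer is then
\begin{equation}
\hat\pi(a|s)=\frac{w(s,a)\,p_{\mathcal D}(a|s)}{\int w(s,a')\,p_{\mathcal D}(a'|s)\,da'},
\end{equation}
which is the reweighted empirical behavior distribution. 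Since $w$ is bounded and positive, the normalizer is finite and positive, so $\hat\pi$ is well defined. The critic part of the alternating scheme only changes $w$ through $\hat Q$, so at the converged pair $\hat\pi$ still has this form with $w=w(s,a;\hat Q)$.

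\textbf{Step 2: positive mass on a suboptimal action.} Take the pair $(s,a_{\text{sub}})\in\mathcal D$ with $a_{\text{sub}}\neq a^*:=\arg\max_{a'}Q^*(s,a')$. Then $p_{\mathcal D}(a_{\text{sub}}|s)>0$, interpreted as an atom of the empirical measure. Since $w(s,a_{\text{sub}})>0$, Step 1 gives $\hat\pi(a_{\text{sub}}|s)>0$.

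\textbf{Step 3: conclusion.} The argument then runs exactly as the chain of inequalities in the proof of Theorem~\ref{thm:KL_suboptimality}. Split $\mathbb{E}_{a\sim\hat\pi}[Q^*(s,a)]$ into the $a_{\text{sub}}$ term and the rest. Bound every $Q^*(s,a)$ by $Q^*(s,a^*)$, with strict inequality on the $a_{\text{sub}}$ term because it has positive mass and $Q^*(s,a_{\text{sub}})<Q^*(s,a^*)$. This gives the strict inequality against $\max_aQ^*(s,a)$, and $\hat\pi\neq\pi^*$ follows because a Dirac at $a^*$ assigns zero mass to $a_{\text{sub}}$.

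\textbf{Main obstacle.} The delicate point is the boundedness hypothesis $w<\infty$. It is exactly what prevents the weights from concentrating all mass on $a^*$. In AWR/IQL, $w=\exp(\beta A)$, and only the limit $\beta\to\infty$ would recover $\pi^*$. I would state explicitly that finiteness of $w(s,a_{\text{sub}})$ relative to the others keeps $\hat\pi(a_{\text{sub}}|s)$ bounded away from zero.

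A second subtlety arises if the policy class is restricted, for example to Gaussians. Then the closed form in Step 1 is unavailable. I would argue instead that any Gaussian with positive variance has full support, so it trivially assigns positive density to suboptimal actions. A degenerate variance is not a stationary point of the weighted log-likelihood whenever the weighted dataset actions at $s$ are not all identical to $a^*$. The likelihood then diverges to $-\infty$ at any dataset action $\neq a^*$, so $\pi^*$ itself cannot be the minimizer.
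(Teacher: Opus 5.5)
Your proposal is correct and takes essentially the same route as the paper: take the reweighted behavior distribution $\hat\pi \propto w\,\pi_\beta$ as the maximizer of the weighted log-likelihood, show that the suboptimal dataset action keeps positive mass, and finish with the same strict-inequality chain used in the KL case. The only difference is that the paper splits into two cases according to whether $\pi_\beta(a^*|s)=0$, whereas your single argument via $\hat\pi(a_{\text{sub}}|s)>0$ already covers both cases.
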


\begin{proof}
For MLE-based methods, the policy is updated by maximizing the weighted log-likelihood:
\begin{equation}
	\hat\theta = \arg\max_{\theta} \mathbb{E}_{(s, a) \sim \mathcal{D}} \left[ w(s, a) \log \pi_{\theta}(a|s) \right],
\end{equation}
where $w(s, a) > 0$ denotes the weight. The optimal solution $\hat{\pi}$ converges to the weighted behavior distribution:
\begin{equation}
	\hat{\pi}(a|s) = \frac{w(s,a) \pi_{\beta}(a|s)}{\sum_{a'} w(s,a') \pi_{\beta}(a'|s)}.
\end{equation}

Since $w(s,a) < \infty$ for all $(s,a)$ and the dataset contains suboptimal actions, the support of $\hat{\pi}$ is contained in the support of $\pi_{\beta}$: $\text{supp}(\hat{\pi}(\cdot|s)) \subseteq \text{supp}(\pi_{\beta}(\cdot|s))$.

Let $a^* = \arg\max_a Q^*(s,a)$ denote the optimal action. We consider two cases:

\textbf{Case 1:} If $\pi_{\beta}(a^*|s) = 0$ (optimal action not in dataset), then $\hat{\pi}(a^*|s) = 0$, and clearly $\hat{\pi} \neq \pi^*$ since $\pi^*(a^*|s) = 1$. Moreover, since $\hat{\pi}$ assigns zero probability to the optimal action, we have:
\begin{equation}
\mathbb{E}_{a \sim \hat{\pi}(\cdot|s)}[Q^*(s,a)] = \sum_{a \neq a^*} \hat{\pi}(a|s) Q^*(s,a) < Q^*(s,a^*) = \max_a Q^*(s,a),
\end{equation}
establishing strict sub-optimality.

\textbf{Case 2:} If $\pi_{\beta}(a^*|s) > 0$ but there exists $a_{\text{sub}} \neq a^*$ with $\pi_{\beta}(a_{\text{sub}}|s) > 0$ and $w(s,a_{\text{sub}}) > 0$ (which must hold since the dataset contains suboptimal actions), then:
\begin{equation}
\hat{\pi}(a_{\text{sub}}|s) = \frac{w(s,a_{\text{sub}}) \pi_{\beta}(a_{\text{sub}}|s)}{\sum_{a'} w(s,a') \pi_{\beta}(a'|s)} > 0.
\end{equation}

Since $\hat{\pi}$ assigns non-zero probability to suboptimal actions, we have:
\begin{align}
\mathbb{E}_{a \sim \hat{\pi}(\cdot|s)}[Q^*(s,a)] &= \sum_{a} \hat{\pi}(a|s) Q^*(s,a) \\
&= \hat{\pi}(a^*|s) Q^*(s,a^*) + \sum_{a \neq a^*} \hat{\pi}(a|s) Q^*(s,a) \\
&= \hat{\pi}(a^*|s) Q^*(s,a^*) + \hat{\pi}(a_{\text{sub}}|s) Q^*(s,a_{\text{sub}}) + \sum_{a \neq a^*, a \neq a_{\text{sub}}} \hat{\pi}(a|s) Q^*(s,a) \\
&< \hat{\pi}(a^*|s) Q^*(s,a^*) + \hat{\pi}(a_{\text{sub}}|s) Q^*(s,a^*) + \sum_{a \neq a^*, a \neq a_{\text{sub}}} \hat{\pi}(a|s) Q^*(s,a^*) \\
&= Q^*(s,a^*) \sum_{a} \hat{\pi}(a|s) \\
&= \max_a Q^*(s,a),
\end{align}
where the strict inequality holds because $\hat{\pi}(a_{\text{sub}}|s) > 0$ and $Q^*(s,a_{\text{sub}}) < Q^*(s,a^*)$.

Therefore, in both cases, $\hat{\pi} \neq \pi^*$ and $\mathbb{E}_{a \sim \hat{\pi}(\cdot|s)}[Q^*(s,a)] < \max_a Q^*(s,a)$, establishing strict sub-optimality.
\end{proof}

\section{Toy Experiment of Theorem \ref{prop:suboptimality}}
\label{proof_of_3_1_toy_experiment}
To empirically validate Theorem \ref{prop:suboptimality}, we designed a 2D continuous bandit task where the oracle Q-function is $Q(s, \mathbf{a}) = -\|\mathbf{a}\|^2_2$, with the optimal action $\mathbf{a}^*=[0, 0]^\top$. We constructed a suboptimal dataset where actions are sampled from a distribution $\mathcal{N}([2, 2]^\top, 1.0\mathbf{I})$, mimicking a biased behavior policy. The sample size of the offline dataset is 10000. We trained the policy using the classic TD3+BC algorithm as the backbone, where BC regularizations take three different forms: MSE, KL, and MLE. KL behavior cloning regularization simultaneously trains a behavior policy network $\pi_{\beta}(\cdot|s)$ \citep{DBLP:journals/corr/abs-1911-11361}. All experiments were trained for 10000 steps with a batch size of 256 and the Adam optimizer using a learning rate of 0.0003.

\begin{figure}[t]
		\includegraphics[width=\linewidth]{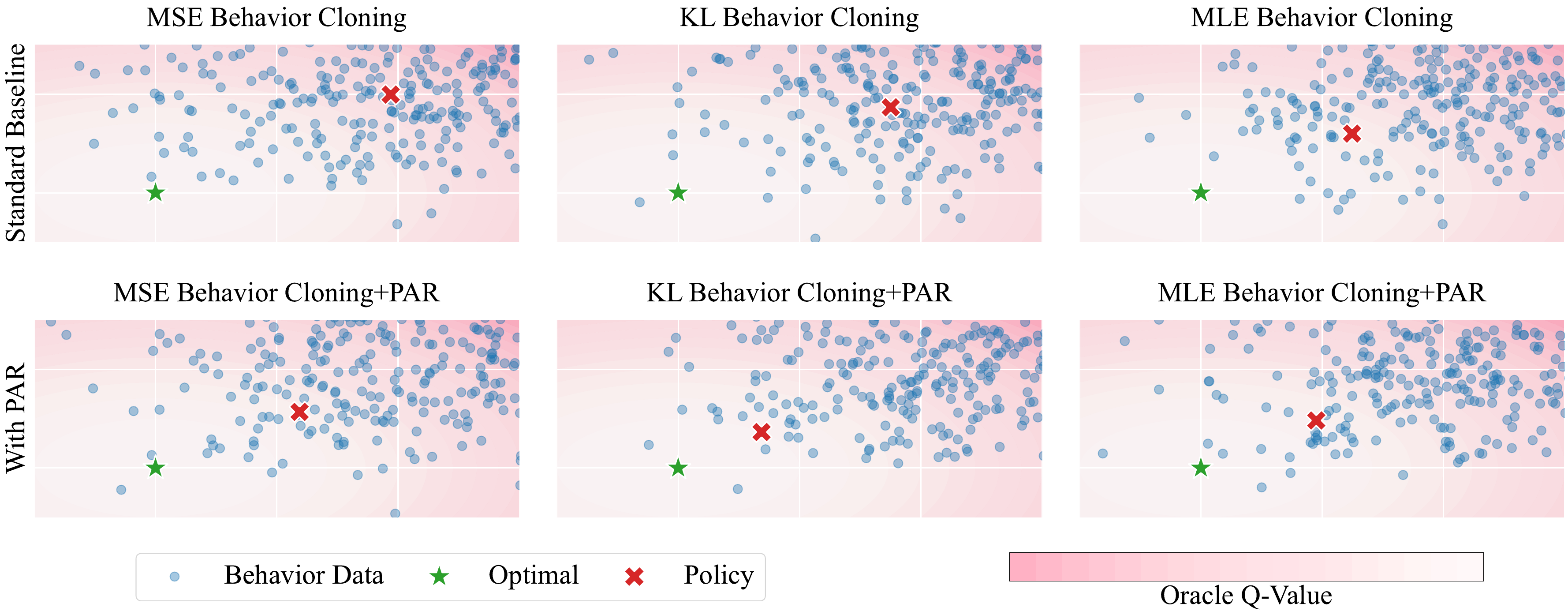}
		\caption{Offline RL experiments on a simple bandit task. The backbone algorithm is TD3+BC, where BC is set to three forms: MSE, KL, and MLE. The first row shows that behavior cloning loss prevents the learned policy from moving closer to the best policy. The second row shows that PAR can significantly alleviate this situation, making the learned policy closer to the optimal policy.}
		\label{fig:toy}
\end{figure}

Fig. \ref{fig:toy} shows the results of the toy experiment. Comparing the columns in the first row of Fig. \ref{fig:toy} reveals that the policy learned by MLE behavior cloning regularization is closer to the optimal policy than the policies learned by the other two behavior cloning regularizations, which may be due to the effect of the advantage function weighting. Nonetheless, the policy did not converge to the optimal action $\mathbf{a}^*$; instead, it converged to an intermediate location balanced between the optimal point and the center of the dataset cluster. This confirms that standard BC regularization structurally prevents the actor from fully exploiting the high-value regions identified by the critic, forcing a compromise between optimality and imitation.

\section{Proof of Theorem \ref{theo:GDAAR}}
\label{proof_of_GDAAR}
\begin{proof}
Fix a state $s$ and denote $a_{\theta}=\pi_{\theta}(s)$, $v=a-a_{\theta}$, $v'=\hat a-a_{\theta}$, and $g=\nabla_a Q(s,a)\vert_{a=a_{\theta}}$.
Under one BC-style actor step with step size $\eta>0$, write
$$
a_{\theta}^{+,\mathrm{BC}}=a_{\theta}+\eta v,\qquad
a_{\theta}^{+,\mathrm{GDAAR}}=a_{\theta}+\eta v'.
$$

Since $Q(s,\cdot)$ is $L$-smooth, for any displacement $\delta$,
$$
Q(s,a_{\theta}+\delta)\ge Q(s,a_{\theta})+g^\top\delta-\frac{L}{2}\|\delta\|_2^2,
$$
and
$$
Q(s,a_{\theta}+\delta)\le Q(s,a_{\theta})+g^\top\delta+\frac{L}{2}\|\delta\|_2^2.
$$
Applying the lower bound with $\delta=\eta v'$ and the upper bound with $\delta=\eta v$, we obtain
$$
\begin{aligned}
&Q(s,a_{\theta}^{+,\mathrm{GDAAR}})-Q(s,a_{\theta}^{+,\mathrm{BC}})\\
&\ge \eta\, g^\top(v'-v)-\frac{L\eta^2}{2}\left(\|v'\|_2^2+\|v\|_2^2\right).
\end{aligned}
$$

Now consider the first-order term. GDAAR selects $\hat a$ only when $\cos(g,v')>\cos(g,v)$. By definition,
$$
g^\top u=\|g\|_2\,\|u\|_2\,\cos(g,u),\qquad u\in\{v,v'\}.
$$
Motivated by the bounded nature of typical action spaces and the local constraint imposed by behavior cloning regularization, we assume local norm comparability in the form
$$
\|v'\|_2=(1+\xi)\|v\|_2,\qquad |\xi|=\mathcal{O}(\eta).
$$
Then
$$
g^\top(v'-v)
=\|g\|_2\,\|v\|_2\Big((1+\xi)\cos(g,v')-\cos(g,v)\Big)
\ge -C\eta
$$
for a local constant $C$, because $\cos(g,v')>\cos(g,v)$ and $|\xi|=\mathcal{O}(\eta)$.

Substituting into the previous inequality gives
$$
Q(s,a_{\theta}^{+,\mathrm{GDAAR}})-Q(s,a_{\theta}^{+,\mathrm{BC}})
\ge -\eta C\eta-\frac{L\eta^2}{2}\left(\|v'\|_2^2+\|v\|_2^2\right)
=-\mathcal{O}(\eta^2),
$$
hence
$$
Q(s,a_{\theta}^{+,\mathrm{GDAAR}})
\ge
Q(s,a_{\theta}^{+,\mathrm{BC}})-\mathcal{O}(\eta^2).
$$
This proves the claimed advantage up to second-order error.
\end{proof}

\section{Extension of Theorem \ref{theo:GDAAR}}
\label{extension_of_3_2}
\subsection{GDAAR for Vanilla KL Behavior Cloning}
\begin{theorem} \label{thm:KL_noninferiority}
\textnormal{\textbf{(Advantage Guarantee of GDAAR under KL Behavior Cloning)}} 
For a fixed state $s$, consider a Gaussian policy $\pi_{\theta}(\cdot|s)=\mathcal{N}(\mu_{\theta}(s),\sigma^2 I)$ with fixed $\sigma>0$ during one actor step. Let $\mu_{\theta}=\mu_{\theta}(s)$, $v=a-\mu_{\theta}$, $v'=\hat a-\mu_{\theta}$, and $g=\nabla_a Q(s,a)\vert_{a=\mu_{\theta}}$. Assume $Q(s,\cdot)$ is $L$-smooth, the actor uses a small KL-BC step, and the two candidate displacements are locally norm-comparable ($\|v'\|_2\approx\|v\|_2$). If GDAAR selects $\hat a$ (i.e., $\cos(g,v')>\cos(g,v)$), then the one-step value improvement under KL BC with GDAAR is no worse than vanilla KL BC up to second-order error:
\begin{equation}
Q(s,\mu_{\theta}^{+,\mathrm{GDAAR\text{-}KL}})
\ge
Q(s,\mu_{\theta}^{+,\mathrm{KL}})
-\mathcal{O}(\eta^2),
\end{equation}
where $\eta$ is the actor step size, and $\mu_{\theta}^{+,\mathrm{GDAAR\text{-}KL}}$ and $\mu_{\theta}^{+,\mathrm{KL}}$ denote one actor update step under GDAAR-KL and vanilla KL BC, respectively.
\end{theorem}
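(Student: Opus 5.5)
The plan is to reduce the KL case to the MSE argument of Theorem~\ref{theo:GDAAR}. The key observation is that, for a Gaussian policy with fixed isotropic covariance, one step of the KL-BC loss moves the mean $\mu_\theta$ along the displacement toward the BC label, with a positive scalar factor that does not depend on which label is chosen.

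First we need a concrete surrogate for the behavior policy when the label is a single action. We take $\pi_\beta(\cdot|s)=\mathcal{N}(a_{\mathrm{bc}},\sigma^2 I)$, the label-centered Gaussian with the same fixed $\sigma$; this is the sample-level KL surrogate used when BC is evaluated on a batch. Then
\begin{equation}
D_{\mathrm{KL}}\bigl(\mathcal{N}(\mu_\theta,\sigma^2 I)\,\|\,\mathcal{N}(a_{\mathrm{bc}},\sigma^2 I)\bigr)=\frac{\|\mu_\theta-a_{\mathrm{bc}}\|_2^2}{2\sigma^2},
\end{equation}
whose gradient with respect to $\mu_\theta$ is $-(a_{\mathrm{bc}}-\mu_\theta)/\sigma^2$. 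A gradient step in mean space with step size $\eta$ therefore gives $\mu_\theta^{+,\mathrm{KL}}=\mu_\theta+\tilde\eta v$ and $\mu_\theta^{+,\mathrm{GDAAR\text{-}KL}}=\mu_\theta+\tilde\eta v'$, with $\tilde\eta=\eta/\sigma^2$. The only difference between the two updates is the label, so the same $\tilde\eta$ appears in both. If the step is taken in parameter space instead, we linearize $\mu_\theta$ in $\theta$. The mean then moves by $\eta J J^\top(\cdot)/\sigma^2$, where $J$ is the Jacobian; we either assume the NTK-type matrix $JJ^\top$ is locally isotropic, or absorb it into the norm-comparability assumption, and treat the linearization remainder as $\mathcal{O}(\eta^2)$.

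Second, we repeat the argument of Theorem~\ref{theo:GDAAR} verbatim with $\delta=\tilde\eta v'$ and $\delta=\tilde\eta v$. $L$-smoothness of $Q(s,\cdot)$ at $\mu_\theta$ gives
\begin{equation}
Q(s,\mu_\theta^{+,\mathrm{GDAAR\text{-}KL}})-Q(s,\mu_\theta^{+,\mathrm{KL}})\ge \tilde\eta\, g^\top(v'-v)-\frac{L\tilde\eta^2}{2}\bigl(\|v'\|_2^2+\|v\|_2^2\bigr).
\end{equation}
We then write $g^\top u=\|g\|_2\|u\|_2\cos(g,u)$. Using the selection rule $\cos(g,v')>\cos(g,v)$ and the local norm comparability $\|v'\|_2=(1+\xi)\|v\|_2$ with $|\xi|=\mathcal{O}(\eta)$, we obtain $g^\top(v'-v)\ge -C\eta$. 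Since $\sigma$ is fixed, $\tilde\eta=\Theta(\eta)$, so every term on the right is $-\mathcal{O}(\eta^2)$, which gives the claim.

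The main obstacle is justifying the first step: that KL-BC acts on the mean exactly like an MSE pull toward the label. The Gaussian-label surrogate makes this hold exactly. If instead $\pi_\beta$ is a learned density, we would use the identity $\nabla_{\mu}D_{\mathrm{KL}}=-\mathbb{E}_{\pi_\theta}[\nabla_a\log\pi_\beta]$, and we would have to argue that replacing the target with $\hat a$ induces an update direction $v'$ up to $\mathcal{O}(\eta)$ corrections. For that case we would state the reduction under the surrogate and flag the general case as requiring this extra approximation.
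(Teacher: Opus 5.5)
Your proposal is correct and follows the paper's proof essentially line for line. The fixed-covariance Gaussian KL reduces to the pull $\frac{1}{2\sigma^2}\|\mu_\theta-m\|_2^2$, giving $\mu_\theta^{+}=\mu_\theta+\eta_{\mathrm{eff}}(m-\mu_\theta)$ with the same effective step for $m=a$ and $m=\hat a$, and then the $L$-smoothness bound and the $|\xi|=\mathcal{O}(\eta)$ norm-comparability argument of Theorem~\ref{theo:GDAAR} are reused verbatim. The label-centered Gaussian surrogate for $\pi_\beta$ and the mean-space step are what the paper assumes implicitly, and you make them explicit; but in your parameter-space aside only the isotropic-$JJ^\top$ option is sound, because an anisotropic $JJ^\top$ changes the update directions (the first-order term becomes $g^\top JJ^\top(v'-v)$, which the cosine rule does not control), so it cannot be absorbed into a norm-comparability condition.
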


\begin{proof}
Fix a state $s$. For KL behavior cloning with Gaussian policies sharing covariance $\sigma^2 I$, minimizing $D_{\mathrm{KL}}(\mathcal{N}(\mu,\sigma^2 I)\,\|\,\mathcal{N}(m,\sigma^2 I))$ is equivalent to minimizing $\frac{1}{2\sigma^2}\|\mu-m\|_2^2$. Hence, one gradient step on the KL term induces a mean update of the form
\begin{equation}
\mu^+ = \mu + \eta_{\mathrm{eff}}(m-\mu),
\end{equation}
for some effective step size $\eta_{\mathrm{eff}}>0$ proportional to $\eta$.

Under vanilla KL BC, the supervision center is $m=a$, so
\begin{equation}
\mu_{\theta}^{+,\mathrm{KL}}=\mu_{\theta}+\eta_{\mathrm{eff}} v.
\end{equation}
Under GDAAR-KL, when the replacement rule chooses $\hat a$, the supervision center becomes $m=\hat a$, so
\begin{equation}
\mu_{\theta}^{+,\mathrm{GDAAR\text{-}KL}}=\mu_{\theta}+\eta_{\mathrm{eff}} v'.
\end{equation}

Now apply the same smoothness argument as in Theorem \ref{theo:GDAAR} at $a=\mu_{\theta}$:
\begin{equation}
Q(s,\mu_{\theta}^{+,\mathrm{GDAAR\text{-}KL}})-Q(s,\mu_{\theta}^{+,\mathrm{KL}})
\ge
\eta_{\mathrm{eff}}\, g^\top(v'-v)
-\frac{L\eta_{\mathrm{eff}}^2}{2}\left(\|v'\|_2^2+\|v\|_2^2\right).
\end{equation}
Because $\cos(g,v')>\cos(g,v)$ and $\|v'\|_2\approx\|v\|_2$ locally, the first-order term is non-inferior up to $\mathcal{O}(\eta_{\mathrm{eff}})$ perturbation, while the smoothness remainder is $\mathcal{O}(\eta_{\mathrm{eff}}^2)$. Therefore,
\begin{equation}
Q(s,\mu_{\theta}^{+,\mathrm{GDAAR\text{-}KL}})
\ge
Q(s,\mu_{\theta}^{+,\mathrm{KL}})
-\mathcal{O}(\eta_{\mathrm{eff}}^2)
=
Q(s,\mu_{\theta}^{+,\mathrm{KL}})
-\mathcal{O}(\eta^2).
\end{equation}
This proves the claimed advantage up to second-order error.
\end{proof}

\subsection{GDAAR for Vanilla MLE Behavior Cloning}
\begin{theorem} \label{thm:MLE_noninferiority}
\textnormal{\textbf{(Advantage Guarantee of GDAAR under MLE Behavior Cloning)}}
For a fixed state $s$, consider a Gaussian policy $\pi_{\theta}(\cdot|s)=\mathcal{N}(\mu_{\theta}(s),\sigma^2 I)$ with fixed $\sigma>0$ during one actor step. Let $\mu_{\theta}=\mu_{\theta}(s)$, $v=a-\mu_{\theta}$, $v'=\hat a-\mu_{\theta}$, and $g=\nabla_a Q(s,a)\vert_{a=\mu_{\theta}}$. Assume $Q(s,\cdot)$ is $L$-smooth, the actor uses a small weighted MLE-BC step with bounded positive weight $w(s,\cdot)$, and the two candidate displacements are locally norm-comparable ($\|v'\|_2\approx\|v\|_2$). If GDAAR selects $\hat a$ (i.e., $\cos(g,v')>\cos(g,v)$), then the one-step value improvement under MLE BC with GDAAR is no worse than vanilla MLE BC up to second-order error:
\begin{equation}
Q(s,\mu_{\theta}^{+,\mathrm{GDAAR\text{-}MLE}})
\ge
Q(s,\mu_{\theta}^{+,\mathrm{MLE}})
-\mathcal{O}(\eta^2),
\end{equation}
where $\eta$ is the actor step size, and $\mu_{\theta}^{+,\mathrm{GDAAR\text{-}MLE}}$ and $\mu_{\theta}^{+,\mathrm{MLE}}$ denote one actor update step under GDAAR-MLE and vanilla MLE BC, respectively.
\end{theorem}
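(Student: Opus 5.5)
The plan is to reduce the weighted MLE case to the MSE argument of Theorem \ref{theo:GDAAR}, as was done for the KL case in Theorem \ref{thm:KL_noninferiority}. For a Gaussian policy $\mathcal{N}(\mu_\theta(s),\sigma^2 I)$ with fixed $\sigma$, the per-sample weighted MLE loss is
\[
-w(s,m)\log\pi_\theta(m|s)=\frac{w(s,m)}{2\sigma^2}\|\mu_\theta(s)-m\|_2^2+\text{const},
\]
where $m$ is the supervision label. So one gradient step on the mean gives
\[
\mu^+=\mu_\theta+\eta\,\frac{w(s,m)}{\sigma^2}(m-\mu_\theta).
\]
With $m=a$ (vanilla MLE) this is $\mu_\theta^{+,\mathrm{MLE}}=\mu_\theta+\eta_a v$, where $\eta_a=\eta\,w(s,a)/\sigma^2$. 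With $m=\hat a$ (GDAAR selects $\hat a$) it is $\mu_\theta^{+,\mathrm{GDAAR\text{-}MLE}}=\mu_\theta+\eta_{\hat a} v'$, where $\eta_{\hat a}=\eta\,w(s,\hat a)/\sigma^2$. Because $w$ is bounded and positive and $\sigma$ is fixed, both effective step sizes are $\Theta(\eta)$.

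Next, I apply $L$-smoothness at $\mu_\theta$ exactly as in the proof of Theorem \ref{theo:GDAAR}: the lower quadratic bound for the GDAAR step and the upper quadratic bound for the vanilla step. This gives
\[
Q(s,\mu^{+,\mathrm{GDAAR\text{-}MLE}})-Q(s,\mu^{+,\mathrm{MLE}})\ge g^\top(\eta_{\hat a}v'-\eta_a v)-\frac{L}{2}\big(\eta_{\hat a}^2\|v'\|_2^2+\eta_a^2\|v\|_2^2\big).
\]
The second term is $\mathcal{O}(\eta^2)$ because the weights are bounded and the displacements lie in a bounded action space.

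The main obstacle is the first-order term, because the two steps now carry different weights $w(s,\hat a)\neq w(s,a)$. In the MSE proof the term reduced to $\eta\,g^\top(v'-v)$, which the cosine comparison controls. Here it becomes
\[
\frac{\eta}{\sigma^2}\|g\|_2\big(w(s,\hat a)\|v'\|_2\cos(g,v')-w(s,a)\|v\|_2\cos(g,v)\big).
\]
To handle this I will strengthen the norm-comparability assumption to weighted comparability,
\[
w(s,\hat a)\|v'\|_2=(1+\xi)\,w(s,a)\|v\|_2,\qquad |\xi|=\mathcal{O}(\eta).
\]
This is natural for the weighted form because the step displacement is $w\cdot u$. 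Alternatively, I can note that in the IQL-style implementation the GDAAR label inherits the weight of the sample it replaces, so the two weights coincide locally. Under either choice the bracket equals $w(s,a)\|v\|_2\big((1+\xi)\cos(g,v')-\cos(g,v)\big)$, which is at least $-C\eta$ because $\cos(g,v')>\cos(g,v)$. Multiplying by the $\mathcal{O}(\eta)$ prefactor gives $-\mathcal{O}(\eta^2)$.

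Combining the first-order bound with the smoothness remainder yields the claimed inequality. If the weights cannot be matched, the fallback is to state the result with the step sizes $\eta_a$ and $\eta_{\hat a}$ absorbed into a common effective step, mirroring the KL proof's $\eta_{\mathrm{eff}}$, and to note explicitly that the argument needs this comparability.
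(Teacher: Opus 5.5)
Your argument is the paper's own: reduce the fixed-$\sigma$ Gaussian weighted MLE loss to the quadratic $\frac{w(s,m)}{2\sigma^2}\|\mu_\theta-m\|_2^2$, write the two mean updates, apply the lower and upper $L$-smoothness bounds at $\mu_\theta$, and control $g^\top(\cdot)$ through the cosine comparison with a $(1+\mathcal{O}(\eta))$ norm ratio. The one difference is to your credit. The paper folds the weights into a single $\eta_{\mathrm{eff}}$ shared by both updates, which tacitly assumes $w(s,\hat a)=w(s,a)$; your weighted comparability $w(s,\hat a)\|v'\|_2=(1+\xi)\,w(s,a)\|v\|_2$ states this explicitly, and some such assumption is genuinely needed, since for linear $Q$ with $\|v'\|_2=\|v\|_2$, $\cos(g,v')>\cos(g,v)$ and $w(s,a)\cos(g,v)>w(s,\hat a)\cos(g,v')$ the gap is $-\Theta(\eta)$ (so keep it as a stated hypothesis rather than relying on the claim, not made in the paper, that the replaced label inherits $w(s,a)$).
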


\begin{proof}
Fix a state $s$. Under Gaussian policy $\pi_{\theta}(\cdot|s)=\mathcal{N}(\mu_{\theta}(s),\sigma^2 I)$, the weighted MLE loss for a supervision target $m$ is
\begin{equation}
\mathcal{L}_{\mathrm{MLE}}(\mu_{\theta};m)= -w(s,m)\log \pi_{\theta}(m|s).
\end{equation}
For fixed $\sigma$, this is equivalent (up to constants) to
\begin{equation}
\mathcal{L}_{\mathrm{MLE}}(\mu_{\theta};m)\equiv \frac{w(s,m)}{2\sigma^2}\|\mu_{\theta}-m\|_2^2.
\end{equation}
Hence, one gradient step induces a mean update
\begin{equation}
\mu^+ = \mu + \eta_{\mathrm{eff}}(m-\mu),
\end{equation}
where $\eta_{\mathrm{eff}}>0$ absorbs $\eta$, $\sigma^{-2}$, and bounded positive weight factors.

Under vanilla MLE BC, $m=a$, so
\begin{equation}
\mu_{\theta}^{+,\mathrm{MLE}}=\mu_{\theta}+\eta_{\mathrm{eff}} v.
\end{equation}
Under GDAAR-MLE, when the replacement rule chooses $\hat a$, $m=\hat a$, so
\begin{equation}
\mu_{\theta}^{+,\mathrm{GDAAR\text{-}MLE}}=\mu_{\theta}+\eta_{\mathrm{eff}} v'.
\end{equation}

Applying $L$-smoothness of $Q(s,\cdot)$ at $a=\mu_{\theta}$ yields
\begin{equation}
Q(s,\mu_{\theta}^{+,\mathrm{GDAAR\text{-}MLE}})-Q(s,\mu_{\theta}^{+,\mathrm{MLE}})
\ge
\eta_{\mathrm{eff}}\, g^\top(v'-v)
-\frac{L\eta_{\mathrm{eff}}^2}{2}\left(\|v'\|_2^2+\|v\|_2^2\right).
\end{equation}
Because $\cos(g,v')>\cos(g,v)$ and $\|v'\|_2\approx\|v\|_2$ locally, the first-order term is non-inferior up to $\mathcal{O}(\eta_{\mathrm{eff}})$ perturbation, while the second-order remainder is $\mathcal{O}(\eta_{\mathrm{eff}}^2)$. Therefore,
\begin{equation}
Q(s,\mu_{\theta}^{+,\mathrm{GDAAR\text{-}MLE}})
\ge
Q(s,\mu_{\theta}^{+,\mathrm{MLE}})
-\mathcal{O}(\eta_{\mathrm{eff}}^2)
=
Q(s,\mu_{\theta}^{+,\mathrm{MLE}})
-\mathcal{O}(\eta^2).
\end{equation}
This proves the claimed advantage up to second-order error.
\end{proof}

\section{Proof of Theorem \ref{theo:pao}}
\label{proof_of_proximal}
\begin{proof}
	The Mean Squared Error (MSE) loss for the critic is defined as $\mathcal{L}(Q) = \mathbb{E}_{(s,a)\sim\mathcal{D}}[(Q(s,a) - y)^2]$. By adding and subtracting the conditional expectation $\mathbb{E}[y|s,a]$, the loss can be decomposed as:
	\begin{equation}
		\mathcal{L}(Q) = \mathbb{E}_{(s,a)\sim\mathcal{D}} \left[ \underbrace{(Q(s,a) - \mathbb{E}[y|s,a])^2}_{\ge 0} + \text{Var}(y|s,a) \right].
	\end{equation}
	Since the first term is non-negative, the loss is minimized when $Q(s,a) = \mathbb{E}[y|s,a]$, and strictly lower-bounded by the conditional variance of the target $y$:
	\begin{equation}
		\min_Q \mathcal{L}(Q) = \mathbb{E}_{(s,a) \sim \mathcal{D}} [\text{Var}(y | s,a)].
	\end{equation}
	The target $y$ is computed using the current policy $\pi_{\theta}$: $y = r + \gamma Q(s', \pi_{\theta}(s'))$. Since the training data is generated by the behavior policy $\pi_{\beta}$, the ``on-distribution'' target should theoretically be $y_{\beta} = r + \gamma Q(s', \pi_{\beta}(s'))$. We can express the actual target $y$ as the on-distribution target plus an extrapolation error term $\Delta(s')$:
	\begin{equation}
		y = r + \gamma Q(s', \pi_{\beta}(s')) + \gamma \underbrace{\left( Q(s', \pi_{\theta}(s')) - Q(s', \pi_{\beta}(s')) \right)}_{\Delta(s', \pi_{\theta}, \pi_{\beta})} = y_{\beta} + \gamma \Delta(s').
	\end{equation}
	Applying the variance operator conditioned on $(s,a)$:
	\begin{equation}
		\text{Var}(y | s,a) = \text{Var}(y_{\beta} | s,a) + \gamma^2 \text{Var}(\Delta(s') | s,a) + 2\gamma \text{Cov}(y_{\beta}, \Delta(s') | s,a).
	\end{equation}
	The first term $\text{Var}(y_{\beta})$ represents the intrinsic aleatoric uncertainty of the environment, which is irreducible but bounded. The second term $\text{Var}(\Delta(s'))$ represents the epistemic uncertainty arising from querying the Q-network on actions $\pi_{\theta}(s')$ that deviate from the training distribution $\pi_{\beta}(s')$. Due to the following Cauchy-Schwarz inequality, the second term (quadratic in $\text{Var}(\Delta(s'))$) dominates the third term (the covariance):
	\begin{equation}
		|2\gamma \text{Cov}(y_{\beta}, \Delta(s') | s,a)| \le 2\gamma \sqrt{\text{Var}(y_{\beta} | s,a) \cdot \text{Var}(\Delta(s') | s,a)}.
	\end{equation}
	Specifically, since $\text{Var}(y_{\beta} | s,a)$ is bounded, when $\text{Var}(\Delta(s') | s,a)$ is sufficiently large (i.e., when policy divergence is large), the quadratic term $\gamma^2 \text{Var}(\Delta(s'))$ dominates the linear term $2\gamma \sqrt{\text{Var}(y_{\beta}) \cdot \text{Var}(\Delta)}$, establishing the relationship between critic loss and policy divergence.
	
	By the mean value theorem, since $Q(s', \cdot)$ is differentiable with respect to the action $a$, there exists $\bar{a}$ between $\pi_{\beta}(s')$ and $\pi_{\theta}(s')$ such that:
	\begin{equation}
		\Delta(s') = \nabla_a Q(s', \bar{a})^\top (\pi_{\theta}(s') - \pi_{\beta}(s')).
	\end{equation}
	During training, the critic learns a non-trivial value landscape with non-zero gradients in relevant directions. The variance of this term is lower-bounded by the squared magnitude of the action divergence via the Rayleigh quotient property:
	\begin{equation}
		\text{Var}(\Delta(s') | s,a) = \mathbb{E}_{s'|s,a} \left[ (\pi_{\theta} - \pi_{\beta})^\top \text{Cov}(\nabla_a Q(s', \bar{a})) (\pi_{\theta} - \pi_{\beta}) \right] \ge \mu \cdot \mathbb{E}_{s'|s,a} \left[ \|\pi_{\theta}(s') - \pi_{\beta}(s')\|_2^2 \right],
	\end{equation}
	where $\mu$ is the minimum eigenvalue of $\text{Cov}(\nabla_a Q(s', \bar{a}))$ over the relevant action space.
	Substituting this back into the lower bound:
	\begin{equation}
		\min_Q \mathcal{L}(Q) \ge \text{const} + \mu \cdot \mathbb{E}_{(s,a)\sim\mathcal{D}} \mathbb{E}_{s'|s,a} \left[ \|\pi_{\theta}(s') - \pi_{\beta}(s')\|_2^2 \right].
	\end{equation}
	Thus, when the actor $\pi_{\theta}$ significantly deviates from the behavior policy $\pi_{\beta}$ (i.e., $\|\pi_{\theta} - \pi_{\beta}\|$ is large), the lower bound of the critic loss explodes, causing training instability.
\end{proof}

\section{Proof of Theorem \ref{theo:GSOAW}}
\label{proof_of_GSOAW}
\begin{proof}
Fix iteration $t$. Define the critic loss under policy $\pi$ as
\begin{equation}
	\mathcal{L}_t(Q_{\phi_t};\pi)
	=
	\mathbb{E}_{(s,a)\sim\mathcal D}\!\left[
	\big(Q_{\phi_t}(s,a)-\big(r+\gamma Q_{\phi_t}(s',\pi(s'))\big)\big)^2
	\right].
\end{equation}
Let
\begin{equation}
z_t(s,a,s'):=Q_{\phi_t}(s,a)-\big(r+\gamma Q_{\phi_t}(s',\pi_\beta(s'))\big),
\end{equation}
\begin{equation}
\Delta_t(s'):=Q_{\phi_t}(s',\pi_{\theta_{t+1}}(s'))-Q_{\phi_t}(s',\pi_\beta(s')).
\end{equation}
Then
\begin{equation}
\mathcal{L}_t(Q_{\phi_t};\pi_{\theta_{t+1}})
=
\mathbb{E}\!\left[\big(z_t-\gamma\Delta_t\big)^2\right],
\quad
\mathcal{L}_t(Q_{\phi_t};\pi_{\beta})
=
\mathbb{E}\!\left[z_t^2\right],
\end{equation}
so the loss difference is
\begin{equation}
\label{eq:gsoaw_loss_diff_expand}
\begin{aligned}
\Delta\mathcal{L}_t
&:=\mathcal{L}_t(Q_{\phi_t};\pi_{\theta_{t+1}})-\mathcal{L}_t(Q_{\phi_t};\pi_{\beta}) \\
&=\mathbb{E}\!\left[\big(z_t-\gamma\Delta_t\big)^2-z_t^2\right] \\
&=\mathbb{E}\!\left[-2\gamma z_t\Delta_t+\gamma^2\Delta_t^2\right].
\end{aligned}
\end{equation}
Assume $|z_t|\le B_t$ almost surely at iteration $t$. Then
\begin{equation}
\label{eq:gsoaw_loss_diff_abs}
|\Delta\mathcal{L}_t|
\le
2\gamma B_t\,\mathbb{E}[|\Delta_t|]+\gamma^2\mathbb{E}[\Delta_t^2].
\end{equation}

Now use the GSOAW weight in Eq.~\eqref{eq:ood_weight}. For generated sample $i$, define
\begin{equation}
d_i:=|Q_{\phi_t}(s_i,\hat a_i)-V(s_i)|,\qquad
w_i:=\exp\!\left(-\frac{d_i}{\tau\,\sigma(s_i)+\varepsilon}\right)\in(0,1].
\end{equation}

Define $u_i\ge0$ as the displacement magnitude that would result from a
generated action if the weight were $1$ (i.e., from one unweighted BC
step on $\hat a_i$), and let
$\bar d = \max_{s\in\mathcal D}\,\|\pi_\theta(s)-\pi_\beta(s)\|_2$
be the maximal displacement that a single real-action BC step can incur.
Since the real-action BC term is unweighted and the critic weight
$w_i\le1$, the triangle inequality gives
$$
\|\pi_{\theta_{t+1}}(s_i')-\pi_\beta(s_i')\|_2
\le w_i u_i + \bar d,
$$
where $\xi_t:=\bar d$ absorbs the real-action drift. Assume $Q_{\phi_t}(s',a)$ is $L_Q$-Lipschitz in $a$. Hence
\begin{equation}
\label{eq:gsoaw_delta_bound}
|\Delta_t(s_i')|
\le
L_Q\|\pi_{\theta_{t+1}}(s_i')-\pi_\beta(s_i')\|_2
\le
L_Q\,(w_i u_i+\xi_t).
\end{equation}
Taking expectation in Eq.~\eqref{eq:gsoaw_delta_bound},
\begin{equation}
\mathbb{E}[|\Delta_t|]
\le
L_Q\,\mathbb{E}[w_i u_i+\xi_t],
\qquad
\mathbb{E}[\Delta_t^2]
\le
L_Q^2\,\mathbb{E}[(w_i u_i+\xi_t)^2].
\end{equation}
Substitute into Eq.~\eqref{eq:gsoaw_loss_diff_abs}:
\begin{multline}
\label{eq:gsoaw_final_bound}
\left|\mathcal{L}_t(Q_{\phi_t};\pi_{\theta_{t+1}})-\mathcal{L}_t(Q_{\phi_t};\pi_\beta)\right|
\le
2\gamma B_t L_Q\,\mathbb{E}[w_i u_i+\xi_t]
\;+\;
\gamma^2 L_Q^2\,\mathbb{E}[(w_i u_i+\xi_t)^2].
\end{multline}
Because $0\le w_i\le 1$, we have $w_i u_i\le u_i$ and $(w_i u_i)^2\le u_i^2$. Under $\mathbb{E}[u_i],\mathbb{E}[u_i^2]<\infty$, dominated convergence gives
\begin{equation}
\mathbb{E}[w_i u_i]\xrightarrow[\tau\to 0^+]{}0,\qquad
\mathbb{E}[(w_i u_i)^2]\xrightarrow[\tau\to 0^+]{}0.
\end{equation}
Hence, taking $\tau\to 0^+$ in Eq.~\eqref{eq:gsoaw_final_bound},
\begin{equation}
\limsup_{\tau\to 0^+}
\left|\mathcal{L}_t(Q_{\phi_t};\pi_{\theta_{t+1}})-\mathcal{L}_t(Q_{\phi_t};\pi_\beta)\right|
\le
2\gamma B_t L_Q\,\mathbb{E}[\xi_t]
\;+\;
\gamma^2 L_Q^2\,\mathbb{E}[\xi_t^2].
\end{equation}
Note that $\xi_t=\bar d$ is bounded and vanishes as the real-action
BC gradient step size tends to $0$; hence, by choosing a sufficiently
small BC step (or equivalently, a sufficiently large number of BC
repetitions with small weight), we can ensure
$2\gamma B_t L_Q\,\xi_t + \gamma^2 L_Q^2 \xi_t^2 \le \epsilon/2$. 
By the definition of limsup, there exists $\tau'$ > 0 such that for all $\tau < \tau'$,
the right-hand side of Eq.~\eqref{eq:gsoaw_final_bound}
is at most
\begin{equation}
\left|\mathcal{L}_t(Q_{\phi_t};\pi_{\theta_{t+1}})-\mathcal{L}_t(Q_{\phi_t};\pi_\beta)\right|
\le \epsilon.
\end{equation}
This completes the proof.
\end{proof}
\section{Extension of Theorem \ref{theo:GSOAW}}
\label{extension_of_3_4}
\subsection{Stability Guarantee of GSOAW under KL Behavior Cloning}
\begin{theorem}
\label{cor:gsoaw_kl}
\textnormal{\textbf{(Stability Guarantee of GSOAW under KL Behavior Cloning)}}
Consider the same setting as Theorem~\ref{theo:GSOAW}. Assume the policy is Gaussian, $\pi_\theta(\cdot|s)=\mathcal N(\mu_\theta(s),\sigma^2 I)$ with fixed $\sigma>0$ during one actor step, and the KL-BC term is
\begin{equation}
\mathcal{L}_{\mathrm{KL}}(\theta)=\mathbb E_{s\sim\mathcal D}\!\left[D_{\mathrm{KL}}\!\left(\pi_\theta(\cdot|s)\,\|\,\pi_\beta(\cdot|s)\right)\right].
\end{equation}
Then for any $\epsilon>0$, there exists $\tau>0$ such that
\begin{equation}
\left|\mathcal{L}_t(Q_{\phi_t};\pi_{\theta_{t+1}})-\mathcal{L}_t(Q_{\phi_t};\pi_\beta)\right|\le\epsilon.
\end{equation}
\end{theorem}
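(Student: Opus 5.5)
The plan is to reduce the KL case to the MSE case already handled in Theorem~\ref{theo:GSOAW}, and then reuse its proof essentially line by line. For a fixed step with shared covariance $\sigma^2 I$, the KL-BC term behaves like a scaled squared distance between means. Writing $\pi_\beta(\cdot|s)=\mathcal N(m_\beta(s),\sigma^2 I)$, as in the proof of Theorem~\ref{thm:KL_noninferiority}, we have
\[
D_{\mathrm{KL}}\bigl(\mathcal N(\mu_\theta(s),\sigma^2 I)\,\|\,\mathcal N(m_\beta(s),\sigma^2 I)\bigr)=\frac{1}{2\sigma^2}\|\mu_\theta(s)-m_\beta(s)\|_2^2 .
\]
So one KL-BC step on a supervision center $m$ induces the mean update $\mu^+=\mu+\eta_{\mathrm{eff}}(m-\mu)$ with $\eta_{\mathrm{eff}}\propto\eta/\sigma^2$. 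With GSOAW, when the center is the generated action $\hat a_i$, the KL term is multiplied by $w_i$, and the induced mean displacement is $w_i$ times the unweighted one. When the center comes from real data, $w\equiv1$. Throughout, the critic target is evaluated at the policy mean, $\pi(s')=\mu_\theta(s')$ (the deterministic evaluation action), so that $\mathcal L_t(Q_{\phi_t};\pi)$ has the same form as before.

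With this reduction, I would repeat the decomposition of Theorem~\ref{theo:GSOAW}. Define $z_t$ as before and
\[
\Delta_t(s')=Q_{\phi_t}\bigl(s',\mu_{\theta_{t+1}}(s')\bigr)-Q_{\phi_t}\bigl(s',m_\beta(s')\bigr).
\]
Expanding the square gives
\[
|\Delta\mathcal L_t|\le 2\gamma B_t\,\mathbb E|\Delta_t|+\gamma^2\,\mathbb E[\Delta_t^2].
\]
Using $L_Q$-Lipschitzness of $Q_{\phi_t}(s',\cdot)$ together with the triangle inequality on the mean displacement gives
\[
|\Delta_t(s_i')|\le L_Q\,(w_i u_i+\xi_t),
\]
where $u_i$ is the mean shift an unweighted KL step toward $\hat a_i$ would cause, and $\xi_t$ bounds the drift from the unweighted real-action KL steps. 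Both $u_i$ and $\xi_t$ scale with $\eta_{\mathrm{eff}}$.

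To finish, note that $w_i\le1$, so dominated convergence (assuming $\mathbb E[u_i],\mathbb E[u_i^2]<\infty$) gives $\mathbb E[w_iu_i]\to0$ and $\mathbb E[(w_iu_i)^2]\to0$ as $\tau\to0^+$. Here $w_i\to0$ pointwise whenever $d_i>0$. Next, choose the real-action step small enough that the $\xi_t$ contribution is at most $\epsilon/2$. Then pick $\tau$ small enough that the remaining terms are at most $\epsilon/2$.

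I expect two obstacles. The first is the event $d_i=0$, where $w_i=1$ for every $\tau$. I would handle it as the MSE proof implicitly does: assume it has measure zero, or note that if $Q(s,\hat a)=V(s)$ then the displacement is already in-distribution and can be absorbed into $\xi_t$. The second is justifying that the KL step leaves $\sigma$ fixed and acts only through the mean. This holds by the stated assumption that $\sigma$ is fixed during one actor step.
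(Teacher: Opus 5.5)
Your reduction matches the paper's proof step for step. The shared-covariance identity turns the KL term into $\frac{1}{2\sigma^2}\|\mu_\theta(s)-m\|_2^2$, so a $w_i$-weighted KL step moves the mean like a weighted MSE step, and the proof of Theorem~\ref{theo:GSOAW} is then rerun. However, two steps fail as written, and the paper's proof has the same two holes, so following it does not close them.

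\textbf{The missing offset.} Your bound $\|\mu_{\theta_{t+1}}(s')-m_\beta(s')\|_2\le w_iu_i+\xi_t$, with $\xi_t$ the real-action drift scaling with $\eta_{\mathrm{eff}}$, drops the offset the actor already has before the step. The triangle inequality actually gives
\[
\|\mu_{\theta_{t+1}}(s')-m_\beta(s')\|_2\le\|\mu_{\theta_t}(s')-m_\beta(s')\|_2+w_iu_i+\xi_t+\|\delta^Q_t(s')\|_2 ,
\]
where $\delta^Q_t$ is the displacement caused by the actor's $Q$-maximization term, which $w_i$ does not touch. The last term shrinks with the step size. The first is fixed at iteration $t$, independent of the current step and of $\tau$, and is generically nonzero: PAR is switched on only after half of training, when the actor has already moved away from the data.

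So your argument only yields $\limsup|\Delta\mathcal L_t|\le 2\gamma B_tL_QD_t+\gamma^2L_Q^2D_t^2$ with $D_t:=\sup_{s'}\|\mu_{\theta_t}(s')-m_\beta(s')\|_2$, not an arbitrary $\epsilon$. To recover the statement you must either assume $\mu_{\theta_t}=m_\beta$ on the next-state support, or compare against $\pi_{\theta_t}$ rather than $\pi_\beta$. The paper hides the same term in $\xi_t=\bar d=\max_s\|\pi_\theta(s)-\pi_\beta(s)\|_2$ and then asserts that it vanishes with the step size.

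Note also that the statement lets you choose only $\tau$, but your proof also shrinks the step. With the step fixed, $\xi_t$ and $\delta^Q_t$ are fixed amounts that no choice of $\tau$ removes. If you may shrink the step, the generated-action term vanishes as well, and $\tau$ plays no role.

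\textbf{The limit of the weight.} The claim that $w_i\to0$ as $\tau\to0^+$ is false for the weight in Eq.~\eqref{eq:ood_weight}. The stabilizer $\varepsilon>0$ stays in the denominator, so $w_i\to e^{-d_i/\varepsilon}>0$. Dominated convergence then only gives $\mathbb{E}[w_iu_i]\to\mathbb{E}[e^{-d_i/\varepsilon}u_i]$. You need to run the limit with $\varepsilon=0$, which is harmless since the softplus keeps $\sigma_\psi(s)>0$, or send $\varepsilon\to0$ jointly with $\tau$. You also still need your measure-zero assumption on $\{d_i=0\}$.

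Your fallback for that event does not work. $Q(s,\hat a)=V(s)$ says nothing about how far $\hat a$ is from the data in action space, which is what $u_i$ measures. An out-of-distribution $\hat a$ on the level set $\{Q(s,\cdot)=V(s)\}$ keeps weight $1$ for every $\tau$.
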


\begin{proof}
Under fixed covariance Gaussian policies, minimizing
$D_{\mathrm{KL}}(\mathcal N(\mu_\theta,\sigma^2 I)\,\|\,\mathcal N(m,\sigma^2 I))$
is equivalent to minimizing
$\frac{1}{2\sigma^2}\|\mu_\theta-m\|_2^2$.
Hence one small KL-BC actor step induces a mean update with the same local displacement form as weighted MSE BC, i.e., generated-action contribution is scaled by $w_i$ and real-action contribution stays unweighted around the behavior anchor.

Therefore, there exist $u_i\ge0$ and residual $\xi_t\ge0$ such that
\begin{equation}
\|\pi_{\theta_{t+1}}(s_i')-\pi_\beta(s_i')\|_2 \le w_i u_i+\xi_t,
\end{equation}
which is exactly the key bound used in Eq.~\eqref{eq:gsoaw_delta_bound}. The remainder follows identically to Theorem~\ref{theo:GSOAW}: by $L_Q$-Lipschitz continuity in action and Eq.~\eqref{eq:gsoaw_loss_diff_abs}--Eq.~\eqref{eq:gsoaw_final_bound},
\begin{equation}
\left|\mathcal{L}_t(Q_{\phi_t};\pi_{\theta_{t+1}})-\mathcal{L}_t(Q_{\phi_t};\pi_\beta)\right|
\le
2\gamma B_t L_Q\,\mathbb{E}[w_i u_i+\xi_t]
\;+\;
\gamma^2L_Q^2\,\mathbb{E}[(w_i u_i+\xi_t)^2].
\end{equation}
As $\tau\to0^+$, $w_i\to0$ on generated OOD samples and dominated convergence yields
$\mathbb E[w_i u_i]\to0$, $\mathbb E[(w_i u_i)^2]\to0$.
Choosing sufficiently small actor step (so $\xi_t$ is small) and then sufficiently small $\tau$ gives the desired $\epsilon$-bound.
\end{proof}

\subsection{Stability Guarantee of GSOAW under MLE Behavior Cloning}
\begin{theorem}
\label{cor:gsoaw_mle}
\textnormal{\textbf{(Stability Guarantee of GSOAW under MLE Behavior Cloning)}}
Consider the same setting as Theorem~\ref{theo:GSOAW}. Assume the policy is Gaussian, $\pi_\theta(\cdot|s)=\mathcal N(\mu_\theta(s),\sigma^2 I)$ with fixed $\sigma>0$ during one actor step, and the weighted MLE-BC term is
\begin{equation}
\mathcal{L}_{\mathrm{MLE}}(\theta)=\mathbb E_{(s,a)\sim\mathcal D}\!\left[-\omega(s,a)\log \pi_\theta(a|s)\right],
\end{equation}
where $\omega(s,a)>0$ is bounded. Then for any $\epsilon>0$, there exists $\tau>0$ such that
\begin{equation}
\left|\mathcal{L}_t(Q_{\phi_t};\pi_{\theta_{t+1}})-\mathcal{L}_t(Q_{\phi_t};\pi_\beta)\right|\le\epsilon.
\end{equation}
\end{theorem}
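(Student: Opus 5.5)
The plan is to reduce this case to the displacement bound already used for Theorem~\ref{theo:GSOAW} and for Theorem~\ref{cor:gsoaw_kl}, and then reuse the rest of that proof unchanged.

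\textbf{Step 1: MLE becomes a weighted squared loss on the mean.} Fix a state $s$ and a supervision target $m\in\{a,\hat a\}$. With covariance fixed at $\sigma^2 I$, the term $-\omega(s,m)\log\pi_\theta(m|s)$ equals $\frac{\omega(s,m)}{2\sigma^2}\|\mu_\theta(s)-m\|_2^2$ plus a constant independent of $\theta$. This is the same identity used for Theorem~\ref{thm:MLE_noninferiority}. With GSOAW, the loss on a generated label $\hat a_i$ picks up the additional factor $w_i$, as in the IQL+PAR implementation where the advantage weight is multiplied by the GSOAW weight. The effective coefficient on $\hat a_i$ is therefore $w_i\,\omega(s_i,\hat a_i)/\sigma^2$, and on real actions it is $\omega(s,a)/\sigma^2$.

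\textbf{Step 2: one-step displacement bound.} A single small actor step moves the mean by $\eta\,\omega\,\sigma^{-2}(m-\mu_\theta)$, propagated through the network. Because $\omega$ is bounded, say $\omega\le\bar\omega$, I can set
\[
u_i:=\eta\,\bar\omega\,\sigma^{-2}\,\|\hat a_i-\mu_\theta(s_i)\|_2 .
\]
This is the displacement an unweighted generated label would cause, up to a Jacobian constant. I can likewise set $\xi_t$ to be the bounded drift caused by the real-action MLE term; it scales with $\eta\bar\omega/\sigma^2$. The triangle inequality then gives
\[
\|\pi_{\theta_{t+1}}(s_i')-\pi_\beta(s_i')\|_2\le w_i u_i+\xi_t,
\]
which is exactly the key inequality Eq.~\eqref{eq:gsoaw_delta_bound}. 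Here $\pi_\theta(s')$ means the mean action, so that the Bellman target is interpreted as in Theorem~\ref{cor:gsoaw_kl}.

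\textbf{Step 3: conclude as in Theorem~\ref{theo:GSOAW}.} I would keep the same expansion $\Delta\mathcal L_t=\mathbb E[-2\gamma z_t\Delta_t+\gamma^2\Delta_t^2]$, the bound $|z_t|\le B_t$, and $L_Q$-Lipschitzness in the action. Together these give Eq.~\eqref{eq:gsoaw_final_bound}. Next, choose the step size small enough that
\[
2\gamma B_tL_Q\xi_t+\gamma^2L_Q^2\xi_t^2\le\epsilon/2 .
\]
Then use $w_i\to0$ as $\tau\to0^+$, together with dominated convergence (the domination $w_iu_i\le u_i$ holds and $u_i$ has finite first and second moments because $\omega$ is bounded), to make the remaining terms at most $\epsilon/2$ for all sufficiently small $\tau$.

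\textbf{Main obstacle.} The delicate step is $w_i\to0$. It requires $d_i=|Q(s_i,\hat a_i)-V(s_i)|>0$, or at least the relevant mass to sit on samples where $d_i>0$. Where $d_i=0$ the weight stays $1$. My first fix is to note that at such samples $Q(s_i,\hat a_i)=V(s_i)$, i.e.\ the generated action is in-distribution in value. I would then absorb their contribution into the $\xi_t$-type residual, which is controlled by the step size, rather than claim those weights vanish.

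A second issue is making sure the bounded weight $\omega$ does not interact with $w_i$ in a way that breaks the domination. Since $\omega\le\bar\omega$ and $w_i\le1$, the product is at most $\bar\omega$, so this should go through.
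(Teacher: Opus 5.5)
Your plan follows the paper's proof essentially step for step. It reduces fixed-covariance weighted MLE to a quadratic pull on the mean with coefficient $w_i\,\omega/\sigma^2$, asserts the displacement bound $\|\pi_{\theta_{t+1}}(s_i')-\pi_\beta(s_i')\|_2\le w_iu_i+\xi_t$, and then reruns the Lipschitz-to-loss chain of Theorem~\ref{theo:GSOAW}, using a small step for $\xi_t$ and dominated convergence in $\tau$. Two of these steps fail as written, in your version and in the paper's.

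The first problem is a missing term in the triangle inequality of your Step 2. The difference $\pi_{\theta_{t+1}}(s')-\pi_\beta(s')$ equals the pre-existing gap $\pi_{\theta_t}(s')-\pi_\beta(s')$ plus the one-step generated-action and real-action displacements. Your $\xi_t$ covers only the $\mathcal O(\eta)$ real-action drift, and the pre-existing gap is unaffected by both $\eta$ and $\tau$. The key inequality therefore needs an extra hypothesis, such as $\pi_{\theta_t}=\pi_\beta$ on next states. Without it, no choice of $\tau$ can push the loss difference below its generically nonzero value at $\pi_{\theta_t}$, up to $\mathcal O(\eta)$. The paper hides the same issue by taking $\xi_t=\max_s\|\pi_\theta(s)-\pi_\beta(s)\|_2$ and then asserting that this quantity vanishes with the step size.

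The second problem is that $w_i\to0$ as $\tau\to0^+$ is false for the weight in Eq.~\eqref{eq:ood_weight}. The denominator $\tau\sigma(s_i)+\varepsilon$ tends to $\varepsilon>0$, so $w_i\to e^{-d_i/\varepsilon}$, and dominated convergence gives $\mathbb E[w_iu_i]\to\mathbb E[e^{-d_i/\varepsilon}u_i]\neq 0$. You would need $\varepsilon\to0$ jointly with $\tau$, plus $d_i>0$ on the mass where $u_i>0$. That last condition is the obstacle you correctly flagged, and the paper's proof never addresses it.

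Your fix for the $d_i=0$ samples also exposes a structural problem. Your $u_i$ is proportional to $\eta$, so once $\eta$ is shrunk to control $\xi_t$, you get $w_iu_i\le u_i\to0$ regardless of $\tau$. The argument therefore proves a small-step, one-step perturbation bound under $\pi_{\theta_t}\approx\pi_\beta$, and $\tau$ plays no role in it. To make $\tau$ do real work, fix $\eta$ and show that the generated-action contribution vanishes as $(\tau,\varepsilon)\to0$. You must then accept an irreducible residual from the starting gap and the real-action drift, so the conclusion holds only for $\epsilon$ above that residual, not for every $\epsilon>0$.
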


\begin{proof}
For Gaussian policy with fixed covariance, $-\log\pi_\theta(m|s)$ is equivalent (up to constants) to
$\frac{1}{2\sigma^2}\|\mu_\theta-m\|_2^2$.
Thus weighted MLE induces a local quadratic pull toward supervision target $m$, scaled by bounded positive factor $\omega(s,m)$. Under one small actor step, the generated-action component is multiplied by the GSOAW weight $w_i$, while the real-action component remains unweighted around the behavior anchor.

Therefore, as in Theorem~\ref{theo:GSOAW}, there exist $u_i\ge0$ and residual $\xi_t\ge0$ such that
\begin{equation}
\|\pi_{\theta_{t+1}}(s_i')-\pi_\beta(s_i')\|_2 \le w_i u_i+\xi_t.
\end{equation}
Applying the same Lipschitz-to-loss argument (Eq.~\eqref{eq:gsoaw_delta_bound} and Eq.~\eqref{eq:gsoaw_loss_diff_abs}--Eq.~\eqref{eq:gsoaw_final_bound}) gives
\begin{equation}
\left|\mathcal{L}_t(Q_{\phi_t};\pi_{\theta_{t+1}})-\mathcal{L}_t(Q_{\phi_t};\pi_\beta)\right|
\le
2\gamma B_t L_Q\,\mathbb{E}[w_i u_i+\xi_t]
\;+\;
\gamma^2L_Q^2\,\mathbb{E}[(w_i u_i+\xi_t)^2].
\end{equation}
Since $w_i=\exp\!\left(-d_i/(\tau\sigma(s_i)+\varepsilon)\right)$, as $\tau\to0^+$ we have
$\mathbb E[w_i u_i]\to0$ and $\mathbb E[(w_i u_i)^2]\to0$ by dominated convergence under finite first/second moments. Choosing sufficiently small actor step (small $\xi_t$) and then sufficiently small $\tau$ yields the desired $\epsilon$-bound.
\end{proof}

\section{Pseudocode}
\label{pseudocode}
Algorithm~\ref{alg:par_gdaar_gsoaw} provides the complete pseudocode for Proximal Action Replacement (PAR). At each training step, we first update the critic network and the value function, including its standard deviation head $\sigma_\psi$ (lines 5--6). Then, for each transition in the mini-batch, we compute the Q-value gradient $g$ with respect to the current policy's action $a_\theta$ (line 9). We evaluate the alignment of the real action $a$ and the generated target action $\hat{a}$ with the Q-gradient using cosine similarity (lines 10--11). If the generated action exhibits a higher cosine similarity with the Q-gradient than the real action, we replace the behavior cloning (BC) target with the generated action $\hat{a}$ and apply the Gaussian-Shaped OOD Action Weighting (GSOAW) $w$ to penalize potential out-of-distribution actions (line 12). Otherwise, we retain the real action $a$ as the BC target with a weight of $1$ (line 14). Finally, the policy is updated by combining the standard RL objective (e.g., maximizing the Q-value) with the dynamically weighted BC loss (line 18).

\begin{algorithm}[t]
	\caption{Proximal Action Replacement (PAR)}
	\label{alg:par_gdaar_gsoaw}
\begin{algorithmic}[1]
    \STATE \textbf{Initialize} actor $\pi_\theta$, critic $Q_\phi$, target actor $\pi_{\theta'}$, value function $V_\psi$ with standard deviation head $\sigma_\psi$, dataset $\mathcal{D}$, temperature $\tau$, stabilizers $\varepsilon$, target-network Polyak coefficient $\alpha$, and base offline RL hyperparameters.
    \STATE Define cosine similarity $\mathrm{Cos}(g,u):={g^{\top}u}/({\|g\|_{2}\|u\|_{2}+\varepsilon})$ as in Sec.~3.2.1.
    \FOR{$t=1, \dots, T$}
        \STATE Sample mini-batch $\mathcal{B} \sim \mathcal{D}$.
        \STATE Update $\phi$ with the standard critic loss on transitions in $\mathcal{B}$. \hfill \textit{\textcolor{gray}{Critic Training}}
        \STATE Update $\psi$ for $V_\psi$ (e.g., TD on $\mathcal{D}$) and train $\sigma_\psi$ with $\mathcal{L}_\sigma$ in Eq.~\eqref{eq:gnll_sigma} on $\mathcal{B}$.
        \FOR{each transition $(s,a,r,s')\in\mathcal{B}$}
            \STATE $\hat{a}\leftarrow\pi_{\theta'}(s)$,\quad $a_\theta\leftarrow\pi_\theta(s)$.
            \STATE $g\leftarrow \nabla_{a}Q_\phi(s,a)\big|_{a=a_\theta}$.
            \STATE $v\leftarrow a-a_\theta$,\quad $v'\leftarrow \hat{a}-a_\theta$.
            \IF{$\mathrm{Cos}(g,v')>\mathrm{Cos}(g,v)$}
                \STATE $a_{\mathrm{bc}}\leftarrow \hat{a}$;\quad $w\leftarrow \exp\!\big(-|Q_\phi(s,\hat{a})-V_\psi(s)|/(\tau\,\sigma_\psi(s)+\varepsilon)\big)$.
            \ELSE
                \STATE $a_{\mathrm{bc}}\leftarrow a$;\quad $w\leftarrow 1$.
            \ENDIF
            \STATE The weighted BC loss $\mathcal{L}_{\mathrm{bc}}$ is calculated using $a_{bc}$ as action labels and $w$ as weights.
        \ENDFOR
        \STATE Update $\theta$ with actor objective (e.g., maximize Q value) together with $\mathcal{L}_{\mathrm{bc}}$. \hfill \textit{\textcolor{gray}{Policy Training}}
        \STATE $\theta'\leftarrow \alpha\theta'+(1-\alpha)\theta$.
    \ENDFOR
\end{algorithmic}
\end{algorithm}

\section{Sensitivity Analysis of Temperature $\tau$}
\label{Analysis_of_T}
Table~\ref{temperature_sensitivity} reports the sensitivity of PAR to the temperature $\tau$ in GSOAW (Eq.~\eqref{eq:ood_weight}) across seven representative datasets, using TD3+BC as the backbone. The results demonstrate that PAR is robustly insensitive to $\tau$ across a wide range. On AntMaze and Kitchen, the normalized score varies by less than $0.5$ and $1.0$ points, respectively, over the entire $\tau\in[0.5,2.0]$ range. On MuJoCo, performance is also stable within the practical tuning range: Hopper-M and Hopper-M-R maintain near-optimal scores across $\tau\in[0.5,1.0]$ and $\tau\in[0.5,2.0]$, respectively; only Hopper-M-E, whose narrow expert action distribution amplifies OOD risk, shows clear sensitivity, degrading toward the No-GSOAW baseline (Table~\ref{tab:necessity}) when $\tau$ is excessively large. Overall, these results demonstrate that PAR is robustly insensitive to $\tau$ across a wide range of datasets and domains, indicating that $\tau$ can be easily configured in practice without extensive per-task tuning.

\begin{table*}[t]
	\centering
	\caption{Sensitivity of PAR to the temperature parameter $\tau$.}
	\setlength\tabcolsep{3.0pt}
	\footnotesize
	\renewcommand{\arraystretch}{1}
	\begin{tabular}{lccccccc}
		\toprule
		\textbf{Dataset} & $\tau$ = 0.1 & $\tau$ = 0.3 & $\tau$ = 0.5 & $\tau$ = 1.0 & $\tau$ = 1.5 & $\tau$ = 2.0 & Baseline (TD3+BC) \\
		\midrule
		Hopper-M-E &103.2$_{\pm 5.2}$&\textbf{111.6$_{\pm 1.3}$}&108.0$_{\pm 2.8}$&99.5$_{\pm 4.8}$&89.0$_{\pm 6.2}$&81.5$_{\pm 6.8}$&98.0\\
		Hopper-M &66.2$_{\pm 1.5}$&89.5$_{\pm 5.5}$&\textbf{101.8$_{\pm 2.6}$}&100.2$_{\pm 3.5}$&82.0$_{\pm 6.5}$&70.5$_{\pm 8.0}$&59.3\\
		Hopper-M-R &77.0$_{\pm 12.5}$&92.5$_{\pm 8.8}$&99.5$_{\pm 7.5}$&\textbf{101.4$_{\pm 1.3}$}&100.8$_{\pm 7.5}$&99.0$_{\pm 8.0}$&60.9\\
		AntMaze-M-P &1.4$_{\pm 0.9}$&2.1$_{\pm 1.3}$&\textbf{2.7$_{\pm 1.8}$}&2.6$_{\pm 1.7}$&2.5$_{\pm 1.6}$&2.3$_{\pm 1.7}$&0.0\\
		AntMaze-M-D &1.6$_{\pm 0.9}$&2.6$_{\pm 1.3}$&\textbf{3.4$_{\pm 0.9}$}&3.3$_{\pm 1.0}$&3.2$_{\pm 1.3}$&3.0$_{\pm 1.6}$&0.8\\
		Kitchen-M &8.8$_{\pm 5.5}$&15.5$_{\pm 4.8}$&\textbf{19.4$_{\pm 3.9}$}&19.2$_{\pm 4.0}$&19.0$_{\pm 4.8}$&18.5$_{\pm 5.2}$&0.0\\
		Kitchen-P &9.2$_{\pm 6.0}$&16.5$_{\pm 5.5}$&\textbf{20.6$_{\pm 4.5}$}&20.3$_{\pm 4.5}$&20.0$_{\pm 5.2}$&19.5$_{\pm 5.8}$&0.0\\
		\bottomrule
	\end{tabular}
	\label{temperature_sensitivity}
\end{table*}

\section{Computational Complexity Analysis}
\label{Complexity}
The computational complexity of the proposed Proximal Action Replacement (PAR) algorithm is highly comparable to standard offline actor-critic methods. PAR introduces three main additional operations: 1) It requires training a state-value network $V_\psi$ with a standard deviation head $\sigma_\psi$. While this introduces an additional forward and backward pass, this overhead is entirely acceptable and standard in the field, as many well-known and highly efficient algorithms (e.g., Implicit Q-Learning (IQL) \citep{IQL} and Soft Actor-Critic (SAC) \citep{SAC}) inherently maintain both action-value and state-value functions; 2) Computing the action gradient $g = \nabla_a Q_\phi(s, a)$ requires one backward pass through the critic network with respect to the input action. Note that this gradient is often already computed and can be reused during the deterministic policy gradient update; 3) Calculating the cosine similarities and the GSOAW weights involves only basic element-wise vector operations. These operations are highly parallelizable on modern GPUs and introduce virtually no noticeable computational bottleneck.

\textbf{Runtime Comparison:} To empirically validate the efficiency of PAR, we report the wall-clock training time of various baselines with and without PAR in Table~\ref{Runtime_comparison}. The experiments are conducted on a single NVIDIA Tesla A800 GPU with a fixed number of training steps. As shown, PAR introduces only a marginal computational overhead. For TD3+BC, which originally lacks a state-value network, PAR increases the runtime by merely $6\%$--$8\%$. For IQL, which inherently maintains a state-value function, the additional overhead drops to approximately $3\%$. For more computationally intensive diffusion-based methods like EDP and SSAR, the relative overhead of PAR is further diluted to around $2\%$. These results confirm that PAR is highly efficient and introduces virtually no noticeable bottleneck in practice.

\begin{table*}[t]
	\centering
	\caption{Runtime comparison with and without PAR. The unit is hours.}
	\setlength\tabcolsep{5.0pt}
	\renewcommand{\arraystretch}{1}
	\footnotesize
	\begin{tabular}{l|cc|cc|cc|cc}
		\toprule
		\textbf{Dataset} & \makecell{\textbf{TD3}\\\textbf{+BC}} & \makecell{\textbf{TD3+BC}\\ \textbf{+PAR}} & \textbf{IQL} & \makecell{\textbf{IQL}\\ \textbf{+PAR}} & \textbf{EDP} & \makecell{\textbf{EDP}\\ \textbf{+PAR}} & \textbf{SSAR} & \makecell{\textbf{SSAR}\\ \textbf{+PAR}}\\
		\midrule
		Hopper-M-E &2.43&2.58&5.14&5.29&20.38&20.87&25.11&25.61 \\
		Hopper-M &2.40&2.56&5.13&5.31&19.93&20.41&24.97&25.47 \\
		Hopper-M-R &2.71&2.92&4.77&4.93&17.81&18.26&23.54&24.03 \\
		\bottomrule
	\end{tabular}
	\label{Runtime_comparison}
\end{table*}


\newpage
\input{checklist.tex}

\end{document}

%% file: checklist.tex
\section*{NeurIPS Paper Checklist}

\begin{enumerate}

\item {\bf Claims}
    \item[] Question: Do the main claims made in the abstract and introduction accurately reflect the paper's contributions and scope?
    \item[] Answer: \answerYes{}
    \item[] Justification: The abstract and Section~\ref{Introduction} state the main contributions and scope---the BC regularization analysis, PAR (GDAAR and GSOAW), and offline RL experiments---in line with what the paper delivers.
    \item[] Guidelines:
    \begin{itemize}
        \item The answer \answerNA{} means that the abstract and introduction do not include the claims made in the paper.
        \item The abstract and/or introduction should clearly state the claims made, including the contributions made in the paper and important assumptions and limitations. A \answerNo{} or \answerNA{} answer to this question will not be perceived well by the reviewers. 
        \item The claims made should match theoretical and experimental results, and reflect how much the results can be expected to generalize to other settings. 
        \item It is fine to include aspirational goals as motivation as long as it is clear that these goals are not attained by the paper. 
    \end{itemize}

\item {\bf Limitations}
    \item[] Question: Does the paper discuss the limitations of the work performed by the authors?
    \item[] Answer: \answerYes{}
    \item[] Justification: Discussion section explicitly states methodological limitations (dependence on continuous action spaces for GDAAR-style gradients, single-step transition scope versus long-horizon structure) and future-work directions. 
    \item[] Guidelines:
    \begin{itemize}
        \item The answer \answerNA{} means that the paper has no limitation while the answer \answerNo{} means that the paper has limitations, but those are not discussed in the paper. 
        \item The authors are encouraged to create a separate ``Limitations'' section in their paper.
        \item The paper should point out any strong assumptions and how robust the results are to violations of these assumptions (e.g., independence assumptions, noiseless settings, model well-specification, asymptotic approximations only holding locally). The authors should reflect on how these assumptions might be violated in practice and what the implications would be.
        \item The authors should reflect on the scope of the claims made, e.g., if the approach was only tested on a few datasets or with a few runs. In general, empirical results often depend on implicit assumptions, which should be articulated.
        \item The authors should reflect on the factors that influence the performance of the approach. For example, a facial recognition algorithm may perform poorly when image resolution is low or images are taken in low lighting. Or a speech-to-text system might not be used reliably to provide closed captions for online lectures because it fails to handle technical jargon.
        \item The authors should discuss the computational efficiency of the proposed algorithms and how they scale with dataset size.
        \item If applicable, the authors should discuss possible limitations of their approach to address problems of privacy and fairness.
        \item While the authors might fear that complete honesty about limitations might be used by reviewers as grounds for rejection, a worse outcome might be that reviewers discover limitations that aren't acknowledged in the paper. The authors should use their best judgment and recognize that individual actions in favor of transparency play an important role in developing norms that preserve the integrity of the community. Reviewers will be specifically instructed to not penalize honesty concerning limitations.
    \end{itemize}

\item {\bf Theory assumptions and proofs}
    \item[] Question: For each theoretical result, does the paper provide the full set of assumptions and a complete (and correct) proof?
    \item[] Answer: \answerYes{}
    \item[] Justification: Section~\ref{Method} embeds assumptions with each theoretical result (Theorems~\ref{prop:suboptimality}--\ref{theo:GSOAW}); appendix sections reachable from cross-references provide complete proofs plus KL/MLE extensions and empirical toy validations.
    \item[] Guidelines:
    \begin{itemize}
        \item The answer \answerNA{} means that the paper does not include theoretical results. 
        \item All the theorems, formulas, and proofs in the paper should be numbered and cross-referenced.
        \item All assumptions should be clearly stated or referenced in the statement of any theorems.
        \item The proofs can either appear in the main paper or the supplemental material, but if they appear in the supplemental material, the authors are encouraged to provide a short proof sketch to provide intuition. 
        \item Inversely, any informal proof provided in the core of the paper should be complemented by formal proofs provided in appendix or supplemental material.
        \item Theorems and Lemmas that the proof relies upon should be properly referenced. 
    \end{itemize}

    \item {\bf Experimental result reproducibility}
    \item[] Question: Does the paper fully disclose all the information needed to reproduce the main experimental results of the paper to the extent that it affects the main claims and/or conclusions of the paper (regardless of whether the code and data are provided or not)?
    \item[] Answer: \answerYes{}
    \item[] Justification: The Experimental Setting describes D4RL tasks, baseline choices, PAR hyperparameters (e.g., GSOAW temperature), optimizer/batch conventions, seeds, GPU type, pseudocode for PAR (Appendix), and anonymized implementation URL; benchmarks are standard offline RL suites from the cited D4RL paper.
    \item[] Guidelines:
    \begin{itemize}
        \item The answer \answerNA{} means that the paper does not include experiments.
        \item If the paper includes experiments, a \answerNo{} answer to this question will not be perceived well by the reviewers: Making the paper reproducible is important, regardless of whether the code and data are provided or not.
        \item If the contribution is a dataset and\slash or model, the authors should describe the steps taken to make their results reproducible or verifiable. 
        \item Depending on the contribution, reproducibility can be accomplished in various ways. For example, if the contribution is a novel architecture, describing the architecture fully might suffice, or if the contribution is a specific model and empirical evaluation, it may be necessary to either make it possible for others to replicate the model with the same dataset, or provide access to the model. In general. releasing code and data is often one good way to accomplish this, but reproducibility can also be provided via detailed instructions for how to replicate the results, access to a hosted model (e.g., in the case of a large language model), releasing of a model checkpoint, or other means that are appropriate to the research performed.
        \item While NeurIPS does not require releasing code, the conference does require all submissions to provide some reasonable avenue for reproducibility, which may depend on the nature of the contribution. For example
        \begin{enumerate}
            \item If the contribution is primarily a new algorithm, the paper should make it clear how to reproduce that algorithm.
            \item If the contribution is primarily a new model architecture, the paper should describe the architecture clearly and fully.
            \item If the contribution is a new model (e.g., a large language model), then there should either be a way to access this model for reproducing the results or a way to reproduce the model (e.g., with an open-source dataset or instructions for how to construct the dataset).
            \item We recognize that reproducibility may be tricky in some cases, in which case authors are welcome to describe the particular way they provide for reproducibility. In the case of closed-source models, it may be that access to the model is limited in some way (e.g., to registered users), but it should be possible for other researchers to have some path to reproducing or verifying the results.
        \end{enumerate}
    \end{itemize}

\item {\bf Open access to data and code}
    \item[] Question: Does the paper provide open access to the data and code, with sufficient instructions to faithfully reproduce the main experimental results, as described in supplemental material?
    \item[] Answer: \answerYes{}
    \item[] Justification: D4RL follows the openly posted artifact described by Fu et al.\ (referenced in the bibliography); supplemental materials provide anonymized PAR code plus appendix pseudocode matching the hyperparameters listed in Section~\ref{Results}.
    \item[] Guidelines:
    \begin{itemize}
        \item The answer \answerNA{} means that paper does not include experiments requiring code.
        \item Please see the NeurIPS code and data submission guidelines (\url{https://neurips.cc/public/guides/CodeSubmissionPolicy}) for more details.
        \item While we encourage the release of code and data, we understand that this might not be possible, so \answerNo{} is an acceptable answer. Papers cannot be rejected simply for not including code, unless this is central to the contribution (e.g., for a new open-source benchmark).
        \item The instructions should contain the exact command and environment needed to run to reproduce the results. See the NeurIPS code and data submission guidelines (\url{https://neurips.cc/public/guides/CodeSubmissionPolicy}) for more details.
        \item The authors should provide instructions on data access and preparation, including how to access the raw data, preprocessed data, intermediate data, and generated data, etc.
        \item The authors should provide scripts to reproduce all experimental results for the new proposed method and baselines. If only a subset of experiments are reproducible, they should state which ones are omitted from the script and why.
        \item At submission time, to preserve anonymity, the authors should release anonymized versions (if applicable).
        \item Providing as much information as possible in supplemental material (appended to the paper) is recommended, but including URLs to data and code is permitted.
    \end{itemize}

\item {\bf Experimental setting/details}
    \item[] Question: Does the paper specify all the training and test details (e.g., data splits, hyperparameters, how they were chosen, type of optimizer) necessary to understand the results?
    \item[] Answer: \answerYes{}
    \item[] Justification: The ``Experimental Setting'' portion of Section~\ref{Results} lists domains, datasets, baseline algorithms, PAR-specific temperatures, batch size, random-seed repeats, framework (PyTorch), adherence to baseline papers' defaults, optimizer notes for synthetic experiments (Appendix), and standard D4RL evaluation splits.
    \item[] Guidelines:
    \begin{itemize}
        \item The answer \answerNA{} means that the paper does not include experiments.
        \item The experimental setting should be presented in the core of the paper to a level of detail that is necessary to appreciate the results and make sense of them.
        \item The full details can be provided either with the code, in appendix, or as supplemental material.
    \end{itemize}

\item {\bf Experiment statistical significance}
    \item[] Question: Does the paper report error bars suitably and correctly defined or other appropriate information about the statistical significance of the experiments?
    \item[] Answer: \answerYes{}
    \item[] Justification: Major tables annotate entries with $\pm$ subscripts after averaging over five seeds (explicitly noted in Section~\ref{Results}), capturing initialization or sampling variability for the repeated training runs underpinning headline gains.
    \item[] Guidelines:
    \begin{itemize}
        \item The answer \answerNA{} means that the paper does not include experiments.
        \item The authors should answer \answerYes{} if the results are accompanied by error bars, confidence intervals, or statistical significance tests, at least for the experiments that support the main claims of the paper.
        \item The factors of variability that the error bars are capturing should be clearly stated (for example, train/test split, initialization, random drawing of some parameter, or overall run with given experimental conditions).
        \item The method for calculating the error bars should be explained (closed form formula, call to a library function, bootstrap, etc.)
        \item The assumptions made should be given (e.g., Normally distributed errors).
        \item It should be clear whether the error bar is the standard deviation or the standard error of the mean.
        \item It is OK to report 1-sigma error bars, but one should state it. The authors should preferably report a 2-sigma error bar than state that they have a 96\% CI, if the hypothesis of Normality of errors is not verified.
        \item For asymmetric distributions, the authors should be careful not to show in tables or figures symmetric error bars that would yield results that are out of range (e.g., negative error rates).
        \item If error bars are reported in tables or plots, the authors should explain in the text how they were calculated and reference the corresponding figures or tables in the text.
    \end{itemize}

\item {\bf Experiments compute resources}
    \item[] Question: For each experiment, does the paper provide sufficient information on the computer resources (type of compute workers, memory, time of execution) needed to reproduce the experiments?
    \item[] Answer: \answerYes{}
    \item[] Justification: Section~\ref{Results} states experiments run on NVIDIA Tesla~A800 GPUs; Appendix~\ref{Complexity} reports wall-clock hours for representative curricula and summarizes PAR-induced overhead atop standard training budgets inherited from cited baselines.
    \item[] Guidelines:
    \begin{itemize}
        \item The answer \answerNA{} means that the paper does not include experiments.
        \item The paper should indicate the type of compute workers CPU or GPU, internal cluster, or cloud provider, including relevant memory and storage.
        \item The paper should provide the amount of compute required for each of the individual experimental runs as well as estimate the total compute. 
        \item The paper should disclose whether the full research project required more compute than the experiments reported in the paper (e.g., preliminary or failed experiments that didn't make it into the paper). 
    \end{itemize}
    
\item {\bf Code of ethics}
    \item[] Question: Does the research conducted in the paper conform, in every respect, with the NeurIPS Code of Ethics \url{https://neurips.cc/public/EthicsGuidelines}?
    \item[] Answer: \answerYes{}
    \item[] Justification: Study uses simulations and standard benchmark datasets cited in bibliography without human-participant interventions; anonymity requirements are handled via templated anonymized repo links prescribed for submission.
    \item[] Guidelines:
    \begin{itemize}
        \item The answer \answerNA{} means that the authors have not reviewed the NeurIPS Code of Ethics.
        \item If the authors answer \answerNo, they should explain the special circumstances that require a deviation from the Code of Ethics.
        \item The authors should make sure to preserve anonymity (e.g., if there is a special consideration due to laws or regulations in their jurisdiction).
    \end{itemize}

\item {\bf Broader impacts}
    \item[] Question: Does the paper discuss both potential positive societal impacts and negative societal impacts of the work performed?
    \item[] Answer: \answerNA{}
    \item[] Justification: The contribution is foundational offline RL algorithm research evaluated on standard simulators and D4RL; it is not tied to a concrete deployment or application with direct, specific societal impact of the kind this item targets.
    \item[] Guidelines:
    \begin{itemize}
        \item The answer \answerNA{} means that there is no societal impact of the work performed.
        \item If the authors answer \answerNA{} or \answerNo, they should explain why their work has no societal impact or why the paper does not address societal impact.
        \item Examples of negative societal impacts include potential malicious or unintended uses (e.g., disinformation, generating fake profiles, surveillance), fairness considerations (e.g., deployment of technologies that could make decisions that unfairly impact specific groups), privacy considerations, and security considerations.
        \item The conference expects that many papers will be foundational research and not tied to particular applications, let alone deployments. However, if there is a direct path to any negative applications, the authors should point it out. For example, it is legitimate to point out that an improvement in the quality of generative models could be used to generate Deepfakes for disinformation. On the other hand, it is not needed to point out that a generic algorithm for optimizing neural networks could enable people to train models that generate Deepfakes faster.
        \item The authors should consider possible harms that could arise when the technology is being used as intended and functioning correctly, harms that could arise when the technology is being used as intended but gives incorrect results, and harms following from (intentional or unintentional) misuse of the technology.
        \item If there are negative societal impacts, the authors could also discuss possible mitigation strategies (e.g., gated release of models, providing defenses in addition to attacks, mechanisms for monitoring misuse, mechanisms to monitor how a system learns from feedback over time, improving the efficiency and accessibility of ML).
    \end{itemize}
    
\item {\bf Safeguards}
    \item[] Question: Does the paper describe safeguards that have been put in place for responsible release of data or models that have a high risk for misuse (e.g., pre-trained language models, image generators, or scraped datasets)?
    \item[] Answer: \answerNA{}
    \item[] Justification: The contribution is an offline policy-learning module trained on curated simulators/existing D4RL assets; authors do not release large generative backbone models or fresh scraped datasets that fall into NeurIPS' high-risk release category.
    \item[] Guidelines:
    \begin{itemize}
        \item The answer \answerNA{} means that the paper poses no such risks.
        \item Released models that have a high risk for misuse or dual-use should be released with necessary safeguards to allow for controlled use of the model, for example by requiring that users adhere to usage guidelines or restrictions to access the model or implementing safety filters. 
        \item Datasets that have been scraped from the Internet could pose safety risks. The authors should describe how they avoided releasing unsafe images.
        \item We recognize that providing effective safeguards is challenging, and many papers do not require this, but we encourage authors to take this into account and make a best faith effort.
    \end{itemize}

\item {\bf Licenses for existing assets}
    \item[] Question: Are the creators or original owners of assets (e.g., code, data, models), used in the paper, properly credited and are the license and terms of use explicitly mentioned and properly respected?
    \item[] Answer: \answerYes{}
    \item[] Justification: D4RL, baseline algorithms, and software libraries (e.g., PyTorch) are credited to their original publications and repositories; we use them under the terms of those publicly released assets as established by the cited sources.
    \item[] Guidelines:
    \begin{itemize}
        \item The answer \answerNA{} means that the paper does not use existing assets.
        \item The authors should cite the original paper that produced the code package or dataset.
        \item The authors should state which version of the asset is used and, if possible, include a URL.
        \item The name of the license (e.g., CC-BY 4.0) should be included for each asset.
        \item For scraped data from a particular source (e.g., website), the copyright and terms of service of that source should be provided.
        \item If assets are released, the license, copyright information, and terms of use in the package should be provided. For popular datasets, \url{paperswithcode.com/datasets} has curated licenses for some datasets. Their licensing guide can help determine the license of a dataset.
        \item For existing datasets that are re-packaged, both the original license and the license of the derived asset (if it has changed) should be provided.
        \item If this information is not available online, the authors are encouraged to reach out to the asset's creators.
    \end{itemize}

\item {\bf New assets}
    \item[] Question: Are new assets introduced in the paper well documented and is the documentation provided alongside the assets?
    \item[] Answer: \answerYes{}
    \item[] Justification: PAR is instantiated via appendix pseudocode, detailed training hyperparameters, runtime/complexity analysis, anonymized supplementary code link, and no human-derived personal data dependence.
    \item[] Guidelines:
    \begin{itemize}
        \item The answer \answerNA{} means that the paper does not release new assets.
        \item Researchers should communicate the details of the dataset\slash code\slash model as part of their submissions via structured templates. This includes details about training, license, limitations, etc. 
        \item The paper should discuss whether and how consent was obtained from people whose asset is used.
        \item At submission time, remember to anonymize your assets (if applicable). You can either create an anonymized URL or include an anonymized zip file.
    \end{itemize}

\item {\bf Crowdsourcing and research with human subjects}
    \item[] Question: For crowdsourcing experiments and research with human subjects, does the paper include the full text of instructions given to participants and screenshots, if applicable, as well as details about compensation (if any)? 
    \item[] Answer: \answerNA{}
    \item[] Justification: All evaluations rely on scripted simulators/D4RL benchmarks without crowdsourced labels or recruited participants.
    \item[] Guidelines:
    \begin{itemize}
        \item The answer \answerNA{} means that the paper does not involve crowdsourcing nor research with human subjects.
        \item Including this information in the supplemental material is fine, but if the main contribution of the paper involves human subjects, then as much detail as possible should be included in the main paper. 
        \item According to the NeurIPS Code of Ethics, workers involved in data collection, curation, or other labor should be paid at least the minimum wage in the country of the data collector. 
    \end{itemize}

\item {\bf Institutional review board (IRB) approvals or equivalent for research with human subjects}
    \item[] Question: Does the paper describe potential risks incurred by study participants, whether such risks were disclosed to the subjects, and whether Institutional Review Board (IRB) approvals (or an equivalent approval/review based on the requirements of your country or institution) were obtained?
    \item[] Answer: \answerNA{}
    \item[] Justification: There are no participants or biometric/crowdsourced acquisitions; IRB disclosures are unnecessary for the simulator-only empirical protocol.
    \item[] Guidelines:
    \begin{itemize}
        \item The answer \answerNA{} means that the paper does not involve crowdsourcing nor research with human subjects.
        \item Depending on the country in which research is conducted, IRB approval (or equivalent) may be required for any human subjects research. If you obtained IRB approval, you should clearly state this in the paper. 
        \item We recognize that the procedures for this may vary significantly between institutions and locations, and we expect authors to adhere to the NeurIPS Code of Ethics and the guidelines for their institution. 
        \item For initial submissions, do not include any information that would break anonymity (if applicable), such as the institution conducting the review.
    \end{itemize}

\item {\bf Declaration of LLM usage}
    \item[] Question: Does the paper describe the usage of LLMs if it is an important, original, or non-standard component of the core methods in this research? Note that if the LLM is used only for writing, editing, or formatting purposes and does \emph{not} impact the core methodology, scientific rigor, or originality of the research, declaration is not required.
    \item[] Answer: \answerNA{}
    \item[] Justification: LLMs are neither part of PAR nor invoked as experimental learners; checklist declaration is unnecessary per NeurIPS guidance unless authoring assistance must be volunteered elsewhere.
    \item[] Guidelines:
    \begin{itemize}
        \item The answer \answerNA{} means that the core method development in this research does not involve LLMs as any important, original, or non-standard components.
        \item Please refer to our LLM policy in the NeurIPS handbook for what should or should not be described.
    \end{itemize}

\end{enumerate}